\documentclass[journal,onecolumn]{IEEEtran}
%\linespread{2}
%\setlength{\voffset}{-0.05in}
%\setlength{\headsep}{25pt}
\usepackage[english]{babel}
\usepackage{algorithm,algorithmicx}
\usepackage{tcolorbox}
\usepackage{tikz}

\usepackage{framed}
\usepackage{algpseudocode}
\usepackage{amsthm,nccmath}
\usepackage{amsmath,bm}
\usepackage{bigints}
\usepackage{amssymb}
\usepackage{enumitem}
\usepackage{graphicx}
\usepackage{graphics}
\usepackage{tabularx}
\usepackage{multirow}
\usepackage[font=small,labelfont=bf]{caption}
% \usepackage{subfigure}
% \captionsetup[sub]{font=scriptsize}
\usepackage[font=small]{subcaption}
\usepackage{float}
% \ifCLASSINFOpdf
% \usepackage[pdftex]{graphicx}
% % \graphicspath{{../pdf/}{../jpeg/}}
% \DeclareGraphicsExtensions{.pdf,.jpeg,.png}
% \else
% \usepackage[dvips]{graphicx}
% \graphicspath{{../eps/}}
% \DeclareGraphicsExtensions{.eps}
% \fi
\usepackage{epstopdf}
\usepackage{footnote}
\makesavenoteenv{tabular}
\makesavenoteenv{table}
\usepackage{bbm}
\usepackage{amsthm}
\usepackage{stmaryrd}
\usepackage{amsmath,bm}
\usepackage{amssymb}
\usepackage{mathtools, cuted}
\usepackage{kantlipsum,setspace}
\usepackage[normalem]{ulem}

%\usepackage{caption}
%\captionsetup[table]{font={stretch=1.0}}
%\captionsetup[figure]{font={stretch=1.0}}
\usepackage{dblfloatfix}
\setcounter{secnumdepth}{5}

\theoremstyle{plain}
\newtheorem{theorem}{Theorem}[section]
\newtheorem{lemma}[theorem]{Lemma}
\newtheorem*{lemma*}{Lemma}
\newtheorem{prop}[theorem]{Proposition}

\newtheorem*{cor*}{Corollary}

\theoremstyle{definition}

\newtheorem*{defn*}{Definition}

\theoremstyle{remark}

\newtheorem{assump}{Assumption}
\newtheorem*{assump*}{Assumption}

\DeclareMathOperator*{\argmin}{arg\,min}

\newcommand{\mbb}{\mathbb}
\newcommand{\mbf}{\mathbf}

\newcommand{\br}{\mathcal B^{\mathrm{r}}}
\newcommand{\brt}{\mathcal B_t^{\mathrm{r}}}
\newcommand{\bc}{\mathcal B^{\mathrm{c}}}
\newcommand{\bct}{\mathcal B_t^{\mathrm{c}}}
\newcommand{\muR}{\mu_{\mathrm{r}}}
\newcommand{\muC}{\mu_{\mathrm{c}}}
\newcommand{\muCi}{\mu_{\mathrm{c},i}}
\newcommand{\nR}{n_{\mathrm{r}}}
\newcommand{\nC}{n_{\mathrm{c}}}
\newcommand{\dnr}{d_{\mathrm{nr}}}

\newcommand{\lp}{\left(}
\newcommand{\rp}{\right)}

\newcommand{\lnr}{\left\|}
\newcommand{\rnr}{\right\|}

\DeclareMathOperator*{\minimize}{\text{minimize}}

\usepackage{xcolor}

\newcommand{\mycomment}[1]{}

\usepackage{adjustbox}
\usepackage{wrapfig}
\usepackage{booktabs}
\usepackage{multirow,mathtools} 
\usepackage{threeparttable}

\usepackage[switch]{lineno}  %

\usepackage{tikz}
\usetikzlibrary{calc,trees,positioning,arrows,chains,shapes.geometric,%
	decorations.pathreplacing,decorations.pathmorphing,shapes,%
	matrix,shapes.symbols}

\tikzstyle{startstop} = [rectangle, draw, rounded corners, align=center, minimum width=3cm, minimum height=1cm,text centered]
\tikzstyle{decision} = [diamond, draw, fill=blue!20, 
text width=4.5em, text badly centered, node distance=3cm, inner sep=0pt]
\tikzstyle{block} = [rectangle, draw, fill=blue!10, align=center, rounded corners, minimum width=3cm, minimum height=1cm]
\tikzstyle{blockcast} = [rectangle, draw, fill=red!25, align=center, rounded corners, minimum width=3cm, minimum height=0.45cm]
\tikzstyle{line} = [draw, -latex']
\tikzstyle{cloud} = [draw, ellipse,fill=red!20, node distance=3cm,
minimum height=2em]

\title{Zeroth-Order Hybrid Gradient Descent: Towards A Principled Black-Box Optimization Framework}

%\author{Pranay Sharma$^{*}$, Augustin-Alexandru Saucan$^{*}$, Donald J. Bucci Jr.$^{\dagger}$ and Pramod K. Varshney$^{*}$\\
%$^{*}$Department of Electrical Engineering and Computer Science, Syracuse University, Syracuse, New York-13244\\ $^{\dagger}$D. J. Bucci is with Lockheed Martin Advanced Technology Labs, Cherry Hill, New Jersey-08002\\
%psharm04@syr.edu, asaucan@syr.edu,  donald.j.bucci.jr@lmco.com, varshney@syr.edu \vspace{-0.7cm}}
% \author{Pranay Sharma\thanks{P. Sharma, A. A. Saucan and Pramod K. Varshney are with the Department of Electrical Engineering and Computer Science, Syracuse University, Syracuse, New York-13244 (e-mail: {psharm04, asaucan, varshney}@syr.edu)}, Augustin-Alexandru Saucan, Donald J. Bucci Jr.\thanks{D. J. Bucci is with Lockheed Martin Advanced Technology Labs, Cherry Hill, New Jersey-08002 (e-mail: donald.j.bucci.jr@lmco.com)} and Pramod K. Varshney}

\author{
	\IEEEauthorblockN{Pranay Sharma, Kaidi Xu, Sijia Liu, \\ 
	Pin-Yu Chen, Xue Lin and Pramod K. Varshney.}
%	 \IEEEmembership{Life Fellow,~IEEE}}\\
	\thanks{P. Sharma and P. K. Varshney are with the Department of Electrical Engineering and Computer Science, Syracuse University, Syracuse, NY-13244 (email: {psharm04, varshney}@syr.edu). K. Xu and X. Lin are with the Department of Computer and Electrical Engineering, Northeastern University, Boston, MA-02115 (email: {xu.kaid, xue.lin}@northeastern.edu). S. Liu and P. Chen are with MIT-IBM Watson AI Lab, IBM Research, Cambridge, MA-02142 (e-mail: {sijia.liu, pin-yu.chen}@ibm.com).}
}

\begin{document}
\maketitle
% As a general rule, do not put math, special symbols or citations
% in the abstract
\begin{abstract}
    In this work, we focus on the study of stochastic zeroth-order (ZO) optimization which does not require first-order gradient information and uses only function evaluations. The problem of ZO optimization has emerged in many recent machine learning applications, where the gradient of the objective function is either unavailable or difficult to compute. In such cases, we can approximate the full gradients or stochastic gradients through function value based gradient estimates. Here, we propose a novel hybrid gradient estimator (HGE), which takes advantage of the query-efficiency of random gradient estimates as well as the variance-reduction of coordinate-wise gradient estimates. We show that with a graceful design in coordinate importance sampling, the proposed HGE-based ZO optimization method is efficient both in terms of iteration complexity as well as function query cost. We provide a thorough theoretical analysis of the convergence of our proposed method for non-convex, convex, and strongly-convex optimization. We show that the convergence rate that we derive generalizes the results for some prominent existing methods in the nonconvex case, and matches the optimal result in the convex case. We also corroborate the theory with a real-world black-box attack generation application to demonstrate the empirical advantage of our method over state-of-the-art ZO optimization approaches.
\end{abstract}

%\keywords{multi-target tracking, self-localization, belief propagation, particle filters.}

\IEEEpeerreviewmaketitle

\section{Introduction}
\IEEEPARstart{D}{erivative}-free optimization (DFO) methods \cite{conn09introduction, larson19dfo} have become increasingly popular in recent years, owing to the advent of several machine learning applications where the analytical expressions of the objective functions are either expensive or infeasible to obtain. Some examples of such applications are black-box adversarial example generation in deep neural networks (DNNs), reinforcement learning, and control and management of time-varying networks with limited computational resources \cite{liu2020primer}.

Zeroth-order (ZO) methods  form a special class of DFO methods which can be seen as gradient-less versions of first-order (gradient-based) optimization methods. 
ZO optimization involves approximating the full/stochastic gradient of the function using only the function values, and using this gradient estimator in the first-order (FO) optimization framework. 
Advantages of ZO-methods, over conventional DFO methods like direct-search based methods \cite{bortz98simplex}, and trust-region methods \cite{conn2000trust}, are  the ease of implementation and the convergence properties of these methods, owing to their theoretical \textit{closeness} to FO methods. 
ZO methods in the literature often have convergence rates comparable to FO methods, with an additional small-degree polynomial in the problem dimension $d$ \cite{duchi15optimal_tit, nesterov17grad_free}.
% \PS{I haven't given a concrete application of ZO.}

However, ZO algorithms often suffer from the high variance of   gradient estimates. 
The existing estimators involve choosing between  saving on the function query cost \cite{Ghadimi_Siam_2013_SGD}, and achieving higher accuracy of the gradient estimates \cite{lian2016comprehensive}. 
In this work, we propose a novel gradient 
21
 estimator which traverses the entire spectrum between these two extremes. 
Essentially, the proposed algorithm, based on our novel estimator, improves the variance of the gradient estimator, while also saving on the function query budget.

The general ZO approach  involves computing a gradient estimate $\Hat{\nabla} f(\mbf x)$ of     function $f$ at the point $\mbf x$, and then plugging this estimate into a FO method. One way to estimate the gradient is by querying the function at a single randomly chosen point in the vicinity of $\mbf x$ \cite{flaxman05acm_siam}.
More effectively, multi-point (e.g., {two-point}) approaches are used \cite{agarwal10colt, nesterov17grad_free}, leading to better variance and improved complexity results. 

The initial work on multi-point estimators was largely limited to convex problems. 
For smooth, deterministic problems, \cite{nesterov17grad_free} proposed the ZO gradient descent (ZO-GD) algorithm and proved $O(d/T)$ convergence rate, where $d$ denotes the problem size and $T$ is the number of iterations. 
A ZO-mirror descent algorithm \cite{duchi15optimal_tit} extended this to the stochastic case, achieving the rate of $O(\sqrt{d}/\sqrt{T})$. 
The authors also proved the result to be \textit{order-optimal}. For nonsmooth problems, \cite{shamir17optimal_jmlr} proved the same rate to be optimal, while also extending the analysis to non-Euclidean problems.

In the nonconvex domain, the first stochastic algorithm, ZO-SGD \cite{Ghadimi_Siam_2013_SGD} utilized vectors sampled from normal distribution, and achieved $O(\sqrt{d}/\sqrt{T})$ convergence rate. 
The same rate was achieved by \cite{gao18information}, while using random vectors sampled from the surface of the unit sphere. 
In \cite{lian2016comprehensive}, an asynchronous ZO-SCD approach was proposed for parallel architecture settings, which achieved $O(\sqrt{d}/\sqrt{T})$ rate.
Following the recent progress in variance reduction methods for first-order optimization: SAGA \cite{defazio14saga}, SVRG \cite{Johnson_NIPS_2013, Reddi_ICML_2016}, SARAH \cite{Nguyen_ICML_2017_SARAH}, SPIDER \cite{Fang_NIPS_2018_spider}, to name a few, the ZO extensions of variance reduced methods have also appeared in recent years. 
ZO-SVRG improved the iteration complexity to $O(d/T)$, but at the expense of an increased function query complexity. The iteration complexity is further improved in \cite{Fang_NIPS_2018_spider,zo_spider_liang19icml}.

ZO-counterparts of FO methods have also been proposed in other contexts, such as constrained optimization \cite{lan18zeroth_conditional}, adaptive momentum methods \cite{chen2019zo_adamm}, mitigation of extreme components of gradient noise \cite{liu2019signsgd}, and distributed optimization over networks \cite{hajinezhad19zone, tang19allerton}.
ZO optimization has recently been shown to be   powerful   in evaluating the adversarial robustness of deep neural networks (DNNs), by generating black-box adversarial examples,  e.g., crafted images with imperceptible perturbations, to deceive a well-trained
DNN using only input-output model queries \cite{chen2017zoo,ilyas2018black,ilyas2018prior,cheng2018query,liu2020min,tu2018autozoom}. 
The internal configurations of the victim DNN systems are not revealed to the attackers and the only mode of interaction with the systems is by submitting inputs and receiving the predicted outputs.

\subsection{Our Contributions} \label{subsec_contri}
We summarize our contributions below:
\begin{itemize}
    \item 
    We propose a novel function value based gradient estimator (we call HGE, \underline{h}ybrid \underline{g}radient \underline{e}stimator), which  takes advantage of both the query-efficient random  gradient estimate and the variance-reduced coordinate-wise  gradient estimate. We also develop a coordinate importance sampling method to further improve the variance of HGE under a fixed number of function queries.
    
    \mycomment{We have proposed a novel gradient estimator for zeroth-order optimization. Our estimator generalizes ZO-SGD and ZO-SCD, and has reduced variance compared to both, without relying on the conventional variance reduction methods, which have higher function query complexity.}
    \item 
    We propose a ZO hybrid gradient descent (ZO-HGD) optimization method with the aid of HGE. We show that ZO-HGD is general since it covers ZO stochastic gradient descent (ZO-SGD) \cite{Ghadimi_Siam_2013_SGD}  and ZO stochastic coordinate descent (ZO-SCD) \cite{lian2016comprehensive} as special cases.  %We  show that 
    We provide a comprehensive theoretical analysis for the convergence of ZO-HGD across different optimization domains, showing that   ZO-HGD is efficient in both iteration and function query complexities.
    %nonconvex, convex and strongly convex optimization setups.
    
    \mycomment{Our estimator combines the advantages of both ZO-SGD and ZO-SCD, at no additional function query cost. The coordinate selection is based on importance sampling (rather than uniform sampling as is the case in ZO-SCD). The probability computation for importance sampling is inspired by the gradient quantization literature in distributed  first-order optimization.}
    \mycomment{
    \item 
    We have done comprehensive theoretical analysis of our method for smooth nonconvex, convex and strongly convex functions. We also demonstrate existing methods as special cases of our approach.}
    \item 
    We demonstrate the effectiveness of ZO-HGD through a real-world application to generating  adversarial examples from a black-box deep neural network \cite{chen2017zoo,ilyas2018black}. We show that ZO-HGD outperforms ZO-SGD, ZO-SCD and ZO sign-based SGD methods in striking a graceful balance between query efficiency and attack success rate. 
    \mycomment{We supplement our theoretical results with extensive simulations on a synthetic dataset as well as a real-world application to adversarial example generation.}
\end{itemize}

The paper is organized as follows. In Section \ref{sec_prelim}, we state the problem, and discuss two of the existing zeroth-order gradient estimators. We propose our hybrid estimator in Section \ref{sec_HGE}. This estimator is used to propose a zeroth-order hybrid gradient descent (ZO-HGD) algorithm in Section \ref{sec_HGD}. We discuss the convergence properties of the algorithm for smoooth, nonconvex functions in Section \ref{sec_HGD}, and for smooth, convex and strongly convex functions in Section \ref{sec_convex}. We provide the experimental results in Section \ref{sec_experiments}, followed by conclusion in Section \ref{sec_conclusion}. All the proofs are deferred to the Appendix.

\section{Preliminaries}
\label{sec_prelim}
In this section, we begin by presenting the formulation of the black-box optimization problems of our interest. We then review two commonly-used ZO gradient estimators, random gradient estimator ({RGE}) and coordinate-wise gradient  estimator ({CGE}). 
We shall define the notation wherever we introduce a new mathematical entity. We refer the reader to Appendix \ref{app_notation} for a tabular summary of all the notations used.

We consider the following  black-box  stochastic optimization problem
\begin{align}
\label{eq_problem}
    \min_{\mathbf{x} \in \mathbb{R}^d} f(\mathbf{x}) \triangleq \mathbb{E}_{\xi \sim \Xi} F(\mathbf{x}; \xi),
\end{align}
where $\mathbf x$ denotes the $d$-dimensional optimization variable,
$\xi \in \Xi$ denotes a stochastic variable with  distribution $\Xi$ (e.g., distribution of training samples), 
and $F(\cdot; \xi)$ is a smooth, possibly nonconvex loss function. 
By black-box, we mean that the objective function in \eqref{eq_problem} is only accessible via functional evaluations. 
% $F(\cdot, \xi): \mathbb{R}^d \to \mathbb{R}$ are smooth, possibly nonconvex functions.
To enable theoretical analysis, we impose two commonly-used assumptions on problem \eqref{eq_problem}.
\begin{assump}{\textit{Gradient Lipschitz continuity:}}
\label{assum_lipschitz}
The  loss function $f$ in \eqref{eq_problem} has Lipschitz continuous gradient with parameter $L$, i.e., $f(\mathbf{y}) \leq f(\mathbf{x}) + \langle \nabla f(\mathbf{x}), \mathbf{y} - \mathbf{x} \rangle + \frac{L}{2} \lnr \mathbf{y} - \mathbf{x}  \rnr ^2_2$, for all $\mbf x, \mbf y$, where $\nabla f$ denotes the gradient of $f$.
% \begin{align*}
% 	f(\mathbf{y}) \leq f(\mathbf{x}) + \left\langle \nabla f(\mathbf{x}), \mathbf{y} - \mathbf{x} \right\rangle + \frac{L}{2} \left\| \mathbf{y} - \mathbf{x} \right\|^2_2.
% \end{align*}
%
\end{assump}
\begin{assump}{\textit{Bounded variance of stochastic gradients:}}
\label{assum_var_bound}
 Suppose $(\mathbf x)_i$ denotes the $i$th coordinate of a vector $\mathbf x$, and $\zeta$ is a given constant, then $\mathbb{E} [ ( \nabla F(\mathbf{x}; \xi) - \nabla f(\mathbf{x}) )_i^2 ] \leq \zeta^2, \forall \ i$.
%, where $(\mathbf x)_i$
% 
% 	\begin{align*}
% 		\mathbb{E} \left[ \left( \nabla F(\mathbf{x}; \xi) - \nabla f(\mathbf{x}) \right)_i^2 \right] \leq \zeta^2.
% 	\end{align*}
% 
\end{assump}
Assumption \ref{assum_lipschitz} is fairly standard in the theoretical analysis of nonconvex optimization \cite{Ghadimi_Siam_2013_SGD}. 
Assumption \ref{assum_var_bound} enables a finer control on the variance of CGE \cite{lian2016comprehensive,signsgd18icml}. It also implies that $\mathbb{E} \left\| \nabla F(\mathbf{x}; \xi) - \nabla f(\mathbf{x}) \right\|^2 \leq \sigma^2 \triangleq d \zeta^2$.
%than a generic vector norm based variance bound allows. 
% Similar assumption is made in the context of communication-efficient first-order methods in \cite{signsgd18icml}.

\paragraph{{RGE.}}
Considering a set of  \textit{random} directional vectors $\{ \mathbf u_i \}_{i=1}^{\nR}$, RGE of the individual loss function $F(\mathbf x; \xi)$ is given by the average of the finite difference approximations of the directional derivatives of $F(\mathbf x; \xi)$ along these random directions \cite{gao18information,liu2018zeroth}:
\begin{align} 
    \Hat{\nabla}_{\mathrm{RGE}} F (\mathbf x; \xi)  =   \frac{1}{\nR} \sum_{i=1}^{\nR} \frac{d \left [ F(\mathbf{x} + \muR \mathbf{u}_{i}; \xi) - F(\mathbf{x}; \xi) \right ]}{\muR}  \mathbf{u}_{i},  
    \label{eq_RGE}
\end{align}
% \PS{[Should we have $\mbf u_{i,\xi}$ rather than $\mbf u_i$, since $\mbf u_i$ creates the impression that the vectors remain fixed across sample functions. We've talked about this once before, but in one of the proofs, we'll use the property that $\{\mbf u_i \}$'s are independent across samples. Therefore, I thought of double checking.]}\SL{[$\mathbf u_i$ should be sufficient since we have $\xi$ in $F(\cdot; \xi)$.]}
where $\muR > 0$  is a perturbation radius (also called smoothing parameter), and  each $\mathbf u_i$ is drawn from the uniform distribution on a unit sphere $U_0$ centered at $\mathbf 0$ \cite{gao18information}.
We define the smooth approximation of a function $g (\mathbf x)$ with smoothing parameter $\muR$ as $ g_{\muR} (\mathbf x) = \mathbb{E}_{\mathbf{u} \in U_0} [g(\mathbf{x} + \muR \mathbf{u} )]$.
The rationale behind RGE \eqref{eq_RGE} is that $\Hat{\nabla}_{\mathrm{RGE}} F (\mathbf x; \xi)$ is an \textit{unbiased} estimate of $\nabla F_{\muR} (\mathbf x;\xi)$, leading to
$ \mathbb{E}_{\xi, \{ \mathbf u_i \}}[\Hat{\nabla}_{\mathrm{RGE}} F (\mathbf x; \xi)] = \nabla f_{\muR} (\mathbf x)$. Note that
$\nabla f_{\muR} (\mathbf x)$ itself is a biased approximation of the true gradient $\nabla f (\mathbf x)$, with the bias controlled by $\muR$.

\paragraph{CGE.}
Different from RGE, CGE is constructed by finite difference approximations of directional derivatives along the canonical basis vectors $\{ \mathbf e_i \}_{i=1}^d$ in $\mathbb{R}^d$. 
%If $\mathcal I$ denotes the set of coordinate indices (e.g., randomly selected from all coordinates), then CGE is given by 
The $i$-th component of CGE is defined as
\cite{kiefer1952stochastic}

    \begin{align} \label{eq_CGE_1}
	    \Hat{\nabla}_{\mathrm{CGE}} F_i (\mathbf{x}; \xi)  =  \frac{ \left [ F(\mathbf{x} + \muCi \mathbf{e}_{i}; \xi) - F(\mathbf{x} - \muCi \mathbf{e}_{i}; \xi) \right ]}{2 \muCi} \mathbf{e}_{i}, 
    \end{align}
where $\muCi > 0$ denotes the coordinate-wise smoothing parameter. 
$ \Hat{\nabla}_{\mathrm{CGE}} F (\mathbf{x}; \xi)  =  \sum_{i=1}^d \Hat{\nabla}_{\mathrm{CGE}} F_i (\mathbf{x}; \xi)$ is the \textit{full} CGE.
If a random subset of the coordinates $\mathcal{I} \subseteq [d]$ is used (here $[d]$ denotes the set $\{ 1,2,\ldots, d\}$), rather than the full coordinate set $[d]$, we get the stochastic coordinate-wise gradient estimator   \cite{lian2016comprehensive,gu2016asynchronous}
\begin{align} \label{eq_CGE_I}
    \Hat{\nabla}_{\mathrm{CGE}} F_{\mathcal I} (\mathbf{x}; \xi) = \frac{d}{\nC} \sum_{i=1}^d I(i \in \mathcal{I}) \Hat{\nabla}_{\mathrm{CGE}} F_i (\mathbf{x}; \xi), 
\end{align}
where the cardinality of $\mathcal I$ is denoted by $|\mathcal{I}| = \nC$,
% \SL{[why not let $ |\mathcal{I}| = n_{\mathrm{c}}$ for ease of notation in the future analysis.]}
and $I(i \in \mathcal{I})$ is the indicator function which takes the value $1$ if $i \in \mathcal{I}$  and $0$ otherwise. 
Note that \cite{lian2016comprehensive} sampled the elements of $\mathcal{I}$ \textit{uniformly}, i.e., $\mathrm{Pr} (i \in \mathcal{I}) = \nC/d$, for all $i \in [d]$. 
Hence, the multiplicative factor $d/\nC$ in \eqref{eq_CGE_I} ensures unbiasedness $ \mathbb{E}_{\mathcal{I}} [\Hat{\nabla}_{\mathrm{CGE}} F_{\mathcal I} (\mathbf{x}; \xi)] = \Hat{\nabla}_{\mathrm{CGE}} F (\mathbf{x}; \xi)$. 
In  this work, we generalize \eqref{eq_CGE_I} to the case where the coordinates are instead, sampled \textit{non-uniformly}.

Using the gradient estimate $\Hat{\nabla} F (\mathbf{x}; \xi) $ based on either RGE or CGE, we can further define the approximation of the stochastic gradient of the objective in \eqref{eq_problem}, over a set of stochastic samples $\{ \xi \in \mathcal B \}$. This leads to 
\begin{align}
\Hat{\nabla} F (\mathbf x; \mathcal B)    = \frac{1}{|\mathcal B|} \sum_{ \xi  \in \mathcal B}\Hat{\nabla} F (\mathbf{x}; \xi).
\label{eq: GE_stoc_batch}
\end{align}
We also remark that
compared to  RGE, full CGE takes $O(d/\nR)$ times more function queries if $d > \nR$. 
However, it has $O(d/\nR)$ times smaller gradient estimation error  \cite{liu2018zeroth}. 
Inspired by this observation, we ask:

\noindent \textit{Can a well-designed convex combination of RGE and CGE improve the gradient estimation accuracy over RGE as well as improve the query efficiency over CGE?}

\section{Hybrid Gradient Estimator}
\label{sec_HGE}
% \section{Background: Existing Gradient Estimators}
% \label{sec_background}
% In this section, we begin by reviewing two commonly-used ZO gradient estimators, random gradient estimator ({RGE}) and coordinate-wise gradient  estimator ({CGE}). By illustrating their capabilities and limitations, we  propose a new hybrid gradient estimator (HGE) that takes advantages of both RGE and CGE. We also introduce a coordinate importance sampling method to design a variance-controlled  HGE. 
Spurred by the capabilities and limitations of RGE and CGE, in what follows we  propose a new hybrid gradient estimator (HGE) and its   variance-controlled version using a coordinate importance sampling method.  %$\BR$, $\BC$, $\muR$, $\muC$

\paragraph{Proposed HGE.}
In order to achieve the desired tradeoff between estimation accuracy and query efficiency, we combine RGE with CGE to obtain HGE
\begin{align}
   \Hat{\nabla}_{\mathrm{HGE}} F(\mathbf x; \br,  \bc, \mathcal I) =  \alpha 
   \Hat{\nabla}_{\mathrm{RGE}} F (\mathbf x; \br) + (1-\alpha) 
  \Hat{\nabla}_{\mathrm{CGE}} F_{\mathcal{I}} (\mathbf{x}; \bc, \mathbf{p}), \label{eq_HGE}
\end{align}
where  $\alpha \in [0, 1]$ is the combination coefficient, $\br$ and $\bc$ are mini-batches of stochastic samples as introduced   in \eqref{eq: GE_stoc_batch}, $ \Hat{\nabla}_{\mathrm{RGE}} F (\mathbf x; \br)$ is given by \eqref{eq_RGE} and \eqref{eq: GE_stoc_batch}, 
and $\Hat{\nabla}_{\mathrm{CGE}} F_{\mathcal{I}} (\mathbf{x}; \bc, \mathbf{p})$ denotes the stochastic CGE \eqref{eq_CGE_I} with the coordinate selection probability $\mathrm{Pr} (i \in \mathcal{I}) = p_i$, namely,
\begin{align}
 &\Hat{\nabla}_{\mathrm{CGE}} F_{\mathcal I} (\mathbf{x};  \xi, \mathbf p) = \sum_{i=1}^d \frac{I(i \in \mathcal{I})}{p_i}  \Hat{\nabla}_{\mathrm{CGE}} F_i (\mathbf{x}; \xi). \label{eq_CGE_nonuniform}
\end{align}
% \PS{
% 
% \begin{align}
% \Hat{\nabla}_{\mathrm{HGE}} F(\mathbf x; \xi) =&  \alpha 
%   \Hat{\nabla}_{\mathrm{RGE}} F (\mathbf x; \xi)  \nonumber \\
%   & + (1-\alpha) 
%   \sum_{i=1}^d \frac{I(i \in \mathcal{I}) }{p_i}  \Hat{\nabla}_{\mathrm{CGE}} F_i (\mathbf{x}; \xi)
% \end{align}
% 
% I like this simple compact formulation for our estimator,
% but, it is slightly incorrect, since we don’t compute RGE and
% CGE at the same functions $F(\cdot, \xi)$
% In \eqref{eq_HGE}, \eqref{eq_CGE_nonuniform}, $\alpha \in [0, 1]$ is a combination coefficient, 
% $\br, \bc \subseteq \Xi$ are randomly sampled sets of stochastic functions, 
% $\mathcal{I} \subseteq [d]$ is the set of coordinates on which CGE is computed, 
% $\mathbf p$ denotes a controllable  coordinate importance sampling vector and its element $p_i \in (0,1]$ provides the coordinate selection probability such that $\mathrm{Pr} (i \in \mathcal{I}) = p_i$. 
Different from \eqref{eq_CGE_I}, the coordinates of $\mathcal{I}$ in \eqref{eq_CGE_nonuniform}  can be sampled with unequal probabilities. It can be easily shown that 
 %$\mathbb{E} [ \Hat{\nabla}_{\mathrm{RGE}} F (\mathbf x; \br)] = \nabla f_{\muR} (\mathbf{x})$ and 
$ \mathbb{E} [ \Hat{\nabla}_{\mathrm{CGE}} F_{\mathcal{I}} (\mathbf{x}; \bc, \mathbf{p})] = \Hat{\nabla}_{\mathrm{CGE}} f(\mathbf{x})$.
In \eqref{eq_HGE},
$\alpha$ and $\mathbf{p}$  are design parameters.
Their optimal values help us improve the variance of HGE relative to RGE, and the function query complexity relative to CGE.
We next discuss how to optimize these parameters.
 
%\SL{[stop editing here]}
\paragraph{Design of coordinate selection probabilities $\mathbf{p}$.} 
%\SL{[You should start from motivation, then approach, then theoretical insights.]}
Recall that if no prior knowledge on gradient estimation is available, then a simple choice of $\mathbf p$ is that of equal probability across all coordinates \cite{lian2016comprehensive}, namely, the uniform distribution  used in \eqref{eq_CGE_I}.
However in HGE, a RGE $\mathbf{g} \triangleq \Hat{\nabla}_{\mathrm{RGE}} F (\mathbf x; \br)$, known prior to computing CGE, can be employed as a probe estimate. 
Thus, we ask if $\mathbf p$ could be designed based on $\mathbf g$ to reflect the importance of   coordinates to be selected.
We next show that this question can be addressed by formulating the coordinate selection problem as the problem of gradient sparsification \cite{wangni2018gradient}.

% The general idea of our proposed approach is to sample the coordinates $\mathcal{I}$ for computing $\Hat{\nabla}_{\mathrm{CGE}} F_{\mathcal{I}} (\mathbf{x}; \bc, \mathbf{p})$, based on the probability vector $\mathbf{p}$, which in turn is computed using $\Hat{\nabla}_{\mathrm{RGE}} F (\mathbf x; \br)$.
% This is in contrast to ZO-SCD \cite{lian2016comprehensive}, where the elements of $\mathcal{I}$ are sampled uniformly from $[d]$.
% To compute $\mathbf{p}$, we borrow ideas from the gradient sparsification technique introduced in \cite{wangni2018gradient} to reduce the communication cost.
% More precisely, suppose $\mathbf{g} \triangleq \Hat{\nabla}_{\mathrm{RGE}} F (\mathbf x; \br)$ is to be approximated by a sparse vector $Q(\mathbf{g})$.

Given $\mathbf p$,  the $i$-th coordinate of $\mathbf{g}$ is   dropped with
   probability $1-p_i$.
 %, the $i$-th coordinate of $\mathbf{g}$ is dropped. 
This can be modeled using a Bernoulli random variable $Z_i$: $\mathrm{Pr} (Z_i = 1) = p_i$ and $ \mathrm{Pr} (Z_i = 0) = 1-p_i$. 
Defining $(Q(\mathbf{g}))_i = Z_i \cdot (g_i/p_i)$, note that $Q(\mathbf{g})$ is an unbiased estimator of $\mathbf{g}$.
The variance can be bounded using $ \lnr  Q(\mathbf{g}) \rnr^2 = \sum_{i=1}^d g_{i}^2/p_i$, while the expected sparsity of $Q(\mathbf{g})$ is $\sum_{i=1}^d p_i$.
To determine $\mathbf{p}$, we minimize the variance of the sparsified RGE under a constraint on the expected sparsity of the vector. That is, we solve the problem:
 \begin{align}
	\min_{\mathbf p} \sum_{i=1}^d \frac{g_{i}^2}{p_i} \ \quad \text{subject to} \quad \ \sum_{i=1}^d p_i \leq \nC, 0 < p_i \leq 1, \forall i. \label{eq_prob_opt}
\end{align}
where $\nC \in \mathbb{N}_+$ denotes the coordinate selection budget, and $\mathbb{N}_+$ is the set of positive integers. The solution to \eqref{eq_prob_opt} is given by the following proposition.

\begin{prop}
\label{prop_prob_opt}
Suppose we denote by $g_{(1)}, g_{(2)}, \hdots, g_{(d)}$ the components of vector $\mathbf{g}$, arranged in descending order of magnitudes. First, we find the smallest $k$ such that
\begin{align*}
	| g_{(k+1)} | \left( \nC-k \right) \leq \sum_{i=k+1}^d | g_{(i)} |,
\end{align*}
is true, and denote by $S_k$ the set of coordinates with the top $k$ largest magnitudes of $| g_i |$. Then the $i$-th component of the probability vector $\mathbf{p}$ is computed as

	\begin{align*}
		p_{i} = 
		\begin{cases}
			1, & \text{ if } i \in S_k \\
			\frac{| g_{i} | (\nC-k)}{\sum_{j = k+1}^d | g_{j} |}, & \text{ if } i \notin S_k.
		\end{cases}
	\end{align*}
 
\end{prop}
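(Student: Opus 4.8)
The plan is to treat \eqref{eq_prob_opt} as a convex program and solve it through its KKT conditions, then verify that the closed form in the statement is precisely the resulting KKT point. Each term $g_i^2/p_i$ is convex on $p_i>0$ and both constraints $\sum_i p_i\le \nC$ and $p_i\le 1$ are linear, so the problem is convex over a convex feasible set; hence the KKT conditions are necessary and sufficient for global optimality, and it suffices to exhibit a single feasible point satisfying them. Moreover, since $g_i^2/p_i\to\infty$ as $p_i\to 0^+$, the lower-bound constraints $p_i>0$ are never active at optimum, so they can be dropped and only the budget and box-upper constraints require Lagrange multipliers.

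First I would form the Lagrangian $L=\sum_i g_i^2/p_i+\lambda(\sum_i p_i-\nC)+\sum_i\nu_i(p_i-1)$ with $\lambda,\nu_i\ge 0$. Stationarity $\partial L/\partial p_i=0$ gives $g_i^2/p_i^2=\lambda+\nu_i$. For a strictly interior coordinate ($p_i<1$), complementary slackness forces $\nu_i=0$, so $p_i=|g_i|/\sqrt\lambda$; for a saturated coordinate ($p_i=1$) we get $\nu_i\ge 0$, i.e. $|g_i|\ge\sqrt\lambda$. This yields the water-filling form $p_i=\min(1,|g_i|/\sqrt\lambda)$ and shows the saturated set consists exactly of the largest-magnitude coordinates, which I identify with $S_k$, where $k=|S_k|$. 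Assuming $\nC<d$ (otherwise $p_i\equiv 1$ is feasible and optimal), the objective is strictly decreasing in each $p_i$, so the budget constraint is active; substituting $p_i=1$ for $i\in S_k$ and $p_i=|g_i|/\sqrt\lambda$ otherwise into $\sum_i p_i=\nC$ solves $1/\sqrt\lambda=(\nC-k)/\sum_{j\notin S_k}|g_j|$, reproducing the stated value $p_i=|g_i|(\nC-k)/\sum_{j=k+1}^d|g_{(j)}|$ for $i\notin S_k$.

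The remaining and most delicate step is to pin down $k$ and certify consistency of the candidate. The construction is consistent precisely when (i) the non-saturated coordinates satisfy $p_i\le 1$, whose tightest instance is $|g_{(k+1)}|(\nC-k)\le\sum_{j=k+1}^d|g_{(j)}|$, and (ii) the saturated coordinates satisfy $|g_{(k)}|\ge\sqrt\lambda$, i.e. $|g_{(k)}|(\nC-k)\ge\sum_{j=k+1}^d|g_{(j)}|$. I would show that choosing $k^*$ to be the smallest index for which (i) holds automatically secures (ii): minimality means the inequality fails at $k^*-1$, and peeling the term $|g_{(k^*)}|$ off the sum in that failed inequality yields exactly condition (ii) at $k^*$. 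Existence of such a smallest $k^*$ follows because $k=\nC-1$ always satisfies (i), since its right-hand side $\sum_{j=\nC}^d|g_{(j)}|$ dominates its left-hand side $|g_{(\nC)}|$ trivially; this also guarantees $\nC-k^*\ge 1>0$, so the formula is well defined (and when $k^*=0$ condition (ii) is vacuous). Establishing this equivalence between the minimality rule and the two KKT consistency conditions is the main obstacle; the convexity and stationarity arguments are routine.
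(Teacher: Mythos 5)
Your proof is correct and follows essentially the same route as the paper's: a KKT analysis of the convex program \eqref{eq_prob_opt}, giving the water-filling form $p_i = \min\left(1, |g_i|/\sqrt{\lambda}\right)$, an active budget constraint that fixes the multiplier, and the smallest-$k$ selection rule. Your write-up is in fact slightly more complete than the paper's, since you explicitly verify the two consistency conditions the paper leaves implicit --- that minimality of $k$ forces dual feasibility of the saturated coordinates, i.e. $|g_{(k)}|(\nC-k) \geq \sum_{j=k+1}^d |g_{(j)}|$, and that a valid smallest $k$ exists with $\nC - k \geq 1$ so the formula is well defined.
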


\begin{proof}
See Appendix \ref{proof_prop_prob_opt}.
\end{proof}

The probability value $p_i$ depends on the relative magnitude of $g_i$, with respect to the other elements of $\mathbf{g}$.
If $\nC$ is large enough, then the coordinates corresponding to the largest elements are always sampled $(k>0)$. In the extreme case of all entries having the same magnitude (for example, $\mathbf{g}$ is the $1$-bit compressed version of a real-valued vector), $k=0$ and $p_i = \nC/d$ for all $i$.
The probabilities $\{ p_i \}$ obtained in Proposition \ref{prop_prob_opt} are used to compute \eqref{eq_CGE_nonuniform}.

\paragraph{Design of  combination coefficient $\alpha$.}
Once we select $\mathbf{p}$, we intend to select a combination coefficient $\alpha$ which can minimize the variance of HGE.
In fact, the closed form expression of this variance is not tractable.
Hence, we first upper bound the variance in the following proposition.
Then, $\alpha$ is selected to minimize this upper bound.

% \noindent Next, we bound the two terms in \eqref{eq_var_HGE}.
\begin{prop}
\label{prop_bd_var_RGE_CGE}
The variance of HGE \eqref{eq_HGE} is bounded as
\begin{align}
    \mathbb{E} \lnr \Hat{\nabla}_{\mathrm{HGE}} F(\mathbf x; \br, \bc, \mathcal{I}) - \nabla f (\mathbf{x})  \rnr ^2 & \leq 2 \alpha^2 \mathbb{E} \lnr \Hat{\nabla}_{\mathrm{RGE}} F (\mathbf x; \br) - \nabla f (\mathbf{x})  \rnr ^2 \nonumber \\
    & \quad + 2 (1-\alpha)^2 \mathbb{E} \lnr   [\Hat{\nabla}_{\mathrm{CGE}} F_{\mathcal{I}} (\mathbf{x}; \bc, \mathbf{p})] - \nabla f (\mathbf{x})  \rnr ^2. \label{eq_var_HGE}
\end{align}
Moreover, suppose Assumption \ref{assum_lipschitz} and \ref{assum_var_bound} hold. Given the probability vector $\mathbf{p}$, the individual gradient estimators $\Hat{\nabla}_{\mathrm{RGE}} F (\mathbf x; \br)$ and $\Hat{\nabla}_{\mathrm{CGE}} F_{\mathcal{I}} (\mathbf{x}; \bc, \mathbf{p})$ satisfy
\begin{align}
	& \mathbb{E} \lnr \Hat{\nabla}_{\mathrm{RGE}} F (\mathbf x; \br) - \nabla f (\mathbf{x})  \rnr ^2 \leq \frac{2}{|\br|} \left( 1 + \frac{d}{\nR} \right) \left\| \nabla f (\mathbf{x}) \right\|^2 + \frac{2 \sigma^2}{|\br|} \left( 1 + \frac{d}{\nR} \right) + \left( 1 + \frac{2}{|\br|} + \frac{2}{\nR |\br|} \right) \frac{\muR^2 L^2 d^2}{4}, \label{eq_bd_var_RGE} \\
	& \mathbb{E} \lnr [\Hat{\nabla}_{\mathrm{CGE}} F_{\mathcal{I}} (\mathbf{x}; \bc, \mathbf{p})] - \nabla f (\mathbf{x})  \rnr ^2 \leq \sum_{i=1}^d \frac{1}{p_i} \Big[ 2 \left( \nabla f (\mathbf{x}) \right)_i^2 + \frac{3}{|\bc|} \left( \zeta^2 + \frac{L^2 \muCi^2}{2} \right) + \frac{L^2 \muCi^2}{2} \Big] - 2 \left\| \nabla f (\mathbf{x}) \right\|^2, \label{eq_bd_var_CGE}
\end{align}
where recall from \eqref{eq_HGE} that $\bc$ and $\br$ denote the sets of stochastic samples, $\nR$ denotes the number of random directions (per stochastic sample) used in computing $\Hat{\nabla}_{\mathrm{RGE}}$ \eqref{eq_RGE}, $\muR$ is the RGE smoothing parameter,   $\{ \muCi \}_i$ are the coordinate-wise CGE smoothing parameters, and $\zeta$ is the coordinate-wise variance (Assumption \ref{assum_var_bound}), with $\sigma^2 = d \zeta^2$.
% \SL{[Finally, we should consider to create a notation table in appendix, and refer readers to it even in the previous section.]}
\end{prop}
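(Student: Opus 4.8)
The statement has three layers, and I would dispatch them in order of increasing effort. For the first inequality \eqref{eq_var_HGE}, note that $\alpha+(1-\alpha)=1$, so the HGE error is the \emph{balanced} combination $\alpha(\Hat{\nabla}_{\mathrm{RGE}} F - \nabla f) + (1-\alpha)(\Hat{\nabla}_{\mathrm{CGE}} F_{\mathcal I} - \nabla f)$. Applying the elementary bound $\|u+v\|^2 \le 2\|u\|^2 + 2\|v\|^2$ to the two scaled error vectors and taking expectations yields \eqref{eq_var_HGE} directly; this reduces the whole task to bounding the two individual estimator errors separately.

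For the RGE bound \eqref{eq_bd_var_RGE}, the plan is a bias--variance split around the smoothed gradient. Since $\Hat{\nabla}_{\mathrm{RGE}} F(\mathbf x;\br)$ is unbiased for $\nabla f_{\muR}$, I would write $\mathbb{E}\|\Hat{\nabla}_{\mathrm{RGE}} F - \nabla f\|^2 = \mathbb{E}\|\Hat{\nabla}_{\mathrm{RGE}} F - \nabla f_{\muR}\|^2 + \|\nabla f_{\muR} - \nabla f\|^2$, bounding the deterministic smoothing bias by the standard sphere-smoothing estimate $\|\nabla f_{\muR} - \nabla f\|^2 \le \muR^2 L^2 d^2/4$. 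For the variance, I would use that the estimator averages $|\br|$ i.i.d. samples, each itself an average of $\nR$ i.i.d. directions, and invoke the law of total variance to reduce to the single-sample, single-direction second moment $\mathbb{E}_{\mathbf u}\|\,d\,\muR^{-1}[F(\mathbf x+\muR\mathbf u;\xi)-F(\mathbf x;\xi)]\mathbf u\|^2$, for which the known bound is $\le 2d\|\nabla F(\mathbf x;\xi)\|^2 + O(\muR^2 L^2 d^2)$. Converting $\mathbb{E}\|\nabla F(\mathbf x;\xi)\|^2 = \|\nabla f\|^2 + \mathbb{E}\|\nabla F - \nabla f\|^2 \le \|\nabla f\|^2 + \sigma^2$ via Assumption \ref{assum_var_bound} and carefully propagating the $1/|\br|$ and $1/\nR$ factors through both averaging levels produces the $(1 + d/\nR)$ multipliers and the composite bias coefficient $(1 + 2/|\br| + 2/(\nR|\br|))$.

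For the CGE bound \eqref{eq_bd_var_CGE}, the clean route is a coupling argument. Write the subsampled batch estimator \eqref{eq_CGE_nonuniform} as $Q$ with $i$-th coordinate $Q_i = (Z_i/p_i)\bar c_i$, where $Z_i$ is the Bernoulli selector with $\mathbb{E}[Z_i]=p_i$ and $\bar c_i$ is the batch-averaged $i$-th central difference, and introduce the auxiliary vector $Q(\nabla f)$ with $(Q(\nabla f))_i = (Z_i/p_i)(\nabla f)_i$ using the \emph{same} selectors $Z_i$. Splitting $Q - \nabla f = (Q - Q(\nabla f)) + (Q(\nabla f) - \nabla f)$ and applying $\|u+v\|^2\le 2\|u\|^2+2\|v\|^2$ is what generates the factor of $2$; because the squared norm is a coordinatewise sum, only $\mathbb{E}[Z_i^2]=p_i$ is ever needed, so no cross-coordinate correlations enter. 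The second piece gives $2\,\mathbb{E}\|Q(\nabla f)-\nabla f\|^2 = 2\sum_i (\nabla f)_i^2 (1-p_i)/p_i = \sum_i 2(\nabla f)_i^2/p_i - 2\|\nabla f\|^2$, i.e. exactly the signal-plus-correction terms. The first piece equals $2\sum_i p_i^{-1}\,\mathbb{E}_{\bc}[(\bar c_i - (\nabla f)_i)^2] = 2\sum_i p_i^{-1}\big[\delta_i^2 + \mathrm{Var}_\xi(h_i)/|\bc|\big]$, where $\delta_i$ is the deterministic CGE bias and $h_i$ the per-sample central difference. I would bound $\delta_i^2 \le L^2\muCi^2/4$ through the integral form of $L$-smoothness, and bound $\mathrm{Var}_\xi(h_i) \le \mathbb{E}[(h_i-(\nabla f)_i)^2] \le \tfrac32\zeta^2 + 3(L^2\muCi^2/4) = \tfrac32(\zeta^2 + L^2\muCi^2/2)$ using the weighted Young inequality $(a+b)^2 \le \tfrac32 a^2 + 3 b^2$, split into gradient noise (controlled by Assumption \ref{assum_var_bound}) and per-sample CGE error $|h_i - (\nabla F)_i| \le L\muCi/2$. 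Doubling then reproduces the exact $3/|\bc|$ and $L^2\muCi^2/2$ coefficients.

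The genuinely delicate step is the RGE variance accounting: propagating the single-direction second moment through both the $\nR$-direction and $|\br|$-sample averages while keeping a smoothing-bias remainder attached to each sample is what produces the unusual coefficient $(1 + 2/|\br| + 2/(\nR|\br|))$ rather than a clean $1$, and getting those constants exactly right requires care. The CGE bound is conceptually the crux but falls out cleanly once the $Q(\nabla f)$ coupling is chosen, since that is precisely the device that turns the importance-sampling variance into the stated $\sum_i 2(\nabla f)_i^2/p_i - 2\|\nabla f\|^2$ form.
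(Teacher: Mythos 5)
Your proposal is correct and lands on exactly the stated constants in all three inequalities. For \eqref{eq_var_HGE} and \eqref{eq_bd_var_RGE} it is essentially the paper's own argument: the paper proves the RGE bound by splitting around $\nabla f_{\muR}$ (unbiasedness kills the cross term), then expanding the double average over $\br$ and the $\nR$ directions and cancelling cross terms between distinct samples, which is precisely your law-of-total-variance decomposition --- the single-direction conditional second moment scaled by $1/(\nR|\br|)$ plus the across-sample variance of $\nabla F_{\muR}(\cdot\,;\xi)$ scaled by $1/|\br|$ --- bounded by the same two ingredients you cite. The genuine departure is in \eqref{eq_bd_var_CGE}. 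The paper splits around the full-coordinate estimator $\Hat{\nabla}_{\mathrm{CGE}} f(\mathbf{x})$, where unbiasedness makes the cross term vanish exactly, so no factor of $2$ is paid at the top level; the factor of $2$ on $(\nabla f(\mathbf{x}))_i^2$ appears only later, from $\|\Hat{\nabla}_{\mathrm{CGE}} f_i(\mathbf{x})\|^2 \le L^2\muCi^2/2 + 2(\nabla f(\mathbf{x}))_i^2$, and the per-sample variance is handled by a uniform three-way split giving $3(\zeta^2+L^2\muCi^2/2)$. You instead couple with $Q(\nabla f)$ built from the \emph{same} Bernoulli selectors, pay the factor of $2$ up front --- which is necessary on your route, since with shared $Z_i$ the cross term between your two pieces does \emph{not} vanish --- and then compute both pieces exactly, using the weighted split $(a+b)^2\le\tfrac{3}{2}a^2+3b^2$ for the per-sample variance. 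The constants coincide because your tighter variance coefficient ($\tfrac{3}{2}$ versus the paper's $3$) exactly offsets your looser top-level split; in fact the paper's route produces a spare $-L^2\sum_i\muCi^2/4$ term that it simply discards, so the two final bounds are equally sharp. What your coupling buys is transparency: it exhibits $\sum_i 2(\nabla f)_i^2/p_i - 2\|\nabla f\|^2$ as (twice) the pure importance-sampling variance of the true gradient, cleanly separated from the smoothing and stochastic-sampling errors, whereas the paper's orthogonal split keeps the estimator-bias bookkeeping one level longer before arriving at the same place.
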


\begin{proof}
% In Proposition\,\ref{prop_bd_var_RGE_CGE}, the variance bound of RGE \eqref{eq_bd_var_RGE} obeys \cite[Lemma~4.2]{gao18information} and \cite[Lemma~2]{liu2018zeroth}. And the bound in 
% \eqref{eq_bd_var_CGE} follows from the unbiasedness $\mathbb{E} [ \Hat{\nabla}_{\mathrm{CGE}} F_{\mathcal{I}} (\mathbf{x}; \bc, \mathbf{p})] = \Hat{\nabla}_{\mathrm{CGE}} f(\mathbf{x})$, and by generalizing  it  to the stochastic mini-batch case, along with non-uniform coordinate sampling.
See Appendix \ref{proof_prop_bd_var_RGE_CGE}.
\end{proof}

The accuracy of $\Hat{\nabla}_{\mathrm{RGE}}$ depends critically on the number of random directions $\nR$ \eqref{eq_RGE}. 
Also, for $|\br| \to \infty$, $\Hat{\nabla}_{\mathrm{RGE}} \to \nabla f_{\muR}$, and the bound \eqref{eq_bd_var_RGE} reduces to $O(\muR^2 L^2 d^2)$. This is precisely the bound for the deterministic case
% for $\nR = 1, |\br| = 1$ 
\cite[Lemma~4.1]{gao18information}.
Similarly, the accuracy of $\Hat{\nabla}_{\mathrm{CGE}}$ depends on the sampling probabilities $\{ p_{i} \}$. 
If $p_i = 1$ for all $i$, and $|\bc| \to \infty$, $\Hat{\nabla}_{\mathrm{CGE}} F_{\mathcal{I}} (\mathbf{x}; \bc, \mathbf{p}) \to \Hat{\nabla}_{\mathrm{CGE}} f(\mathbf{x})$.
The bound in \eqref{eq_bd_var_CGE} then reduces to the deterministic case upper bound $O(L^2 \sum_{i=1}^d \muCi^2)$ \cite[Lemma~3]{zo_spider_liang19icml}.

% Note that although $| \mathcal{I} |$ does not appear explicitly in \eqref{eq_bd_var_CGE}, it is implicit in the probability values $\{ p_{t,i} \}$. \SL{[You do not need to say the first sentence. You should say that The choice of $\alpha$ is then determined by   minimizing xx. ]}
Substituting \eqref{eq_bd_var_RGE}, \eqref{eq_bd_var_CGE} in \eqref{eq_var_HGE}, and minimizing over $\alpha$, we obtain the optimal value $\alpha^*$. 
We define $\bar{P} = \frac{1}{d} \sum_{i=1}^d \frac{1}{p_i}$. On simplification (see Appendix \ref{app_alpha_choice}), $\alpha^*$ reduces to
\begin{align}\label{eq: alpha}
    \alpha^* = \left[ 1 + \frac{(1 + d/\nR)}{\bar{P}} \right]^{-1},
\end{align}
Recall that $\nR$ and $\sum_i p_i (= \nC)$ respectively, are the function query budgets of RGE and CGE. 
Considering some extreme cases, if $\nR \to \infty$, since $\bar{P} \geq 1$, we get $\alpha^* \geq \frac{1}{2}$.
If $\bar{P} \to \infty$, then $\alpha^* \to 1$ and RGE dominates the estimator.
On the other hand, if $\nR = 0$, then $\alpha^* = 0$ since there is no RGE to assign any weight.
The relative values of $\nR$, $\bar{P}$ determine the exact weights assigned to RGE, CGE in \eqref{eq_HGE}.

A relatively simple case is the one with uniform sampling of coordinates, i.e., $p_i = \nC/d$, for all $i$.
In this case, $\nC$ is the query budget of CGE, and $\alpha^* = ( 1 + \nC/d + \nC/\nR )^{-1}$. 
If $\nC \geq \nR$, then $\alpha^* < \frac{1}{2}$ and HGE \eqref{eq_HGE} reasonably assigns higher weight to CGE.

% \SL{[
% You should present the main insights used in practice since many hyper-parameters are not known a priori. E.g., relation to key parameters $n$, $d$, and $\bar{P}$. Then you can consider a special case when $p_i$ is uniform [note that this is actually not the case of our interest.]
% ]}

\section{ZO Hybrid Gradient Descent}
\label{sec_HGD}
In this section, we introduce the ZO hybrid gradient descent (ZO-HGD) algorithm to solve problem \eqref{eq_problem} with the aid of HGE \eqref{eq_HGE}.
We then derive its convergence rate and discuss the performance of ZO-HGD in several special cases. 

% We then discuss the convergence analysis of the Algorithm, under general smoothness assumption on the function $f$ in \eqref{eq_problem}.
% Finally, we discuss the performance of the Algorithm in some special cases.

% \paragraph{The Algorithm.}

\begin{algorithm}[h!]
\caption{{ZO-HGD to solve problem \eqref{eq_problem}}}
\label{Algo_zo_hgd}
\begin{algorithmic}[1]
	\State{\textbf{Input}: Initial point $\mathbf{x}_0$, number of iterations $T$, step sizes $\{ \eta_t \}$, smoothing parameter $\muR$ and number of random directions $\nR$ for RGE, smoothing parameters $\{ \muCi \}_{i=1}^d$ for CGE, random sample set $\Xi$, coordinate selection budget $\{ n_{\mathrm{c,t}} \}$ for CGE \eqref{eq_prob_opt} 
	}
	%$\br_0 \subseteq \Xi$
% 	\State{\textbf{Initialize}: $\nabla_{1,0} = \Hat{\nabla}_{\mathrm{RGE}} F (\mathbf{x}_0; \br_0)$
	\For{$t = 0$ to $T-1$}
    	\State Sample mini-batches $\brt, \bct$ from $\Xi$
    	\State Compute RGE $\nabla_{\mathrm{r},t} = \Hat{\nabla}_{\mathrm{RGE}} F (\mathbf{x}_t; \brt)$ as \eqref{eq_RGE} \& \eqref{eq: GE_stoc_batch} \label{line_rge}
    % 	  \vspace*{-0.15in}
        \State Compute importance sampling probabilities $\mathbf p_t$ with $\nC = n_{\mathrm{c,t}}, \mathbf{g} = \nabla_{\mathrm{r},t}$ 
        in Proposition \ref{prop_prob_opt} 
        % 	\SL{[$\mathcal I_t$ is unknown. How can it be equal to $N$? incorrect logic.]}
        %	$\mathbf{p}_t = \text{Opt-Prob} (\nabla_{\mathrm{r},t}, |\mathcal{I}_t|)$ 
        \label{line_prob}
        \State Sample coordinate set $\mathcal{I}_t$ of cardinality $n_{\mathrm{c,t}}$ with $\mathrm{Pr} (i \in \mathcal{I}_t) = p_{t,i}$ \label{line_cge1}
        \State Compute CGE $\nabla_{\mathrm{c},t} =   \Hat{\nabla}_{\mathrm{CGE}} F_{\mathcal{I}_t} (\mathbf{x}_t; \bct, \mathbf{p}_t)$ as \eqref{eq_CGE_nonuniform} \label{line_cge}
            % 	  \vspace*{-0.15in}
        \State Update: $\mathbf{x}_{t+1} = \mathbf{x}_{t} - \eta_t \left( \alpha_t \nabla_{\mathrm{r},t} + (1 - \alpha_t) \nabla_{\mathrm{c},t} \right)$ \label{line_update}
        % 	$\mathbf{x}_{t+1} = \mathbf{x}_{t} - \eta_t \left( \alpha_t \nabla_{\mathrm{r},t} + (1 - \alpha_t) \nabla_{\mathrm{c},t} \right)$ 
        % 	\State $\nabla_{1,t+1} = \Hat{\nabla}_{\mathrm{RGE}} F (\mathbf{x}_t; \brt)$ \label{line_rge}
	\EndFor
	\State \textbf{Output:} A solution $\bar{\mathbf{x}}_T$ is  picked uniformly randomly from $\{ \mathbf{x}_t \}_{t=1}^{T}$.
\end{algorithmic}
\end{algorithm}

We present ZO-HGD in Algorithm\,\ref{Algo_zo_hgd}, which consists of three main steps. 
First, a coarse gradient estimate  $\nabla_{\mathrm{r},t}$ is acquired   using RGE (line \ref{line_rge}), and is employed as a probe signal to determine the probabilities  $\mathbf p_t$ of coordinate-wise importance sampling (line \ref{line_prob}).
Second, a stochastic CGE is generated using the subset of coordinates $\mathcal{I}_t$ sampled according to $\mathbf p_t$ (line \ref{line_cge1}-\ref{line_cge}).
Third, a HGE based descent step is used to update the optimization variable $\mathbf x_t$ per iteration (line \ref{line_update}).
 
The expected total number of function evaluations in Algorithm \ref{Algo_zo_hgd}, known as the \textit{Function Query Cost} (FQC), is given by $\sum_{t=0}^{T-1} ( 2\nR|\brt| + 2 n_{\mathrm{c,t}} |\bct| )$, where recall that $n_{\mathrm{c,t}}$ is the size of the coordinate set $\mathcal{I}_t$ sampled in line \ref{line_cge1}.
% 
% \begin{align}
%     & \sum_{t=0}^{T-1} \left( 2\nR\brt + 2 \bct \mathcal{I}_t \right) = O \left( T \nR+ \mathcal{I}) \right), \label{eq_FQC}
% \end{align}
% 
If we assume the coordinate set size to be constant over time, i.e., $n_{\mathrm{c,t}} = \nC$ at all times $t$, FQC reduces to $O ( T (\nR+ \nC) )$. 
Note that if $n_{\mathrm{c,t}} = 0$, %\SL{you used $, \forall \ t$ too many times}, 
then HGE reduces to RGE, with $\alpha_t = 1$. Accordingly, Algorithm \ref{Algo_zo_hgd} becomes mini-batch ZO-SGD \cite{gao18information}. 
%This is also reflected in the $O(Tn)$ FQC.  HGE simplifies to CGE with $\alpha_t = 0$ . 
On the other hand, if $\nR=0$, HGE reduced to CGE and $\alpha_t = 0$. 
In this case, no prior knowledge is available to compute the sampling probability vector $\mathbf{p}_t$. 
If we sample the coordinates uniformly, i.e., $p_{t,i} = \nC/d$, for all $i,t$, ZO-HGD reduces to ZO-SCD \cite{lian2016comprehensive}.
%, and FQC is $O(T \nC)$.

% \SL{[Use one paragraph to talk about that the query cost of ZO-HGD, based on, batch sizes, $n$, and $\nC$. Then, talk about that
% ZO-HGD is general in the sense that it contains ZO stochastic gradient descent (ZO-SGD) and ZO stochastic coordinate descent (ZO-SCD) as special cases. For the latter, you can consider it as $n = 0$ since no prior knowledge is available to compute $\mathbf p_t$. For the former, it is $\nC = 0$. Also, take about the possible generalization of learning rates for $\eta_t$, e.g., using $\eta_{\mathrm{r},t}$ and $\eta_{\mathrm{c},t}$.]}

\paragraph{Technical challenges of ZO-HGD.} 
ZO-HGD is a non-trivial extension of both ZO-SGD and ZO-SCD because of the following differences. 
First, the \textit{non-uniform} sampling of the coordinate sets $\{ \mathcal{I}_t \}$ in CGE complicates the derivation of the upper bound on the variance of CGE \eqref{eq_bd_var_CGE}, compared to the uniform sampling case \cite{lian2016comprehensive}. 
Second, as discussed in Section \ref{sec_HGE}, the choice of the \textit{combination coefficients} $\{ \alpha_t \}$  controls the variance of the proposed HGE relative to both RGE and CGE.
However, $\alpha$ has a nonlinear dependence on $\nR$ (query budget for RGE), and $\{ p_{t,i} \}$ (sampling probabilities for CGE).
This makes choosing $\alpha_t$   nontrivial {to achieve the graceful tradeoff between convergence speed and FQC.}
Next, we elaborate on the convergence of ZO-HGD.
%Next we discuss the convergence analysis for the proposed method.

% \SL{[I stop editing here.]}
% Using $\nabla_{\mathrm{r},t}$, a set of probabilities $\{ p_{t,i} \}$ are computed (line \ref{line_prob}), to sample the subset of coordinates $\mathcal{I}_t$, such that $\mathrm{Pr} (i \in \mathcal{I}_t) = p_{t,i}$. 
% The procedure in outlined in Proposition \ref{prop_prob_opt}.
% On the coordinates in $\mathcal{I}_t$, the more accurate CG estimator $\nabla_{\mathrm{c},t}$ is computed (line \ref{line_cge}). 
% Finally, the descent direction is computed as a convex combination of the two estimators. 
% Clearly, the estimators in ZO-SGD \cite{Ghadimi_Siam_2013_SGD, gao18information} and ZO-SCD \cite{lian2016comprehensive} are special cases of the proposed approach.

% \SL{[You should consider the following order.
% First, present assumptions and notations. E.g., Besides Assumptions 1-2, we introduce the following simplified notations without losing generalizability. xxxx.
% ]}

% \SL{[Please consider a better analysis logic/flow.
% ]}

\paragraph{Convergence analysis.}
%\paragraph{Assumptions and Notations.}
For ease of notation, as in Algorithm \ref{Algo_zo_hgd}, we denote $$\nabla_{\mathrm{r},t} = \Hat{\nabla}_{\mathrm{RGE}} F (\mathbf{x}_t; \brt),
\nabla_{\mathrm{c},t} =   \Hat{\nabla}_{\mathrm{CGE}} F_{\mathcal{I}_t} (\mathbf{x}_t; \bct, \mathbf{p}_t).$$%
Also, we assume the coordinate-wise smoothing parameters in CGE to be fixed, i.e., $\muCi = \muC$ for all $i$.
Recall that the function query budget of HGE depends only on $\nR$ (number of random vectors in RGE), and the sampling probability values $\{ p_{t,i} \}$ for CGE. The expected number of coordinates used in CGE at time $t$ is $\sum_{i=1}^d p_{t,i} = \nC$.

Our analysis begins with using the L-smoothness of $f$ (Assumption \ref{assum_lipschitz}) in the update step (line \ref{line_update}) in Algorithm \ref{Algo_zo_hgd},
\begin{align}
	f (\mathbf{x}_{t+1}) & \leq f (\mathbf{x}_{t}) - \eta_t \left\langle \nabla f (\mathbf{x}_{t}), \alpha_t \nabla_{\mathrm{r},t} + (1-\alpha_t) \nabla_{\mathrm{c},t} \right\rangle + \frac{\eta_t^2 L}{2} \left\| \alpha_t \nabla_{\mathrm{r},t} + (1-\alpha_t) \nabla_{\mathrm{c},t} \right\|^2. \label{eq_f_smoothness}
\end{align}
We denote by $Y_t \triangleq \{ \mathcal{I}_t, \bct, \brt \}$ 
% \PS{[we should also have $\{ \mathbf{u}_{t,i} \}$ in $\mathcal{Y}_t$]}, 
the randomness at step $t$.
We denote by $\mathcal{Y}_t$ the $\sigma$-algebra generated by $\{ Y_0, Y_1, \hdots, Y_{t-1} \}$. 
Taking conditional expectation in \eqref{eq_f_smoothness},
% \SL{[This might be a key quantity for you to lead the proof. Maybe you should give `Term A, B, C, D' underlining the key quantities, like Eq.(5) in \url{https://arxiv.org/pdf/1808.02941.pdf}. Then you state, we first bound xx as below. We then bound xxx in Prop. 4.2. ]}
\begin{align}
	\mathbb{E}_{Y_t} \left[ f (\mathbf{x}_{t+1}) \mid \mathcal{Y}_t \right] & \leq f (\mathbf{x}_{t}) + \eta_t \alpha_t \underbracket[0.8pt]{\left\langle -\nabla f (\mathbf{x}_{t}), \nabla f_{\muR} (\mathbf{x}_{t}) \right\rangle}_\text{\clap{I~}} +\eta_t (1 - \alpha_t) \underbracket[0.8pt]{\left\langle -\nabla f (\mathbf{x}_{t}), \Hat{\nabla}_{\mathrm{CGE}} f (\mathbf{x}_{t}) \right\rangle}_\text{\clap{II~}} \label{eq_f_smoothness_exp1} \\
	& + \eta_t^2 L \alpha_t^2  \underbracket[0.8pt]{\mathbb{E}_{Y_t} [ \left\| \nabla_{\mathrm{r},t} \right\|^2 \mid \mathcal{Y}_t ]}_\text{\clap{III~}} + \eta_t^2 L (1 - \alpha_t)^2 \underbracket[0.8pt]{\mathbb{E}_{Y_t} [ \left\| \nabla_{\mathrm{c},t} \right\|^2 \mid \mathcal{Y}_t ]}_\text{\clap{IV~}}, \nonumber
\end{align}
which holds since \textit{(i)} $\mathbb{E}_{Y_t} \left[ \nabla_{\mathrm{r},t} \mid \mathcal{Y}_t \right] = \nabla f_{\muR} (\mathbf{x}_{t})$, where recall from Section \ref{sec_prelim} that $f_{\muR}$ is a smooth approximation of $f$; 
\textit{(ii)} $\mathbb{E}_{Y_t} \left[ \nabla_{\mathrm{c},t} \mid \mathcal{Y}_t \right] = \Hat{\nabla}_{\mathrm{CGE}} f (\mathbf{x}_{t})$, which follows from \eqref{eq_CGE_I}, where $\Hat{\nabla}_{\mathrm{CGE}} f$ is the full-coordinate CGE of $f$;
\textit{(iii)} $ \lnr  \mathbf{a} + \mathbf{b}  \rnr ^2 \leq 2 \lnr \mathbf{a}  \rnr ^2 + 2 \lnr \mathbf{b}  \rnr ^2$.
% \SL{[Please let readers recall this in the earlier section.]}. 
Next, we bound the  terms I-IV of \eqref{eq_f_smoothness_exp1} in Proposition \ref{prop_inner_prod}, \ref{prop_bd_norm_sq_rge}. We begin with I, II.

\begin{prop}
\label{prop_inner_prod}
Suppose $f$ satisfies Assumption \ref{assum_lipschitz}, then the quantities I and II in \eqref{eq_f_smoothness_exp1} are upper bounded as
\begin{align}
    \mathrm{I} %&
    %= -\left\langle \nabla f (\mathbf{x}_{t}), \nabla f_{\muR} (\mathbf{x}_{t}) \right\rangle 
    \leq -\frac{3}{4} \left\| \nabla f (\mathbf{x}_{t}) \right\|^2 + \frac{(\muR d L)^2}{4}, %\nonumber \\
    \qquad \mathrm{II}  %&
    %= -\left\langle \nabla f (\mathbf{x}_{t}), \Hat{\nabla}_{\mathrm{CGE}} f (\mathbf{x}_{t}) \right\rangle 
    \leq -\frac{3}{4} \left\| \nabla f (\mathbf{x}_{t}) \right\|^2 + L^2 d \muC^2,\nonumber
\end{align}
where recall that 
% \SL{
$\muR$ and $\muC$ are the smoothing parameters, respectively for RGE and CGE.
% , and $d$ is the problem dimension.}
\end{prop}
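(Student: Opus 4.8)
The plan is to treat I and II uniformly: each is an inner product between $-\nabla f(\mathbf{x}_t)$ and a \emph{biased} surrogate of the true gradient---namely $\nabla f_{\muR}(\mathbf{x}_t)$ for I and the full-coordinate central-difference estimate $\Hat{\nabla}_{\mathrm{CGE}} f(\mathbf{x}_t)$ for II. In each case I would add and subtract $\nabla f(\mathbf{x}_t)$ to isolate the ``good'' descent term $-\|\nabla f(\mathbf{x}_t)\|^2$, then control the leftover cross-term by Young's inequality, paying for it with a squared-bias term that I bound separately.

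Concretely, writing $\mathbf{b}_{\mathrm{r}} = \nabla f_{\muR}(\mathbf{x}_t) - \nabla f(\mathbf{x}_t)$ and $\mathbf{b}_{\mathrm{c}} = \Hat{\nabla}_{\mathrm{CGE}} f(\mathbf{x}_t) - \nabla f(\mathbf{x}_t)$, I get $\mathrm{I} = -\|\nabla f(\mathbf{x}_t)\|^2 - \langle \nabla f(\mathbf{x}_t), \mathbf{b}_{\mathrm{r}}\rangle$, and similarly for II with $\mathbf{b}_{\mathrm{c}}$. Applying Young's inequality in the form $-\langle \mathbf{a}, \mathbf{b}\rangle \le \tfrac14\|\mathbf{a}\|^2 + \|\mathbf{b}\|^2$ (the constant $\tfrac14$ is chosen precisely to produce the stated $\tfrac34$ coefficient) yields $\mathrm{I} \le -\tfrac34 \|\nabla f(\mathbf{x}_t)\|^2 + \|\mathbf{b}_{\mathrm{r}}\|^2$ and $\mathrm{II} \le -\tfrac34\|\nabla f(\mathbf{x}_t)\|^2 + \|\mathbf{b}_{\mathrm{c}}\|^2$. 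It then remains only to bound the two bias norms.

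For the CGE bias $\mathbf{b}_{\mathrm{c}}$ I would argue coordinate-by-coordinate: each entry $(\Hat{\nabla}_{\mathrm{CGE}} f)_i$ is the central finite difference $[f(\mathbf{x}_t + \muC \mathbf{e}_i) - f(\mathbf{x}_t - \muC \mathbf{e}_i)]/(2\muC)$, and by two applications of the quadratic bound implied by the $L$-Lipschitz gradient (Assumption \ref{assum_lipschitz}), one along $+\mathbf{e}_i$ and one along $-\mathbf{e}_i$, the per-coordinate error is at most $\tfrac{L\muC}{2}$ in magnitude. Summing the squared errors over the $d$ coordinates gives $\|\mathbf{b}_{\mathrm{c}}\|^2 \le \tfrac{L^2 d \muC^2}{4} \le L^2 d \muC^2$, matching the bound on II. For the smoothing bias $\mathbf{b}_{\mathrm{r}}$, I would invoke the standard bias estimate for the sphere-smoothed surrogate $f_{\muR}$ under an $L$-Lipschitz gradient, namely $\|\nabla f_{\muR}(\mathbf{x}_t) - \nabla f(\mathbf{x}_t)\| \le \tfrac{\muR L d}{2}$ (cf.\ \cite[Lemma~4.1]{gao18information}), so that $\|\mathbf{b}_{\mathrm{r}}\|^2 \le \tfrac{(\muR d L)^2}{4}$, giving the bound on I.

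The only genuinely nontrivial ingredient is the smoothing-bias estimate for $\mathbf{b}_{\mathrm{r}}$: it hinges on the specific smoothing distribution (uniform on the unit sphere) and the resulting dimension factor $d$, and is the one place where I would lean on an established lemma rather than a one-line computation. Everything else---the Young's-inequality split and the coordinate-wise central-difference error---is elementary, so I expect the sphere-smoothing bias bound to be the crux of the argument.
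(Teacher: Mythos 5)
Your proof is correct and follows essentially the same route as the paper's: both reduce I and II to $-\tfrac{3}{4}\left\| \nabla f (\mathbf{x}_{t}) \right\|^2$ plus the squared bias of the respective surrogate gradient (the paper via the polarization identity $-\langle \mathbf{a},\mathbf{b}\rangle = -\tfrac{1}{2}\left( \|\mathbf{a}\|^2+\|\mathbf{b}\|^2-\|\mathbf{a}-\mathbf{b}\|^2 \right)$, you via Young's inequality on the cross term, which is equivalent algebra), and then both invoke the same sphere-smoothing bias bound $\left\| \nabla f_{\muR}(\mathbf{x}_t)-\nabla f(\mathbf{x}_t) \right\| \le \muR d L/2$ for I. For II the paper simply cites an external lemma giving $\left\| \nabla f(\mathbf{x}_t) - \Hat{\nabla}_{\mathrm{CGE}} f(\mathbf{x}_t) \right\|^2 \le L^2 d \muC^2$, whereas you derive the (slightly tighter) coordinate-wise central-difference bound $L^2 d \muC^2/4$ directly---a derivation the paper itself carries out in its CGE variance proof---so the two arguments coincide in substance.
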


\begin{proof}
See Appendix \ref{proof_prop_inner_prod}.
\end{proof}

\noindent In first-order problems \cite{beck17first_book}, given an unbiased estimator ($\tilde{\nabla} f$) of the gradient $\nabla f$, we simply get $\mathbb{E} \langle \nabla f (\mathbf{x}), \tilde{\nabla} f (\mathbf{x}) \rangle = \lnr \nabla f (\mathbf{x})  \rnr ^2$. 
However, $\nabla_{\mathrm{r},t}$ (RGE) and $\nabla_{\mathrm{c},t}$ (CGE) in Algorithm \ref{Algo_zo_hgd} could be biased estimates of $\nabla f$. 
Proposition \ref{prop_inner_prod} bounds the deviation of the inner products I, II from $ - \lnr  \nabla f (\mathbf{x})  \rnr ^2$, in terms of the respective smoothness parameters $\muC, \muR$. 
We next bound III, IV in \eqref{eq_f_smoothness_exp1}.

\begin{prop}
\label{prop_bd_norm_sq_rge}
Suppose $f$ satisfies Assumption \ref{assum_lipschitz}, \ref{assum_var_bound}, then the quantities III and IV in \eqref{eq_f_smoothness_exp1} are upper bounded as
% \SL{Suppose that $f$ xxx, then the quantity III in xxx is upper bounded as}
% \SL{[You should present the eventual results. And the intermediate steps should be in proof.]}
% \SL{Add IV here. Meantime, write and state in a non-trivial way.}
\begin{align}
	\mathrm{III} &= \mathbb{E}_{Y_t} [ \left\| \nabla_{\mathrm{r},t} \right\|^2 \mid \mathcal{Y}_t ] \leq \left[ 2 + \frac{4}{|\br|} \left( 1 + \frac{d}{\nR} \right) \right] \left\| \nabla f (\mathbf{x}) \right\|^2 + \frac{4 \sigma^2}{|\br|} \left( 1 + \frac{d}{\nR} \right) + \left( 1 + \frac{2}{|\br|} + \frac{2}{\nR |\br|} \right) \frac{\muR^2 L^2 d^2}{2}, \nonumber \\
	\mathrm{IV} &= \mathbb{E}_{Y_t} [ \left\| \nabla_{\mathrm{c},t} \right\|^2 \mid \mathcal{Y}_t ] \leq \sum_{i=1}^d \frac{1}{p_{t,i}} \Big[ 2 \left( \nabla f (\mathbf{x}_{t}) \right)_i^2 + \frac{3 \zeta^2}{|\bct|} + \frac{L^2 \muC^2}{2} \left( 1 + \frac{3}{|\bct|} \right) \Big]. \nonumber
\end{align}
\end{prop}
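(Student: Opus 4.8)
The plan is to bound the two second moments separately, leaning on the variance bounds already established in Proposition \ref{prop_bd_var_RGE_CGE}. For term III I would start from the elementary inequality $\lnr \mathbf{a} \rnr^2 \leq 2 \lnr \mathbf{a} - \mathbf{b} \rnr^2 + 2 \lnr \mathbf{b} \rnr^2$ applied with $\mathbf{a} = \nabla_{\mathrm{r},t}$ and $\mathbf{b} = \nabla f(\mathbf{x}_t)$. Taking conditional expectation gives $\mathbb{E}_{Y_t}[\lnr \nabla_{\mathrm{r},t} \rnr^2 \mid \mathcal{Y}_t] \leq 2\,\mathbb{E}_{Y_t}[\lnr \nabla_{\mathrm{r},t} - \nabla f(\mathbf{x}_t) \rnr^2 \mid \mathcal{Y}_t] + 2 \lnr \nabla f(\mathbf{x}_t) \rnr^2$, since $\nabla f(\mathbf{x}_t)$ is $\mathcal{Y}_t$-measurable. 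Substituting the RGE variance bound \eqref{eq_bd_var_RGE} and collecting the coefficients of $\lnr \nabla f \rnr^2$ reproduces the claimed bound on III exactly; this step is purely mechanical, and the constants (the factors $2$, $4$, and the halved $\muR^2 L^2 d^2$ term) match by inspection.

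For term IV the direct route is cleaner than passing through the variance bound \eqref{eq_bd_var_CGE}, because $\nabla_{\mathrm{c},t}$ is assembled from mutually \emph{orthogonal} coordinate components. Writing $\nabla_{\mathrm{c},t} = \sum_{i=1}^d \frac{I(i \in \mathcal{I}_t)}{p_{t,i}} \Hat{\nabla}_{\mathrm{CGE}} F_i(\mathbf{x}_t; \bct)$ with each summand aligned to the orthonormal $\mathbf{e}_i$, and using $I(i \in \mathcal{I}_t)^2 = I(i \in \mathcal{I}_t)$, I would first take the expectation over the coordinate set $\mathcal{I}_t$. Since $\mathrm{Pr}(i \in \mathcal{I}_t) = p_{t,i}$, the cross terms vanish and each coordinate picks up a factor $1/p_{t,i}$, yielding $\mathbb{E}_{\mathcal{I}_t} \lnr \nabla_{\mathrm{c},t} \rnr^2 = \sum_{i=1}^d \frac{1}{p_{t,i}} (\Hat{\nabla}_{\mathrm{CGE}} F_i(\mathbf{x}_t; \bct))^2$. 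Here one must be careful that $\mathbf{p}_t$ is itself random, as it is computed from the probe $\nabla_{\mathrm{r},t}$; I would therefore condition on $\mathbf{p}_t$ throughout. This is legitimate because $\bct$ is drawn independently of $\brt$, so conditioning on $\mathbf{p}_t$ (equivalently on $\brt$) leaves the coordinate sampling and the CGE mini-batch as the only remaining, mutually independent, randomness.

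It then remains to bound the per-coordinate second moment $\mathbb{E}_{\bct}[(\Hat{\nabla}_{\mathrm{CGE}} F_i(\mathbf{x}_t; \bct))^2]$ by $2 (\nabla f(\mathbf{x}_t))_i^2 + \frac{3 \zeta^2}{|\bct|} + \frac{L^2 \muC^2}{2}(1 + \frac{3}{|\bct|})$, after which summing against $1/p_{t,i}$ gives IV. For this I would decompose into the squared mean and the mini-batch variance. The mean is the central-difference quotient of $f$ along $\mathbf{e}_i$; the smoothness in Assumption \ref{assum_lipschitz} bounds its deviation from $(\nabla f(\mathbf{x}_t))_i$ by $L\muC/2$, and $(a+b)^2 \leq 2a^2 + 2b^2$ converts this into the $2(\nabla f)_i^2 + \frac{L^2 \muC^2}{2}$ contribution (the ``$1$'' inside the last bracket). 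The variance equals $1/|\bct|$ times the per-sample variance of the finite difference, which I would bound by its second moment around $(\nabla f(\mathbf{x}_t))_i$ and split into the stochastic-gradient deviation, controlled by $\zeta^2$ through Assumption \ref{assum_var_bound}, and the per-realization finite-difference error, again at most $L\muC/2$ by smoothness; together these are absorbed into the $\frac{3}{|\bct|}(\zeta^2 + \frac{L^2 \muC^2}{2})$ term.

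I expect this last per-coordinate estimate for IV to be the main obstacle: it requires keeping the mean/variance bookkeeping clean while simultaneously tracking two distinct error sources (stochastic-gradient noise and finite-difference bias) and the non-uniform weights $p_{t,i}$, and in particular it requires carefully justifying the conditioning on the data-dependent probability vector $\mathbf{p}_t$ and applying the smoothness bound to each realization $F(\cdot; \xi)$ rather than only to $f$. Term III, by contrast, is immediate once \eqref{eq_bd_var_RGE} is available.
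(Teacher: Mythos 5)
Your proposal is correct and takes essentially the same route as the paper's own proof: term III via the identical $\lnr \mathbf{a} \rnr^2 \leq 2\lnr \mathbf{a}-\mathbf{b} \rnr^2 + 2\lnr \mathbf{b} \rnr^2$ split followed by \eqref{eq_bd_var_RGE}, and term IV via the same direct computation the paper also prefers over recycling \eqref{eq_bd_var_CGE} --- orthogonality of the coordinate-aligned components, $\mathbb{E}[I^2(i \in \mathcal{I}_t)] = p_{t,i}$, a mean--variance decomposition of the mini-batch average, and per-coordinate bounds from smoothness and Assumption \ref{assum_var_bound}. Your two-term split of the per-sample variance gives a marginally tighter constant ($2\zeta^2 + L^2\muC^2/2$ in place of the paper's three-term split yielding $3\zeta^2 + 3L^2\muC^2/2$) that still sits under the stated bound, and your explicit conditioning on the data-dependent $\mathbf{p}_t$ makes rigorous a point the paper leaves implicit; both are refinements of detail, not a different argument.
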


\begin{proof}
See Appendix \ref{proof_prop_bd_norm_sq_rge}.
% \SL{[This sounds that bounding the quantities in (15) are quite trivial.]}
\end{proof}
\noindent The bounds in Proposition \ref{prop_bd_norm_sq_rge} are expressed in terms of the gradient norm, variance $\sigma^2$, and the parameters which characterize RGE and CGE: number of random directions $\nR$, importance sampling probabilities $\{ p_{t,i} \}$, batch-sizes $|\bc| $ and $ |\br|$, and the smoothness parameters $\muR$ and  $\muC$.
If $|\br| \to \infty$, then $\Hat{\nabla}_{\mathrm{RGE}} \to \nabla f_{\muR}$ as seen in Sec.\,\ref{sec_HGE}, and the upper bound in Proposition \ref{prop_bd_norm_sq_rge} reduces to $2 \mathbb{E} \left\| \nabla f (\mathbf{x}) \right\|^2 + \muR^2 L^2 d^2/2$. 
This is precisely the bound acquired using the exact ZO gradient estimator $\nabla f_{\muR}$ \cite[Lemma~1]{liu2018zeroth}. Here
$\muR^2 L^2 d^2/2$ quantifies the inevitable bias due to the use of the ZO gradient estimator $\nabla f_{\muR}$. 

Now that we have bounded all the terms in \eqref{eq_f_smoothness_exp1}, we are ready to present the main result of this section. We define $\bar{P}_T = \frac{1}{dT} \sum_{t=0}^{T-1} \sum_{i=1}^d \frac{1}{p_{t,i}}$, and $\dnr = 1 + d/\nR$.

\begin{theorem}
\label{thm_nonconvex}
Suppose Assumption \ref{assum_lipschitz} and \ref{assum_var_bound} hold, and the set of coordinate-wise probabilities $\{ p_{t,i} \}$ satisfy $p_{t,i} \geq c_t > 0$, for all $i, t$. 
We choose %small enough values for
the smoothing parameters $\muC, \muR$ such that $\muC = (\muR \sqrt{d})/2$ and $\muC L \sqrt{d} \leq \sigma$, where $\sigma$ is the variance.
We take constant step sizes $\eta_t = \eta \leq \frac{1}{24 L} \min \{ 3 c_t, 1/\dnr \}$ for all $t$, and combination coefficients $\alpha_t = \alpha = \left[ 1 + (\dnr/\bar{P}_T) \right]^{-1}$ for all $t$. 
% $ = \alpha^* = \left[ 1 + \frac{d^2}{\nR \bar{P}_T} \right]^{-1} \forall \ t$, 
% we get
% {
% \small
% \begin{align}
% 	\frac{1}{T} \sum_{t=0}^{T-1} \mathbb{E} \left\| \nabla f (\mathbf{x}_{t}) \right\|^2 & \leq O \left( \sqrt{ \frac{\muR^2 d^2 L^2 + (d \sigma^2/\nR)}{T \left[1 + (d/(\nR \bar{P}_T)) \right]}} + L^2 d \muC^2 \right). \nonumber
% \end{align}
% 
With $\muC = \mathcal{O} \left( ( \dnr/(d^2 T))^{1/4} ( 1 + \dnr/\bar{P}_T )^{-1/4} \right)$, we obtain
 \begin{align}
	\mathbb{E} \left\| \nabla f (\bar{\mathbf{x}}_T) \right\|^2 \leq \mathcal{O} \left( \sqrt{ \frac{\dnr}{T} \frac{1}{1 + \dnr/\bar{P}_T} } \right). \label{eq_thm_nonconvex}
\end{align}
\end{theorem}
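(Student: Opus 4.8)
The plan is to convert the conditional one-step bound \eqref{eq_f_smoothness_exp1} into a descent recursion whose coefficient on $\lnr \nabla f(\mathbf{x}_t) \rnr^2$ is strictly negative, and then telescope. First I would insert the four estimates of Proposition \ref{prop_inner_prod} (for I, II) and Proposition \ref{prop_bd_norm_sq_rge} (for III, IV) into \eqref{eq_f_smoothness_exp1}. The two inner-product terms each carry $-\tfrac{3}{4}\lnr \nabla f(\mathbf{x}_t)\rnr^2$, so weighted by $\eta\alpha$ and $\eta(1-\alpha)$ they combine to exactly $-\tfrac{3}{4}\eta\lnr\nabla f(\mathbf{x}_t)\rnr^2$, independent of $\alpha$. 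The squared-norm terms III and IV contribute positive multiples of $\lnr\nabla f(\mathbf{x}_t)\rnr^2$ of order $\eta^2 L$; for the CGE term I would use the hypothesis $p_{t,i}\ge c_t$ to bound $\sum_i p_{t,i}^{-1}(\nabla f(\mathbf{x}_t))_i^2 \le c_t^{-1}\lnr\nabla f(\mathbf{x}_t)\rnr^2$, and for the RGE term I would absorb the factor $2 + 4\dnr/|\brt|$ into $\mathcal{O}(\dnr)$ (using $\dnr\ge 1$). The step-size budget $\eta \le \tfrac{1}{24L}\min\{3c_t, 1/\dnr\}$ is precisely calibrated so that these two positive contributions are each at most $\tfrac{\eta}{4}\lnr\nabla f(\mathbf{x}_t)\rnr^2$, leaving a net coefficient no larger than $-\tfrac{\eta}{4}$.

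Next I would take total expectation via the tower property over $\mathcal{Y}_t$, telescope from $t=0$ to $T-1$, and use that $f$ is bounded below (say by $f^\star$) to obtain $\tfrac{\eta}{4}\sum_{t=0}^{T-1}\mathbb{E}\lnr\nabla f(\mathbf{x}_t)\rnr^2 \le f(\mathbf{x}_0)-f^\star + \sum_{t}E_t$, where $E_t$ collects the remaining (non-gradient) error terms. Dividing by $\eta T/4$ and recognizing that $\bar{\mathbf{x}}_T$ is drawn uniformly from the iterates, so $\mathbb{E}\lnr\nabla f(\bar{\mathbf{x}}_T)\rnr^2 = \tfrac{1}{T}\sum_t\mathbb{E}\lnr\nabla f(\mathbf{x}_t)\rnr^2$, yields a bound of the form $\tfrac{4(f(\mathbf{x}_0)-f^\star)}{\eta T} + \tfrac{4}{\eta T}\sum_t E_t$. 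The error terms split into two groups: a \emph{bias} group of order $L^2 d\muC^2$ (here $\muC = \muR\sqrt{d}/2$ makes the RGE bias $\muR^2 L^2 d^2$ equal in order to the CGE bias $L^2 d\muC^2$, and $\muC L\sqrt{d}\le\sigma$ keeps it below the variance), and a \emph{variance} group of order $\eta L\sigma^2[\alpha^2\dnr + (1-\alpha)^2\bar{P}_T]$, where summing the per-step quantities $\sum_i p_{t,i}^{-1}=d\bar{P}_t$ over $t$ produces $\bar{P}_T$ and $\sigma^2 = d\zeta^2$ is used to convert coordinate-wise variances.

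The decisive algebraic step is to substitute the prescribed $\alpha = [1 + \dnr/\bar{P}_T]^{-1}$, which is exactly the minimizer of $\alpha^2\dnr + (1-\alpha)^2\bar{P}_T$; at this value the bracket collapses to $\dnr\bar{P}_T/(\dnr+\bar{P}_T) = \dnr/(1+\dnr/\bar{P}_T)$, precisely the quantity appearing under the square root in \eqref{eq_thm_nonconvex}. The bound then reads $\mathbb{E}\lnr\nabla f(\bar{\mathbf{x}}_T)\rnr^2 \lesssim \tfrac{f(\mathbf{x}_0)-f^\star}{\eta T} + \eta L\sigma^2 V + L^2 d\muC^2$ with $V = \dnr/(1+\dnr/\bar{P}_T)$. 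Balancing the first two terms by choosing $\eta \asymp \sqrt{(f(\mathbf{x}_0)-f^\star)/(T L\sigma^2 V)}$ gives $\mathcal{O}(\sqrt{V/T})$, and finally selecting $\muC = \mathcal{O}((\dnr/(d^2T))^{1/4}(1+\dnr/\bar{P}_T)^{-1/4})$ forces $L^2 d\muC^2 = \mathcal{O}(\sqrt{V/T})$ as well, so all three terms are of the claimed order $\sqrt{(\dnr/T)/(1+\dnr/\bar{P}_T)}$.

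I expect the main obstacle to be the bookkeeping of the first step: verifying that, after substituting all four bounds, the step-size budget genuinely dominates both positive $\lnr\nabla f\rnr^2$ contributions \emph{simultaneously} (the $\min\{3c_t,1/\dnr\}$ structure is what handles the CGE and RGE quadratic terms separately), and that the $\alpha$-dependence of the variance group is tracked cleanly even though the constant $\alpha$ is tuned to the time-averaged $\bar{P}_T$ rather than the per-step $\bar{P}_t$ (the constancy of $\alpha$ makes the sum $\sum_t (1-\alpha)^2\bar{P}_t = (1-\alpha)^2 T\bar{P}_T$ factor out). A secondary point is the consistency of the two conditions on $\eta$: the balancing choice must respect the budget $\eta \le \tfrac{1}{24L}\min\{3c_t,1/\dnr\}$, which holds once $T$ is sufficiently large, so the stated rate is understood asymptotically in $T$.
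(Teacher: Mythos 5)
Your proposal is correct and follows essentially the same route as the paper's own proof: substituting Propositions \ref{prop_inner_prod} and \ref{prop_bd_norm_sq_rge} into \eqref{eq_f_smoothness_exp1}, using the step-size budget to keep a strictly negative net coefficient on $\lnr \nabla f(\mathbf{x}_t) \rnr^2$, telescoping, matching the RGE/CGE biases via $\muC = \muR\sqrt{d}/2$ with $\sigma$ dominating, minimizing $\alpha^2 \dnr + (1-\alpha)^2 \bar{P}_T$ at the prescribed $\alpha$, and then balancing $\eta$ and $\muC$ to obtain the stated rate. Your closing remark that the balancing choice of $\eta$ must be compatible with the budget $\eta \leq \frac{1}{24L}\min\{3c_t, 1/\dnr\}$ (valid for $T$ large) is a point the paper passes over only implicitly, but it does not alter the argument.
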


\begin{proof}
See Appendix \ref{proof_thm_nonconvex}.
\end{proof}

We compare ZO-HGD with several existing methods in Table \ref{table_comparison}, in terms of the allowed smoothing parameter values, convergence rate and FQC.
As illustrated, the smoothing parameters values
% \SL{[check notations on $\muC$.]}
for ZO-HGD are less restrictive than other ZO algorithms. We present more insights into Theorem\,\ref{thm_nonconvex} in the next subsection.
%The $O(1/d)$ bound on step-size $\eta$ in Theorem \ref{thm_nonconvex} is also common in the literature (with the exception of \cite{zo_spider_liang19icml}). 
% The combination coefficient $\alpha$ is chosen as discussed in Section \ref{sec_HGE}

\begin{table}[ht]
\begin{center}
\begin{tabular}{|c|c|c|c|}
\cline{1-4}
Method & \begin{tabular}[c]{@{}c@{}} Smoothing\\parameter \end{tabular} & \begin{tabular}[c]{@{}c@{}} Convergence\\rate  \end{tabular} & FQC \\
\cline{1-4}
ZO-GD \cite{nesterov17grad_free} & $O \left( 1/\sqrt{d T} \right)$ & $O \left( d/T \right)$ & $O \left( |\mathcal{D}| T \right)^3$ \\
\cline{1-4}
ZO-SGD \cite{Ghadimi_Siam_2013_SGD}
 & $O \left( 1/d \sqrt{b T} \right)$ & $O \left( \sqrt{ d/(b T)} \right)$ & $O \left( T b \right)$ \\
%  \begin{tabular}[c]{@{}c@{}} ZO-SGD \\ \cite{Ghadimi_Siam_2013_SGD} \end{tabular}
%  & $O \left( 1/d \sqrt{T} \right)$ & $O \left( \sqrt{ d/T} \right)$ & $O \left( T \right)$ \\
\cline{1-4}
% \begin{tabular}[c]{@{}c@{}} ZO-SCD \\ \cite{lian2016comprehensive} \end{tabular}
%  & $O \left( 1/\sqrt{T} + (dT)^{-1/4} \right)$ & $O \left( \sqrt{ d/T} \right)$ & $O \left( T \right)$ \\
ZO-SCD \cite{lian2016comprehensive}
& $O \left( 1/\sqrt{b T} + (d b T)^{-1/4} \right)$ & $O \left( \sqrt{ d/(b T)} \right)$ & $O \left( T b \right)$ \\
\cline{1-4}
ZO-SVRG \cite{liu2018zeroth} & $O \left( 1/\sqrt{d T} \right)$ & $O \left( d/T + 1/b \right)$ & \begin{tabular}[c]{@{}c@{}} $O \left( |\mathcal{D}| s + b s m \right)$ \\ $T = sm$ \end{tabular} \\
\cline{1-4}
ZO-signSGD \cite{liu2019signsgd} & $O \left( 1/\sqrt{d T} \right)$ & $O \left( \sqrt{d/T} + \sqrt{d/b} \right)$ & $O \left( b T \right)$ \\
\cline{1-4}
\begin{tabular}[c]{@{}c@{}} \textbf{Our work:} $\dnr \approx \bar{P}_T$, \\ $p_{t,i} = \nC/d, \forall \ t,i$ \end{tabular} & 
$ O \left( \frac{1}{d} \left( \frac{1}{T} \left( 1 + \frac{d}{\nR + \nC} \right) \right)^{1/4} \right)$
& $O \left( \sqrt{\frac{1}{T} \left( 1 + \frac{d}{\nR + \nC} \right)} \right)$ & $O (T (\nR + \nC))$ \\
\cline{1-4}
\begin{tabular}[c]{@{}c@{}} \textbf{Our work} \\ $\dnr \gg \bar{P}_T$ \end{tabular} & $O \left( \left( \bar{P}_T/d^2 T \right)^{1/4} \right)$ & $O \left(  \sqrt{ \bar{P}_T/T} \right)$ & $O (T \nC)$ \\
\cline{1-4}
\begin{tabular}[c]{@{}c@{}} \textbf{Our work} \\ $\dnr \ll \bar{P}_T$ \end{tabular} & 
$O \left( d^{-1} \left( \dnr/T \right)^{1/4}  \right)$ & $O \left( \sqrt{\dnr/T} \right)$ & $O (T \nR)$ \\
\cline{1-4}
\end{tabular}
\caption{\footnotesize{
Comparison of different ZO algorithms in terms of smoothing parameter, convergence error, and function query cost. Here $d$ is the problem dimension, $T$ is the total number of iterations in the algorithm.
We have stated generalized results for ZO-SGD (with $b$-point gradient estimator (GE)) and ZO-SCD (with $b$ coordinate GE).
For ZO-SVRG and ZO-signSGD, $b$ is the number of random direction vectors used to compute a multi-point GE.
ZO-signSGD only converges to a neighborhood of the stationary point.
% ZO-SCD uses a $2$-point gradient estimator,
$|\mathcal{D}|$ is the size of the entire dataset.
Recall in this work, $\nR$ is the number of random direction vectors used to compute RGE, $\nC$ is the number of coordinates used to compute CGE, $\dnr = 1 + d/\nR$.}}
\label{table_comparison}
\end{center}
\end{table}

\paragraph{Tradeoff between RGE and CGE in ZO-HGD.}
% \SL{Consider a more informative title.}}
% \SL{why two special cases paragraphs?}
% \SL{As I told you before, presentation should follow a logic. I did not get useful information from the following paragraph. You should clearly state what special cases that you are looking into, and they are controlled by what parameters. And convergence rates for those special can also be acquired from Theorem 1. If you need to pose some assumption, you can say them but in a clear and organized way. E.g., To gain insights, we consider specific choices of xx. }
Next, we consider some special cases of Theorem \ref{thm_nonconvex}, based on the values of $\{ p_{t,i} \}$ and $\nR$. 
Large $\nR$ leads to a more accurate RGE \eqref{eq_bd_var_RGE}, while large values of $\{ p_{t,i} \}$ ensure a more accurate CGE \eqref{eq_bd_var_CGE}. 
The tradeoff between RGE and CGE is captured by the ratio $\dnr / \bar{P}_T$ in \eqref{eq_thm_nonconvex}. 
Table \ref{table_comparison} highlights the performance of ZO-HGD in three regimes, when $\dnr / \bar{P}_T$ is $\ll 1, \gg 1$, and $\approx 1$. 
% The comparison is made in terms of smoothing parameter values $\muC$, convergence rates, and function query costs.
We discuss one of the cases in detail, the remaining follow similar reasoning. 
% See Appendix \ref{app_special_case_nonconvex} for complete details.

\subsubsection{Regime 1: $\dnr = 1 + \frac{d}{\nR} \gg \bar{P}_T$.}
Since $\bar{P}_T$ is the mean of inverse probability values $\{ 1/p_{t,i} \}$, $\frac{1}{\bar{P}_T} \gg \frac{\nR}{d}$ implies that on average, sampling probabilities are much greater than $\nR/d$. 
In other words, the per-iteration query budget of CGE ($\nC$) is much higher compared to RGE, and $\alpha \to 0$. 
% Since $\frac{1}{\bar{c}_T} \geq \frac{d}{\nC}$, here $n \ll \nC \leq d$. 
From Theorem \ref{thm_nonconvex}, the smoothing parameter $\muC = O ( (\bar{P}_T/d^2 T)^{1/4} )$ leads to convergence rate  $\mathbb{E} \lnr \nabla f (\bar{\mathbf{x}}_T)  \rnr ^2 \leq O(  \sqrt{ \bar{P}_T/T } ).$

% The FQC to achieve $\mathbb{E} \lnr \nabla f (\bar{\mathbf{x}}_T)  \rnr ^2 \leq \epsilon$ is given by $O(T \cdot\nR+ \sum_{t=0}^{T-1} \sum_{i=1}^d p_{t,i}) = O(T \cdot (\nR+ \nC))$. 

In the special case of uniform distribution for CGE, i.e., $p_{t,i} = \nC/d$, $\bar{P}_T = d/\nC$.
Consequently, the convergence rate is
 $\mathbb{E} \lnr \nabla f (\bar{\mathbf{x}}_T)  \rnr ^2 \leq O (  \sqrt{ d/(\nC T)} )$.
Also, $\frac{d}{\nR} \gg \bar{P}_T$ implies $\nC \gg \nR$.
Hence, FQC to achieve $\mathbb{E} \lnr \nabla f (\bar{\mathbf{x}}_T)  \rnr ^2 \leq \epsilon$ is  $O(T \cdot \nC) = O(d/\epsilon^2)$.
Naturally, both the convergence rate and FQC are dominated by CGE.
For $\nC = 1$ the performance reduces to that of ZO-SCD (see Table \ref{table_comparison}).
% For $\nC = 1$, $\muC = O(1/(dT)^{1/4})$ and  $\mathbb{E} \lnr \nabla f (\bar{\mathbf{x}}_T)  \rnr ^2 \leq O (  \sqrt{d/T} )$. Hence, we recover the performance of ZO-SCD.

The results for the other \textbf{two regimes}: 2) when $1 + \frac{d}{\nR} \ll \bar{P}_T$, and 3) when $1 + \frac{d}{\nR}$ and $\bar{P}_T$ are comparable, and are stated in Table \ref{table_comparison}. For more details, please see Appendix \ref{app_special_case_nonconvex}.

\section{ZO-HGD for Convex Optimization}
\label{sec_convex}
%\SL{Same comment for logic of the presentation applies to here.}
We now  analyze the convergence properties of Algorithm \ref{Algo_zo_hgd} in cases where 
%in addition to satisfying Assumptions \ref{assum_lipschitz}, \ref{assum_var_bound}, 
the objective function $f: \mathbb{R}^d \to \mathbb{R}$ is convex and strongly convex.
% Later in the section, we also discuss the strongly convex case briefly.
\begin{assump}
\label{assum_convexity}
The function $f$ is defined on a compact set and is convex, such that $f(\mathbf{y}) \geq f(\mathbf{x}) + \left\langle \nabla f(\mathbf{x}), \mathbf{y} - \mathbf{x} \right\rangle$, $\forall \ \mathbf{x}, \mathbf{y} \in \mathrm{ dom } f$.
% \begin{align*}
% 	f(\mathbf{y}) \geq f(\mathbf{x}) + \left\langle \nabla f(\mathbf{x}), \mathbf{y} - \mathbf{x} \right\rangle, \qquad \forall \ \mathbf{x}, \mathbf{y} \in \mathrm{ dom } f.
% \end{align*}
Further, the diameter of the set $\mathrm{ dom } f$ is bounded by $R$, i.e., $ \lnr  \mathbf{y} - \mathbf{x}  \rnr  \leq R$, $\forall \ \mathbf{x}, \mathbf{y} \in \mathrm{ dom } f$. 
\end{assump}

\begin{assump}
\label{assum_bound_grad}
$\left\| \nabla f(\mathbf{x}) \right\| \leq G, \forall \ \mathbf{x} \in \mathrm{ dom } f$ for constant $G$.
\end{assump}

\begin{assump}{($\bar{\sigma}$-\textit{Strong Convexity}).}
\label{assum_strong_convex}
$\forall \ \mathbf{x}, \mathbf{y} \in \mathrm{ dom } f$, $f$ satisfies $f(\mathbf{y}) \geq f(\mathbf{x}) + \left\langle \nabla f(\mathbf{x}), \mathbf{y}-\mathbf{x} \right\rangle + \frac{\bar{\sigma}}{2} \left\| \mathbf{y}-\mathbf{x} \right\|^2$.
\end{assump}
Assumption \ref{assum_bound_grad} is fairly standard in the convex optimization literature \cite{duchi15optimal_tit, stich18spars_SGD}.
We quantify the error in terms of the average difference between the expected function values at successive iterates and the optimal function value, $(1/T) \sum_{t=1}^T \left( \mathbb{E} f(\mathbf{x}_t) - f^* \right)$. 
% In the online convex optimization (OCO) literature, this quantity is known as \textit{regret} \cite{shalev11oco_book, hazan19oco_book}. 
For brevity, we only state the theorem for our convergence result.

\begin{theorem}
\label{thm_convex}
Suppose Assumption \ref{assum_lipschitz}, \ref{assum_var_bound}, \ref{assum_convexity}, \ref{assum_bound_grad} hold and the coordinate-wise probabilities $\{ p_{t,i} \}$ satisfy $p_{t,i} \geq \bar{c} > 0$ for all $i, t$. We define $\dnr = 1 + d/\nR$.
% We define $P_t = \frac{1}{d} \sum_{i=1}^d \frac{1}{p_{t,i}}$, $\bar{P}_T = \frac{1}{T} \sum_{t=0}^{T-1} P_t$, and $\frac{1}{\bar{c}_T} = \frac{1}{T} \sum_{t=0}^{T-1} \frac{1}{c_t}$.
\begin{enumerate}
    \item If the smoothing parameters $\muC, \muR$ are chosen such that $\muC = (\muR \sqrt{d})/2$, $\sigma \geq L \muC \sqrt{d}$. A suitably chosen $\alpha$ gives
    \begin{align}
	    & \frac{1}{T} \sum_{t=1}^T \mathbb{E} f(\mathbf{x}_t) - f^* \leq O \left( \sqrt{\frac{1}{T} \frac{\dnr}{ 1 + \bar{c} \dnr } } \right). \nonumber
    \end{align}
    \item If Assumption \ref{assum_strong_convex} also holds, the smoothing parameters $\muC, \muR$ are such that $\muR = \muC \sqrt{(d L)/\bar{\sigma}}$, and the step-size $\eta_t = \frac{8}{\bar{\sigma} (a+t)}$ with $a>1$. We define $\Hat{\mathbf{x}}_T = \frac{1}{S_T} \sum_{t=0}^{T-1} w_t \mathbf{x}_t$, where $w_t = (a+t)^2$, and $S_T = \sum_{t=0}^{T-1} w_t$. Then
    \begin{align}
	    \mathbb{E} f(\Hat{\mathbf{x}}_T) - f^* \leq O \left( \frac{1}{T} \frac{\dnr}{1 + \bar{c} \dnr} \right). \nonumber
    \end{align}
\end{enumerate}

\end{theorem}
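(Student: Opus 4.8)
The plan is to analyze both parts through the ``distance-to-optimum'' recursion rather than the descent inequality \eqref{eq_f_smoothness} used for Theorem \ref{thm_nonconvex}. Write $\mathbf g_t = \alpha \nabla_{\mathrm r,t} + (1-\alpha)\nabla_{\mathrm c,t}$ for the HGE step direction and let $\mathbf x^*$ be a minimizer of $f$. Squaring the update of line \ref{line_update},
\begin{align}
    \lnr \mathbf x_{t+1} - \mathbf x^* \rnr^2 = \lnr \mathbf x_t - \mathbf x^* \rnr^2 - 2\eta_t \langle \mathbf g_t, \mathbf x_t - \mathbf x^* \rangle + \eta_t^2 \lnr \mathbf g_t \rnr^2. \nonumber
\end{align}
Taking $\mathbb E[\,\cdot \mid \mathcal Y_t]$ and using $\mathbb E[\mathbf g_t \mid \mathcal Y_t] = \alpha \nabla f_{\muR}(\mathbf x_t) + (1-\alpha)\Hat{\nabla}_{\mathrm{CGE}} f(\mathbf x_t)$, I would split the inner product as $\langle \nabla f(\mathbf x_t), \mathbf x_t - \mathbf x^* \rangle$ plus a bias term $\langle \mathbb E[\mathbf g_t\mid\mathcal Y_t] - \nabla f(\mathbf x_t), \mathbf x_t - \mathbf x^* \rangle$. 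Since $\lnr \nabla f_{\muR}(\mathbf x_t)-\nabla f(\mathbf x_t)\rnr$ and $\lnr \Hat{\nabla}_{\mathrm{CGE}} f(\mathbf x_t)-\nabla f(\mathbf x_t)\rnr$ are of order $\muR$ and $\muC$ respectively (the same smoothing biases quantified inside Proposition \ref{prop_inner_prod}), Cauchy--Schwarz together with the diameter bound $\lnr \mathbf x_t-\mathbf x^*\rnr\le R$ (Assumption \ref{assum_convexity}) makes this bias term at most $O(R)$ times a quantity controlled by $\muR$ and $\muC$, which the stated smoothing relations render lower order.

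For the second-moment term I would bound $\mathbb E[\lnr \mathbf g_t\rnr^2\mid\mathcal Y_t]$ by exploiting that the two estimators are driven by independent randomness ($\brt$ versus $\bct,\mathcal I_t$): then $\mathbb E\lnr\mathbf g_t\rnr^2 = \lnr \mathbb E\mathbf g_t\rnr^2 + \alpha^2\,\mathrm{Var}(\nabla_{\mathrm r,t}) + (1-\alpha)^2\,\mathrm{Var}(\nabla_{\mathrm c,t})$, where the signal term obeys $\lnr\mathbb E\mathbf g_t\rnr^2\le G^2$ (Assumption \ref{assum_bound_grad}) at constant order and the two variances are controlled by $\mathrm{III}$ and $\mathrm{IV}$ of Proposition \ref{prop_bd_norm_sq_rge}, again replacing every $\lnr\nabla f(\mathbf x_t)\rnr^2$ by $G^2$. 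Crucially, $\mathrm{Var}(\nabla_{\mathrm r,t}) = O(\dnr)$ while $\mathrm{Var}(\nabla_{\mathrm c,t}) = O(1/\bar c)$ under $p_{t,i}\ge\bar c$. The ``suitably chosen'' $\alpha$ is then the inverse-variance (optimal convex) weight, for which the combined variance collapses to the harmonic form $O\!\lp \tfrac{\dnr\cdot(1/\bar c)}{\dnr + 1/\bar c}\rp = O\!\lp \tfrac{\dnr}{1+\bar c\,\dnr}\rp$; this is the same inverse-variance minimization behind \eqref{eq: alpha}, and since $\dnr\ge1$ and $\bar c\le1$ this factor is $\Omega(1)$, so it absorbs the order-one signal term.

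For part (1), convexity (Assumption \ref{assum_convexity}) gives $\langle\nabla f(\mathbf x_t),\mathbf x_t-\mathbf x^*\rangle \ge f(\mathbf x_t)-f^*$. Substituting the bounds above, rearranging to isolate $f(\mathbf x_t)-f^*$, telescoping $\lnr\mathbf x_t-\mathbf x^*\rnr^2$ over $t=1,\dots,T$ with constant $\eta$, and dividing by $2\eta T$ yields $\tfrac1T\sum_t(\mathbb E f(\mathbf x_t)-f^*) \le \tfrac{R^2}{2\eta T} + \tfrac{\eta}{2}\,O\!\lp\tfrac{\dnr}{1+\bar c\dnr}\rp + (\text{bias})$; balancing $\eta = \Theta(1/\sqrt T)$ against the variance factor and using $\muC=\muR\sqrt d/2$, $L\muC\sqrt d\le\sigma$ to keep the bias lower order gives the claimed $O(\sqrt{\tfrac1T\cdot\tfrac{\dnr}{1+\bar c\dnr}})$. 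For part (2), I would instead use $\bar\sigma$-strong convexity (Assumption \ref{assum_strong_convex}) to sharpen the lower bound to $\langle\nabla f(\mathbf x_t),\mathbf x_t-\mathbf x^*\rangle\ge f(\mathbf x_t)-f^*+\tfrac{\bar\sigma}{2}\lnr\mathbf x_t-\mathbf x^*\rnr^2$, producing the contraction $\mathbb E[\lnr\mathbf x_{t+1}-\mathbf x^*\rnr^2\mid\mathcal Y_t]\le (1-\bar\sigma\eta_t)\lnr\mathbf x_t-\mathbf x^*\rnr^2 - 2\eta_t(f(\mathbf x_t)-f^*) + \eta_t^2\,O\!\lp\tfrac{\dnr}{1+\bar c\dnr}\rp + (\text{bias})$. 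With $\eta_t=\tfrac{8}{\bar\sigma(a+t)}$ I would multiply by $w_t=(a+t)^2$ and sum: the weights are chosen precisely so that the consecutive distance coefficients satisfy $w_t\tfrac{1-\bar\sigma\eta_t}{2\eta_t}\le w_{t-1}\tfrac{1}{2\eta_{t-1}}$ (equivalently $5(a+t)^2+3(a+t)-1\ge0$, true for $a>1$), so the distance terms telescope to a non-positive residual, while $\sum_t w_t\eta_t = O(T^2)$ normalized by $S_T=\Theta(T^3)$ contributes $O(1/T)$. Jensen's inequality applied to $\Hat{\mathbf x}_T$ then gives $\mathbb E f(\Hat{\mathbf x}_T)-f^* = O\!\lp\tfrac1T\cdot\tfrac{\dnr}{1+\bar c\dnr}\rp$, after choosing $\muR=\muC\sqrt{dL/\bar\sigma}$ so the bias is dominated.

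The main obstacle is twofold. First, the smoothing biases of $\nabla f_{\muR}$ and $\Hat{\nabla}_{\mathrm{CGE}}f$ must be carried through both recursions and traded against the step-size and variance terms so they remain strictly lower order than the advertised rates; this is exactly what forces the specific smoothing-parameter relations ($\muC=\muR\sqrt d/2$, $L\muC\sqrt d\le\sigma$, and $\muR=\muC\sqrt{dL/\bar\sigma}$). Second, in part (2) the nonasymptotic verification of the telescoping condition for $w_t=(a+t)^2$ under $\eta_t=8/(\bar\sigma(a+t))$ is the step most sensitive to the constants (it is where $a>1$ and the factor $8$ enter), and establishing that the residual distance contribution is non-positive is the delicate part of the argument.
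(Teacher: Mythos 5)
Your proposal is correct and follows essentially the same route as the paper's proof in Appendix D: a distance/regret recursion for the update, the smoothing bias handled by Cauchy--Schwarz against the diameter $R$ (Propositions \ref{prop_convex_inner_prod_1}--\ref{prop_convex_inner_prod_2}), second moments bounded via Proposition \ref{prop_bd_norm_sq_rge} with $\lnr \nabla f \rnr \leq G$, the optimal $\alpha$ collapsing the combined variance to the harmonic form $\dnr/(1+\bar{c}\,\dnr)$, and, for the strongly convex part, the $(a+t)^2$-weighted averaging under $\eta_t = 8/(\bar{\sigma}(a+t))$, which the paper packages as Lemma \ref{lem_stich_paper} (adapted from Stich et al.) and you instead verify by checking the telescoping condition directly --- an equivalent computation.

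One imprecision worth fixing: $\nabla_{\mathrm{r},t}$ and $(\mathcal{I}_t,\nabla_{\mathrm{c},t})$ are \emph{not} driven by independent randomness, because the importance-sampling probabilities $\mathbf{p}_t$ are computed from $\nabla_{\mathrm{r},t}$ (line \ref{line_prob} of Algorithm \ref{Algo_zo_hgd}). Your exact variance decomposition survives nonetheless: conditionally on $\brt$ (hence on $\mathbf{p}_t$), the importance-weighted CGE is still unbiased for $\Hat{\nabla}_{\mathrm{CGE}} f(\mathbf{x}_t)$, so the cross-covariance vanishes by the tower property. Alternatively, the paper sidesteps the issue entirely by using $\lnr \mathbf{a}+\mathbf{b} \rnr^2 \leq 2 \lnr \mathbf{a} \rnr^2 + 2 \lnr \mathbf{b} \rnr^2$ (Proposition \ref{prop_convex_norm_bd}), which costs only a constant factor and is immaterial to the rate.
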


\begin{proof}
% The proof borrows some techniques from \cite{duchi15optimal_tit}. 
See Appendix \ref{app_convex} for details.
\end{proof}

% We use the smoothing parameter
% \begin{align*}
% \muC = O \left( \sqrt{\frac{1}{dT} \frac{\left( G^2 + \sigma^2 \right) \left( 1 + \frac{d}{\nR} \right) \left( \frac{G^2}{\bar{c}_T} + \bar{P}_T \frac{\sigma^2}{d} \right)}{\left( G^2 + \sigma^2 \right) \left( 1 + \frac{d}{\nR} \right) + \left( \frac{G^2}{\bar{c}_T} + \bar{P}_T \frac{\sigma^2}{d} \right)} } \right),
% \end{align*}
% and consider the following special case, analogous to Corollary \ref{cor_nonconvex_q}.
% , assume $\forall \ i, t$, $p_{t,i} = \nC/d$. 
% This means, on average, the coordinate-wise estimator is computed on $q-n$ coordinates (assuming $q-n \in \mathbb{N}$). Consequently, total number of function evaluations at each iteration are $2q$.

As in Section \ref{sec_HGD}, we consider the special case of uniform sampling, $p_{t,i} = \nC/d$ for all $i, t$. Let $q =\nR+ \nC$. For convex functions, the convergence rate reduces to $O( \sqrt{(1 + d/q)/T} )$.
For $O(q)$ function evaluations per iteration,
% ($O(\nR)$ for RGE, $O(\nC)$ for CGE)
this rate is order optimal \cite{duchi15optimal_tit}.
\noindent For strongly convex functions, the convergence rate reduces to $\mathbb{E} f(\Hat{\mathbf{x}}_T) - f^* \leq O ( (1 + d/q)/T )$. Note that this latter guarantee holds for a weighted average of iterates.

% \begin{cor}
% \label{cor_convex_q}
% Under the conditions of Theorem \ref{thm_convex}, if $p_{t,i} = | \mathcal{I} |/d, \forall \ i, t$, with $\alpha^* = (1 + \frac{\nC}{\nR} + \frac{\nC}{d} )^{-1}, \muC = O \left( \sqrt{\frac{1 + \frac{d}{q}}{dT}} \right)$, where $q =\nR+ \nC$, the convergence rate is
% 
% \begin{align*}
% 	\frac{1}{T} \sum_{t=1}^T & \left( \mathbb{E} f(\mathbf{x}_t) - f^* \right) \leq O \left( R \sqrt{\frac{1 + \frac{d}{q}}{T}} \right).
% \end{align*}
% 
% \end{cor}
% FQC for $\mathbb{E} \lnr \nabla f (\bar{\mathbf{x}}_T)  \rnr ^2 \leq \epsilon$ is given by $O(T \cdot\nR+ \sum_{t=0}^{T-1} \sum_{i=1}^d p_{t,i}) = O(T \cdot\nR+ T \cdot | \mathcal{I} |) = O(T q) = O(d/\epsilon^2)$. 

% \begin{align*}
% 	B &\triangleq 2 \alpha^2 \left[ 2 G^2 \left( 1 + \frac{d}{\nR} \right) + \left( 1 + \frac{1}{\nR} \right) \frac{\muR^2 L^2 d^2}{2} + \frac{d \sigma^2}{\nR} \right] \nonumber \\
% 	& \qquad + 2 (1-\alpha)^2 \left[ \frac{2 G^2}{\bar{c}} + \bar{P} (\zeta^2 + L^2 \muC^2) \right], \\
% 	a_t &\triangleq \mathbb{E} \left\| \mathbf{x}_{t} - \mathbf{x}^* \right\|^2.
% \end{align*}

\section{Experiments}
\label{sec_experiments}
In this section, we demonstrate the effectiveness of ZO-HGD via a real-world  application of generating \textit{adversarial examples} from a \textit{black-box} deep neural network (DNN) \cite{chen2017zoo}. Here  the adversarial examples are defined by    inputs
with
imperceptible perturbations crafted to mislead the DNN's prediction, and they provide a means of measuring the robustness of DNNs against adversarial perturbations \cite{Goodfellow2015explaining,carlini2017towards,xu2018structured}.

% In experiments, the DNN model consists of  
%  $5$ convolutional layers and $2$ fully connected layers and is trained for image classification over the CIFAR10 dataset \cite{krizhevsky2009learning}. Here we will focus on universal adversarial perturbation against multiple images~\cite{moosavi2017universal,chen2019zo}. 
We begin by formally presenting the   problem of generating black-box adversarial examples.
Let ($\mathbf{x},t$) denote a legitimate image $\mathbf{x}$ with the true label $t$. And  $\mathbf x^\prime = \mathbf x+ \boldsymbol{\delta}$ denotes an adversarial example with the adversarial perturbation $\boldsymbol{\delta}$.
Our goal is to design $\boldsymbol{\delta}$ for misclassifying $M$ images $\{ \mathbf x_i \}_{i=1}^M$ when a  DNN is used as a decision maker. This leads to
% Spurred by \cite{carlini2017towards}, we consider 
 the optimization problem \cite{carlini2017towards}
\begin{align}\label{eq: attack_general}
\begin{array}{ll}
    \displaystyle \minimize_{\boldsymbol \delta } & \frac{\lambda}{M} \sum_{i=1}^Mf_{\boldsymbol \theta}(\mathbf x_i + \boldsymbol \delta) 
    +  \lnr \boldsymbol{\delta}  \rnr _2^2
\end{array}
\end{align}
where 
$f_{\boldsymbol \theta}(\mathbf x_i + \boldsymbol \delta)$ is specified as  the C\&W untargeted attack  loss \cite{carlini2017towards} evaluated  on a  DNN model with parameters $\boldsymbol \theta$ , and $\lambda >0$ is a regularization parameter that strikes a balance between minimizing the attack loss and the $\ell_2$  distortion. 
To solve problem \eqref{eq: attack_general}, we consider the more realistic case in which the adversary does not have access to model parameters $\boldsymbol \theta$, and thus, it  optimizes $\boldsymbol \delta$   only through function evaluations. 
Moreover, if $M = 1$  in  problem \eqref{eq: attack_general}, then the resulting solution is known as   per-image adversarial perturbation \cite{Goodfellow2015explaining}. And 
if $M > 1$, then the solution provides the \textit{universal adversarial perturbation} applied to multiple benign images simultaneously \cite{moosavi2017universal}.
%  In problem \eqref{eq: attack_general}, we specify the loss function for untargeted attack \cite{carlini2017towards},
% $
% f(\mathbf x^\prime)=\max \{    Z(\mathbf x^\prime )_{t} - \max_{j \neq t}  Z(\mathbf x^\prime )_j, - \kappa \}
% $, where $Z(\mathbf x^\prime)_k$ denotes the prediction score of class $k$ given the input $\mathbf x^\prime$, and the   parameter  $\kappa > 0$ governs the gap between the confidence of the  predicted   label and the true label $t$. In experiments, we choose $\kappa = 0$, and the attack loss $f$ reaches the minimum value $0$ as the perturbation succeeds to fool the neural network. 

\begin{figure}[htb]
\begin{center}
\begin{tabular}{c}
% \hspace*{-0.15in} \includegraphics[width=.152\textwidth,height=!]{}  & \hspace*{-0.15in}
\includegraphics[width=.55\textwidth,height=!]{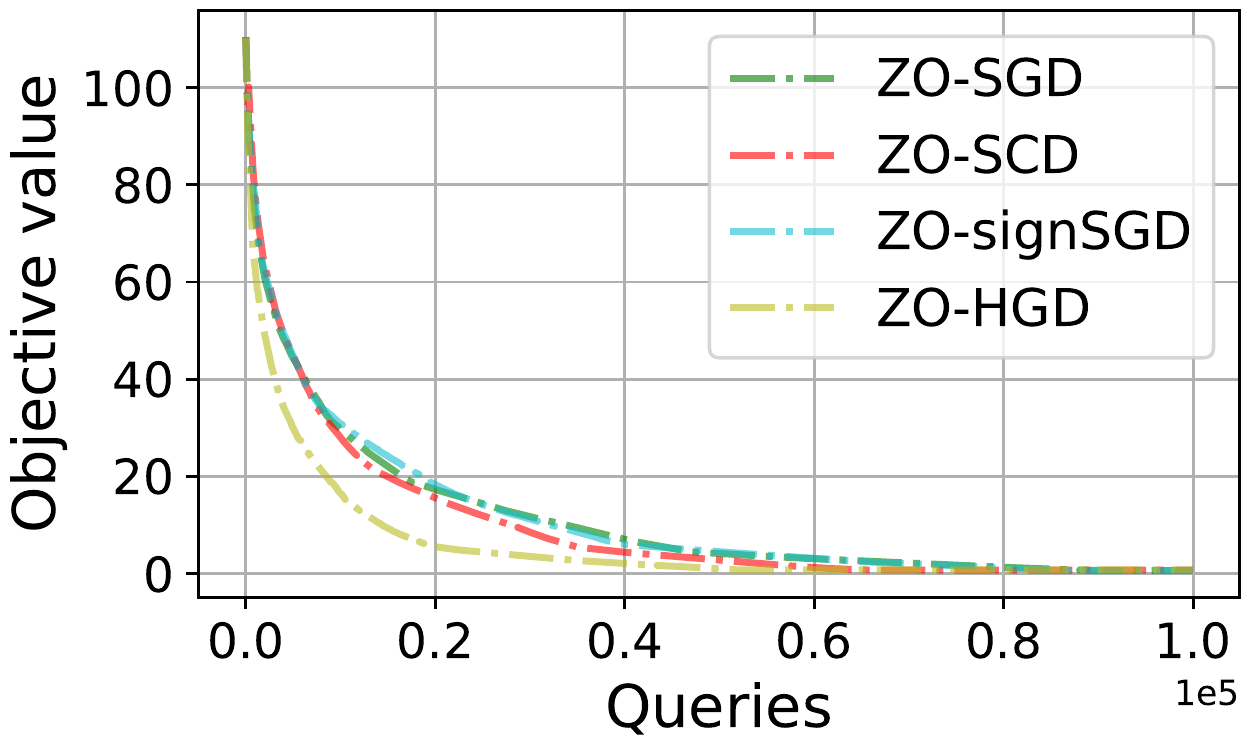} \\
\end{tabular}
\end{center}
% \vspace*{-0.2in}
\caption{\footnotesize{Averaged objective value when solving problem \eqref{eq: attack_general} over $50$ random trials versus the number of function queries.
}}
  \label{fig: obj_value_universal_attack}
\end{figure}

\begin{figure}[htb]
\begin{center}
\begin{tabular}{cc}
% \hspace*{-0.15in} \includegraphics[width=.152\textwidth,height=!]{figures/universal_attack/Loss_over_q2.pdf}  & \hspace*{-0.15in}
\includegraphics[width=.3\textwidth,height=!]{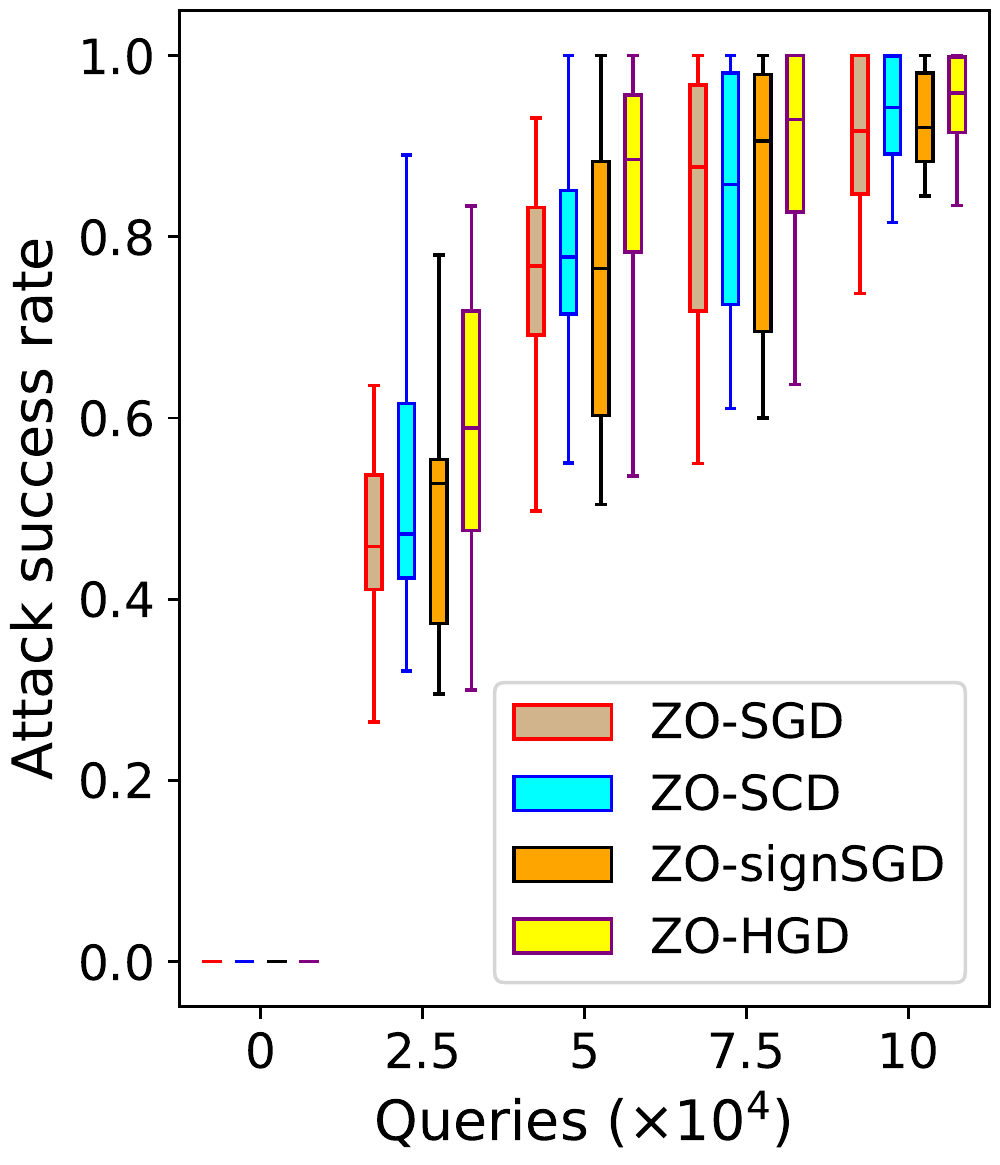}  
% & \hspace*{-0.15in}
\includegraphics[width=.3\textwidth,height=!]{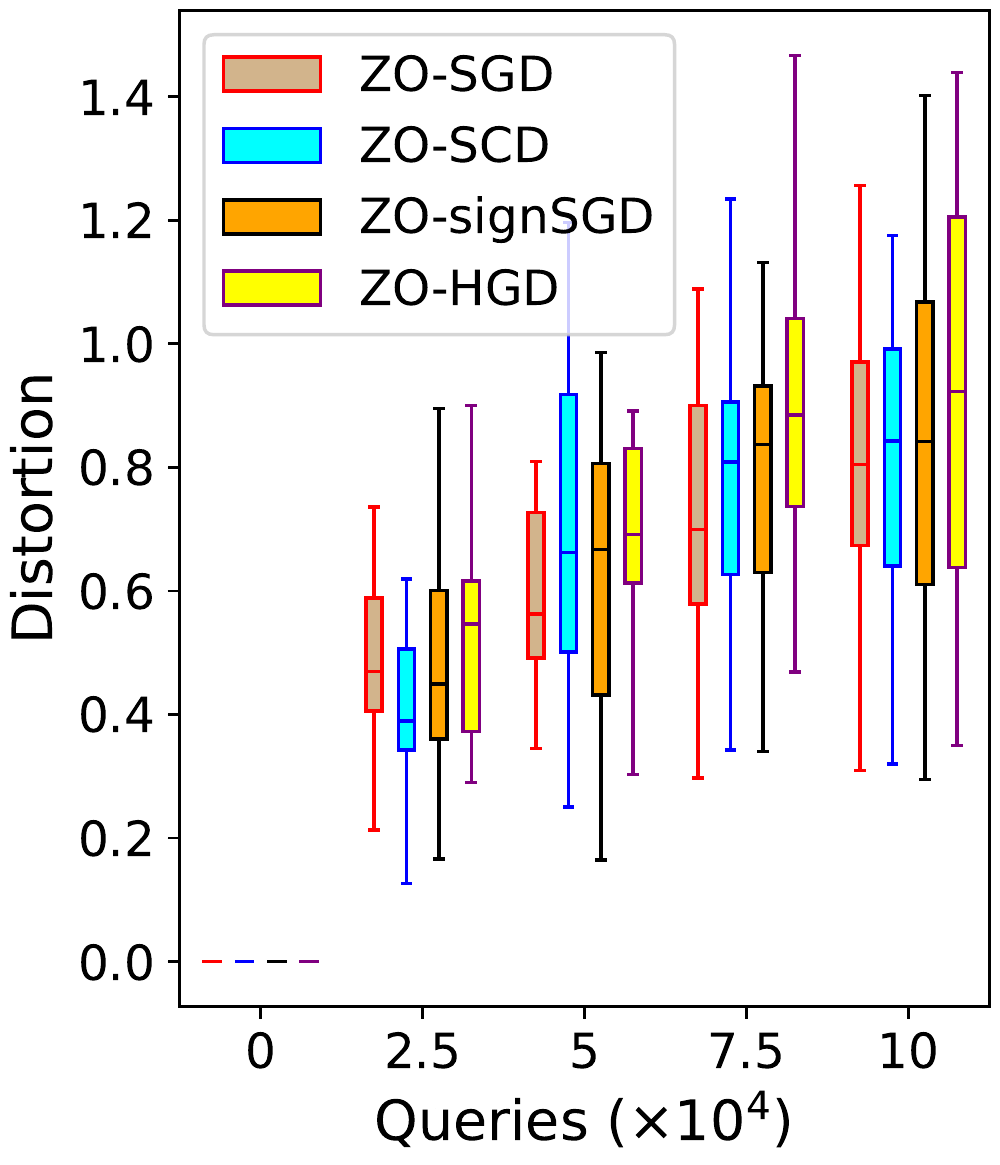} \\
\footnotesize{(a)}  & \footnotesize{(b)}  
\end{tabular}
\end{center}
% \vspace*{-0.2in}
\caption{\footnotesize{Comparing ZO-HGD with ZO-SGD, ZO-SCD and ZO-signSGD for the task of generating universal adversarial perturbations in  (a)  attack success rate, and (b) $\ell_2$ distortion   of obtained universal perturbations versus number of queries. Here the box plot summarizes the results of $50$ random trials to solve problem    \eqref{eq: attack_general}.
}}
  \label{fig: loss_dist_universal_attack}
\end{figure}

\begin{table}[htb]
\centering 
%  \vspace*{-0.1in}
%   \begin{adjustbox}{max width=0.75\textwidth }
  \begin{tabular}
      {cccccc}
      \toprule
       Original label & horse & car & bird & flight & frog \\ 
       \hline
      	\begin{tabular}[c]{@{}c@{}}ZO-SGD\\ $\ell_2$: 0.67\end{tabular} &
        % \parbox[c]{2.2em}{\includegraphics[width=0.38in]{figures/universal_pert/0itrsZOHGD0.001_max0.4615.png}} &
        \parbox[c]{2.2em}{\includegraphics[width=0.38in]{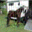}}&
        \parbox[c]{2.2em}{\includegraphics[width=0.38in]{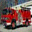}}&
        \parbox[c]{2.2em}{\includegraphics[width=0.38in]{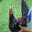}}&
        \parbox[c]{2.2em}{\includegraphics[width=0.38in]{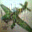}}&
        \parbox[c]{2.2em}{\includegraphics[width=0.38in]{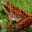}} \\
        %  $\ell_2$ distortion   & 0.36 & 0.44 & 0.50 & 0.55 & 0.58 \\  
         	Query \# 	& 37 & 255 & 25 & 176 & 635    	\\
        Predicted label & dog & ship & dog & car & car\\        
      \hline %&
    %   &&&&&&&&& \vspace{-0.2cm} \\
      	\begin{tabular}[c]{@{}c@{}}ZO-SCD\\ $\ell_2$: 0.71\end{tabular} &
   % \parbox[c]{2.2em}{\includegraphics[width=0.38in]{figures/universal_pert/0itrsZOHGD0.001_max0.4615.png}} &
        \parbox[c]{2.2em}{\includegraphics[width=0.38in]{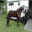}}&
        \parbox[c]{2.2em}{\includegraphics[width=0.38in]{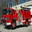}}&
        \parbox[c]{2.2em}{\includegraphics[width=0.38in]{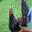}}&
        \parbox[c]{2.2em}{\includegraphics[width=0.38in]{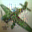}}&
        \parbox[c]{2.2em}{\includegraphics[width=0.38in]{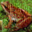}} \\
        %  $\ell_2$ distortion     & 0.36 & 0.44 & 0.50 & 0.55 & 0.58 \\  
         	Query \# 	& 35 & 239 & 21 & 166 & 610    	\\
        Predicted label & dog & ship & dog & car & car\\        
      \hline % &
      	\begin{tabular}[c]{@{}c@{}}ZO-signSGD\\ $\ell_2$: 0.69\end{tabular} &
   % \parbox[c]{2.2em}{\includegraphics[width=0.38in]{figures/universal_pert/0itrsZOHGD0.001_max0.4615.png}} &
        \parbox[c]{2.2em}{\includegraphics[width=0.38in]{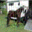}}&
        \parbox[c]{2.2em}{\includegraphics[width=0.38in]{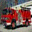}}&
        \parbox[c]{2.2em}{\includegraphics[width=0.38in]{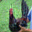}}&
        \parbox[c]{2.2em}{\includegraphics[width=0.38in]{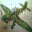}}&
        \parbox[c]{2.2em}{\includegraphics[width=0.38in]{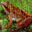}} \\
        %  $\ell_2$ distortion    & 0.36 & 0.44 & 0.50 & 0.55 & 0.58 \\  
         	Query \# 	& 39 & 285 & 29 & 188 & 675    	\\
        Predicted label & dog & ship & dog & car & car\\        
      \hline %&
      	\begin{tabular}[c]{@{}c@{}}ZO-HGD\\ $\ell_2$: 0.77\end{tabular} &
       % \parbox[c]{2.2em}{\includegraphics[width=0.38in]{figures/universal_pert/0itrsZOHGD0.001_max0.4615.png}} &
        \parbox[c]{2.2em}{\includegraphics[width=0.38in]{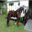}}&
        \parbox[c]{2.2em}{\includegraphics[width=0.38in]{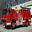}}&
        \parbox[c]{2.2em}{\includegraphics[width=0.38in]{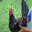}}&
        \parbox[c]{2.2em}{\includegraphics[width=0.38in]{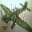}}&
        \parbox[c]{2.2em}{\includegraphics[width=0.38in]{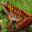}} \\
        %  $\ell_2$ distortion & 0.36 & 0.44 & 0.50 & 0.55 & 0.58 \\  
         	Query \# 	& 30 & 210 & 19 & 152 & 573    	\\
        Predicted label & dog & ship & dog & car & car\\        
\bottomrule
      
  \end{tabular}
%   \end{adjustbox}
\caption{Randomly selected 5 over 10 images in one trial from our universal black-box attack  generation using ZO-SGD, ZO-SCD, ZO-signSGD and ZO-HGD (our proposal). The results are summarized in terms of  the number of queries required to achieve the first successful attack, the resulting adversarial examples, predicted label, and final $\ell_2$ distortion.
} \label{table:cifar-image}
%   \vspace*{-0.1in}
\end{table}

In experiments, we consider  a DNN model with $5$ convolutional layers and $2$ fully connected layers, and train it  over the CIFAR-10 dataset for image classification \cite{krizhevsky2009learning}.
We focus on the   scenario to generate universal adversarial perturbations. 
Specifically, we  
conduct $50$ random trials to solve problem \eqref{eq: attack_general}, each of which randomly selects $M = 10$ images  from CIFAR-10 testing data and sets $\lambda = 10$.  
We compare our proposed ZO-HGD algorithm with three baselines,  ZO-SGD \cite{Ghadimi_Siam_2013_SGD}, ZO-SCD \cite{lian2016comprehensive} and ZO-signSGD~\cite{liu2019signsgd}. 
The rationale behind comparing with these baselines is that ZO-SGD has demonstrated a superior performance in query efficiency over ZO optimization methods that use extra variance reduction techniques \cite{liu2018zeroth}, and ZO-SCD and ZO-signSGD are used as the backbones of many  black-box   attack generation algorithms via ZO optimization \cite{chen2017zoo,ilyas2018black}.
% \SL{[present key parameters.]}
% with $M = 10$ in \eqref{eq: attack_general}. And we conduct $50$ random trials, each of which randomly selects $M$ images  from CIFAR-10.
% we random select $M=10$ images in CIFAR10 testing set for 50 trials and solve Eq.\eqref{eq: attack_general} by by ZO-SGD, ZOSCD and ZO-HGD.
In ZO-HGD (Algorithm\,\ref{Algo_zo_hgd}), we set $\nC = 50$ as the  coordinate selection budget  in CGE and $n_{\mathrm{r}} = 50$ when constructing  RGE, and choose the combination coefficient to weight RGE and CGE in HGE as $\alpha_t = t/T$. In all of the methods, we set the maximum number of iterations $T = 1000$, the same query budget $(\nC + n_{\mathrm{r}})= 100$ per iteration, and    a constant smoothing parameter $0.001$. We pick the best learning rate for each method by greedily searching over the interval $[10^{-4}, 10^{-1}]$.
% \SL{We set $T=1,000$, $n=100$ and $\mu=0.001$, and greedy search best $\eta \in [10^{-4}, 10^{-3}]$ for all methods. In ZO-HGD, we use $N=50$ and $\alpha$ is linearly increased from 0 to 1 during attack generation.}
% \SL{We refer readers to Appendix F for more details on the implementation of these ZO algorithms.}
% \SL{[Inaccurate. @Kaidi.]}

% We use $1,000$ steps gradient descent and in each step $q$ is set to 100. Specifically, in ZO-HGD, $q=50$ for both RGE and CGE. $\alpha$ is linearly increased from 0 to 1 and $\eta_0 = 0.001$ at initial and then divided by square root of $T$ during $1,000$ steps . The results are shown in Figure~\ref{fig: loss_dist_universal_attack}. As we can see, compared with the ZO-SGD and ZO-SCD, ZO-HGD has the fastest convergence speed to reach the similar distortion.

% \begin{figure}[htb]
% \begin{center}
% \begin{tabular}{c}
% \includegraphics[width=0.45\textwidth]{example-image-a}
% \end{tabular}
% \end{center}
% \vspace*{-0.2in}
% \caption{\footnotesize{\SL{[For the example in Table\,\ref{table:cifar-image}, show the results of coordinate importance sampling, namely, how sample coordinates change. 
% $ | \mathcal I_t \cap \mathcal I_{t+1} | / |\mathcal I_t|$. I assume that $ |\mathcal I_t| = N$. Let us also report the case of ZO-SCD, where $\mathcal I_t$ is uniformly randomly determined.
% ].}
% }}
%   \label{fig: import_sampling}
% \end{figure}

In Figure\,\ref{fig: obj_value_universal_attack}, we present  the averaged objective value of problem \eqref{eq: attack_general} over $50$ random trials versus the number of function queries.  And in Figure\,\ref{fig: loss_dist_universal_attack}, we show the attack success rate (ASR) of $50 \times 10$ perturbed images, and the associated $\ell_2$-norm distortion for attack generation.  
We compare our proposed ZO-HGD with ZO-SGD, ZO-SCD  and ZO-signSGD.
As we can see, the convergence speed of ZO-HGD is faster than ZO-SGD and ZO-SCD. This is supported by Figure\,\ref{fig: obj_value_universal_attack}:   
A smaller objective value is achieved using less number of function queries than the other algorithms. By dissecting the objective value of \eqref{eq: attack_general},   Figure\,\ref{fig: loss_dist_universal_attack}-(a) shows that  ZO-HGD yields a   significant ASR improvement over other algorithms even at the early iterations (e.g., at the use of   $2.5\sim 5 \times 10^4$ queries). 
Moreover, we observe from Figure\,\ref{fig: loss_dist_universal_attack}-(b) that all the considered ZO algorithms result in comparable $\ell_2$ distortion strength in general. 
This implies that
 the ASR improvement introduced by ZO-HGD is not at a significant cost of  increasing distortion norm.
 To further support this point,
Table\,\ref{table:cifar-image} presents the concrete  adversarial examples and their metrics obtained from different ZO optimization methods.    As we can see, ZO-HGD requires the least number of function queries to achieve the first successful attack of each image, and keeps the strength of converged perturbations comparable across methods.
 
% As a visual illustration, we compare the adversarial examples of a CIFAR-10 image for each attack generation method at different query numbers in Table\,\ref{table:cifar-image}. As we can see, ZO-HGD  can reduce roughly {30\%} of queries than ZO-SGD
% to find the first successful adversarial example. 
% Given the first successful adversarial example, we
% observe that ZO-HGD yields \SL{comparable or slightly higher or lower?} $\ell_2$ distortion than the other algorithms.
% Recall that one main advantage of ZO-HGD lies in the careful design of coordinate selection to form CGE. Thus,
% to gain deeper insights of ZO-HGD, Figure\,xxx presents how the coordinate importance sampling performs as the iteration number increases.  Here we report the cardinality of the intersection of selected coordinate sets at two consecutive iterations, namely, $|\mathcal I_{t-1} \cap \mathcal I_{t} |/N$, where $t$ is the iteration index, and $N$ denotes the fixed coordinate selection budge. For comparison, we also present the performance of ZO-SCD, where $I_t$ is uniformly sampled from all coordinate indices at each iteration.  \SL{As we can see, xxx}

\section{Conclusion}
\label{sec_conclusion}
In this paper, we proposed a novel hybrid gradient estimator (HGE) for black-box stochastic optimization, combining the advantages of the query-efficient random gradient estimator, and the more accurate coordinate-wise gradient estimator. 
Using importance sampling based coordinate selection, we further improved the variance of HGE. 
Building on top of this estimator, we proposed a ZO-hybrid gradient descent (ZO-HGD) algorithm, which generalized ZO-SGD and ZO-SCD. 
We conducted theoretical analysis of the method for smooth functions, which are non-convex, convex and strongly convex, and rigorously demonstrated the efficiency in terms of both iteration complexity and function query cost.
The theoretical findings have been corroborated by a real-world application to generate adversarial examples from a black-box deep neural network.
The generalization of this work to nonsmooth problems is one of the future directions we plan to pursue.

% \section*{Acknowledgments}

% \appendices

\bibliographystyle{IEEEtran}
\bibliography{abrv,References}

% Generated by IEEEtran.bst, version: 1.14 (2015/08/26)
\begin{thebibliography}{10}
\providecommand{\url}[1]{#1}
\csname url@samestyle\endcsname
\providecommand{\newblock}{\relax}
\providecommand{\bibinfo}[2]{#2}
\providecommand{\BIBentrySTDinterwordspacing}{\spaceskip=0pt\relax}
\providecommand{\BIBentryALTinterwordstretchfactor}{4}
\providecommand{\BIBentryALTinterwordspacing}{\spaceskip=\fontdimen2\font plus
\BIBentryALTinterwordstretchfactor\fontdimen3\font minus
  \fontdimen4\font\relax}
\providecommand{\BIBforeignlanguage}[2]{{%
\expandafter\ifx\csname l@#1\endcsname\relax
\typeout{** WARNING: IEEEtran.bst: No hyphenation pattern has been}%
\typeout{** loaded for the language `#1'. Using the pattern for}%
\typeout{** the default language instead.}%
\else
\language=\csname l@#1\endcsname
\fi
#2}}
\providecommand{\BIBdecl}{\relax}
\BIBdecl

\bibitem{conn09introduction}
A.~R. Conn, K.~Scheinberg, and L.~N. Vicente, \emph{Introduction to
  derivative-free optimization}.\hskip 1em plus 0.5em minus 0.4em\relax SIAM,
  2009.

\bibitem{larson19dfo}
J.~Larson, M.~Menickelly, and S.~M. Wild, ``Derivative-free optimization
  methods,'' \emph{arXiv preprint arXiv:1904.11585}, 2019.

\bibitem{liu2020primer}
S.~Liu, P.-Y. Chen, B.~Kailkhura, G.~Zhang, A.~Hero, and P.~K. Varshney, ``A
  primer on zeroth-order optimization in signal processing and machine
  learning,'' \emph{arXiv preprint arXiv:2006.06224}, 2020.

\bibitem{bortz98simplex}
D.~M. Bortz and C.~T. Kelley, ``The simplex gradient and noisy optimization
  problems,'' in \emph{Computational methods for optimal design and
  control}.\hskip 1em plus 0.5em minus 0.4em\relax Springer, 1998, pp. 77--90.

\bibitem{conn2000trust}
A.~R. Conn, N.~I. Gould, and P.~L. Toint, \emph{Trust region methods}.\hskip
  1em plus 0.5em minus 0.4em\relax SIAM, 2000.

\bibitem{duchi15optimal_tit}
J.~C. Duchi, M.~I. Jordan, M.~J. Wainwright, and A.~Wibisono, ``Optimal rates
  for zero-order convex optimization: The power of two function evaluations,''
  \emph{IEEE Transactions on Information Theory}, vol.~61, no.~5, pp.
  2788--2806, 2015.

\bibitem{nesterov17grad_free}
Y.~Nesterov and V.~Spokoiny, ``Random gradient-free minimization of convex
  functions,'' \emph{Foundations of Computational Mathematics}, vol.~17, no.~2,
  pp. 527--566, 2017.

\bibitem{Ghadimi_Siam_2013_SGD}
S.~Ghadimi and G.~Lan, ``Stochastic first-and zeroth-order methods for
  nonconvex stochastic programming,'' \emph{SIAM Journal on Optimization},
  vol.~23, no.~4, pp. 2341--2368, 2013.

\bibitem{lian2016comprehensive}
X.~Lian, H.~Zhang, C.-J. Hsieh, Y.~Huang, and J.~Liu, ``A comprehensive linear
  speedup analysis for asynchronous stochastic parallel optimization from
  zeroth-order to first-order,'' in \emph{Advances in Neural Information
  Processing Systems}, 2016, pp. 3054--3062.

\bibitem{flaxman05acm_siam}
A.~D. Flaxman, A.~T. Kalai, and H.~B. McMahan, ``Online convex optimization in
  the bandit setting: gradient descent without a gradient,'' in
  \emph{Proceedings of the sixteenth annual ACM-SIAM symposium on Discrete
  algorithms}, 2005, pp. 385--394.

\bibitem{agarwal10colt}
A.~Agarwal, O.~Dekel, and L.~Xiao, ``Optimal algorithms for online convex
  optimization with multi-point bandit feedback.'' in \emph{COLT}.\hskip 1em
  plus 0.5em minus 0.4em\relax Citeseer, 2010, pp. 28--40.

\bibitem{shamir17optimal_jmlr}
O.~Shamir, ``An optimal algorithm for bandit and zero-order convex optimization
  with two-point feedback,'' \emph{The Journal of Machine Learning Research},
  vol.~18, no.~1, pp. 1703--1713, 2017.

\bibitem{gao18information}
X.~Gao, B.~Jiang, and S.~Zhang, ``On the information-adaptive variants of the
  admm: an iteration complexity perspective,'' \emph{Journal of Scientific
  Computing}, vol.~76, no.~1, pp. 327--363, 2018.

\bibitem{defazio14saga}
A.~Defazio, F.~Bach, and S.~Lacoste-Julien, ``Saga: A fast incremental gradient
  method with support for non-strongly convex composite objectives,'' in
  \emph{Advances in neural information processing systems}, 2014, pp.
  1646--1654.

\bibitem{Johnson_NIPS_2013}
R.~Johnson and T.~Zhang, ``Accelerating stochastic gradient descent using
  predictive variance reduction,'' in \emph{Advances in Neural Information
  Processing Systems 26}.\hskip 1em plus 0.5em minus 0.4em\relax Curran
  Associates, Inc., 2013, pp. 315--323.

\bibitem{Reddi_ICML_2016}
S.~J. Reddi, A.~Hefny, S.~Sra, B.~P{\'o}czos, and A.~Smola, ``Stochastic
  variance reduction for nonconvex optimization,'' in \emph{International
  conference on machine learning}, 2016, pp. 314--323.

\bibitem{Nguyen_ICML_2017_SARAH}
L.~M. Nguyen, J.~Liu, K.~Scheinberg, and M.~Tak{\'a}{\v{c}}, ``Sarah: A novel
  method for machine learning problems using stochastic recursive gradient,''
  in \emph{Proceedings of the 34th International Conference on Machine
  Learning-Volume 70}.\hskip 1em plus 0.5em minus 0.4em\relax JMLR. org, 2017,
  pp. 2613--2621.

\bibitem{Fang_NIPS_2018_spider}
C.~Fang, C.~J. Li, Z.~Lin, and T.~Zhang, ``{S}pider: Near-optimal non-convex
  optimization via stochastic path-integrated differential estimator,'' in
  \emph{Advances in Neural Information Processing Systems}, 2018, pp. 689--699.

\bibitem{zo_spider_liang19icml}
K.~Ji, Z.~Wang, Y.~Zhou, and Y.~Liang, ``Improved zeroth-order variance reduced
  algorithms and analysis for nonconvex optimization,'' \emph{arXiv preprint
  arXiv:1910.12166}, 2019.

\bibitem{lan18zeroth_conditional}
K.~Balasubramanian and S.~Ghadimi, ``Zeroth-order (non)-convex stochastic
  optimization via conditional gradient and gradient updates,'' in
  \emph{Advances in Neural Information Processing Systems}, 2018, pp.
  3455--3464.

\bibitem{chen2019zo_adamm}
X.~Chen, S.~Liu, K.~Xu, X.~Li, X.~Lin, M.~Hong, and D.~Cox, ``Zo-adamm:
  Zeroth-order adaptive momentum method for black-box optimization,'' in
  \emph{Advances in Neural Information Processing Systems}, 2019, pp.
  7204--7215.

\bibitem{liu2019signsgd}
S.~Liu, P.~Y. Chen, X.~Chen, and M.~Hong, ``Signsgd via zeroth-order oracle,''
  in \emph{7th International Conference on Learning Representations, ICLR
  2019}, 2019.

\bibitem{hajinezhad19zone}
D.~Hajinezhad, M.~Hong, and A.~Garcia, ``Zone: Zeroth-order nonconvex
  multiagent optimization over networks,'' \emph{IEEE Transactions on Automatic
  Control}, vol.~64, no.~10, pp. 3995--4010, 2019.

\bibitem{tang19allerton}
Y.~Tang and N.~Li, ``Distributed zero-order algorithms for nonconvex
  multi-agent optimization,'' in \emph{2019 57th Annual Allerton Conference on
  Communication, Control, and Computing (Allerton)}.\hskip 1em plus 0.5em minus
  0.4em\relax IEEE, 2019, pp. 781--786.

\bibitem{chen2017zoo}
P.-Y. Chen, H.~Zhang, Y.~Sharma, J.~Yi, and C.-J. Hsieh, ``Zoo: Zeroth order
  optimization based black-box attacks to deep neural networks without training
  substitute models,'' in \emph{Proceedings of the 10th ACM Workshop on
  Artificial Intelligence and Security}.\hskip 1em plus 0.5em minus 0.4em\relax
  ACM, 2017, pp. 15--26.

\bibitem{ilyas2018black}
A.~Ilyas, L.~Engstrom, A.~Athalye, and J.~Lin, ``Black-box adversarial attacks
  with limited queries and information,'' \emph{arXiv preprint
  arXiv:1804.08598}, 2018.

\bibitem{ilyas2018prior}
\BIBentryALTinterwordspacing
A.~Ilyas, L.~Engstrom, and A.~Madry, ``Prior convictions: Black-box adversarial
  attacks with bandits and priors,'' in \emph{International Conference on
  Learning Representations}, 2019. [Online]. Available:
  \url{https://openreview.net/forum?id=BkMiWhR5K7}
\BIBentrySTDinterwordspacing

\bibitem{cheng2018query}
M.~Cheng, T.~Le, P.-Y. Chen, J.~Yi, H.~Zhang, and C.-J. Hsieh,
  ``Query-efficient hard-label black-box attack: An optimization-based
  approach,'' \emph{arXiv preprint arXiv:1807.04457}, 2018.

\bibitem{liu2020min}
S.~Liu, S.~Lu, X.~Chen, Y.~Feng, K.~Xu, A.~Al-Dujaili, M.~Hong, and U.-M.
  O’Reilly, ``Min-max optimization without gradients: Convergence and
  applications to black-box evasion and poisoning attacks,'' in \emph{Proc.
  Int. Conf. Machine Learning}, 2020.

\bibitem{tu2018autozoom}
C.-C. Tu, P.~Ting, P.-Y. Chen, S.~Liu, H.~Zhang, J.~Yi, C.-J. Hsieh, and S.-M.
  Cheng, ``Autozoom: Autoencoder-based zeroth order optimization method for
  attacking black-box neural networks,'' in \emph{Proceedings of the AAAI
  Conference on Artificial Intelligence}, vol.~33, 2019, pp. 742--749.

\bibitem{signsgd18icml}
J.~Bernstein, Y.-X. Wang, K.~Azizzadenesheli, and A.~Anandkumar, ``signsgd:
  Compressed optimisation for non-convex problems,'' in \emph{International
  Conference on Machine Learning}, 2018, pp. 560--569.

\bibitem{liu2018zeroth}
S.~Liu, B.~Kailkhura, P.-Y. Chen, P.~Ting, S.~Chang, and L.~Amini,
  ``Zeroth-order stochastic variance reduction for nonconvex optimization,'' in
  \emph{Advances in Neural Information Processing Systems}, 2018, pp.
  3727--3737.

\bibitem{kiefer1952stochastic}
J.~Kiefer, J.~Wolfowitz \emph{et~al.}, ``Stochastic estimation of the maximum
  of a regression function,'' \emph{The Annals of Mathematical Statistics},
  vol.~23, no.~3, pp. 462--466, 1952.

\bibitem{gu2016asynchronous}
B.~Gu, Z.~Huo, and H.~Huang, ``Asynchronous stochastic block coordinate descent
  with variance reduction,'' \emph{arXiv preprint arXiv:1610.09447}, 2016.

\bibitem{wangni2018gradient}
J.~Wangni, J.~Wang, J.~Liu, and T.~Zhang, ``Gradient sparsification for
  communication-efficient distributed optimization,'' in \emph{Advances in
  Neural Information Processing Systems}, 2018, pp. 1299--1309.

\bibitem{beck17first_book}
A.~Beck, \emph{First-order methods in optimization}.\hskip 1em plus 0.5em minus
  0.4em\relax SIAM, 2017.

\bibitem{stich18spars_SGD}
S.~U. Stich, J.-B. Cordonnier, and M.~Jaggi, ``Sparsified sgd with memory,'' in
  \emph{Advances in Neural Information Processing Systems}, 2018, pp.
  4447--4458.

\bibitem{Goodfellow2015explaining}
I.~Goodfellow, J.~Shlens, and C.~Szegedy, ``Explaining and harnessing
  adversarial examples,'' \emph{2015 ICLR}, vol. arXiv preprint
  arXiv:1412.6572, 2015.

\bibitem{carlini2017towards}
N.~Carlini and D.~Wagner, ``Towards evaluating the robustness of neural
  networks,'' in \emph{2017 ieee symposium on security and privacy (sp)}.\hskip
  1em plus 0.5em minus 0.4em\relax IEEE, 2017, pp. 39--57.

\bibitem{xu2018structured}
K.~Xu, S.~Liu, P.~Zhao, P.-Y. Chen, H.~Zhang, Q.~Fan, D.~Erdogmus, Y.~Wang, and
  X.~Lin, ``Structured adversarial attack: Towards general implementation and
  better interpretability,'' in \emph{International Conference on Learning
  Representations}, 2019.

\bibitem{moosavi2017universal}
S.-M. Moosavi-Dezfooli, A.~Fawzi, O.~Fawzi, and P.~Frossard, ``Universal
  adversarial perturbations,'' in \emph{Proceedings of the IEEE conference on
  computer vision and pattern recognition}, 2017, pp. 1765--1773.

\bibitem{krizhevsky2009learning}
A.~Krizhevsky, G.~Hinton \emph{et~al.}, ``Learning multiple layers of features
  from tiny images,'' 2009.

\end{thebibliography}
% , Ref_SL1, Ref_SL2}

\onecolumn
\setcounter{section}{0}
\setcounter{figure}{0}
\setcounter{equation}{0}
\makeatletter 
\renewcommand{\thefigure}{A\@arabic\c@figure}
\makeatother
\setcounter{table}{0}
\renewcommand{\thetable}{A\arabic{table}}
\renewcommand{\theequation}{S\arabic{equation}}
\setcounter{algorithm}{0}
\renewcommand{\thealgorithm}{A\arabic{algorithm}}

\begin{appendices}

\section{Mathematical Notations}
\label{app_notation}

\begin{table}[h!]
\centering
\label{table_notation}
% \caption{Notations of mathematical symbols used in the paper. 
% % For ease of reference, the table has been broken into sub-tables, each corresponding to specific part of the paper.
% }
\begin{tabular}{ |c|l| }
    \hline
    \multicolumn{2}{|c|}{General notations} \\
    \hline
    $f$ & Objective function to be minimized, defined in \eqref{eq_problem} \\
        $\mathrm{dom} f$ & Domain of the function $f$ \\
    $d$ & Dimension of the problem \\
    $F(\cdot; \xi)$ & Component functions of the objective function $f$ at a stochastic sample $\xi$\\
    $\Xi$ & Distribution of $\xi$
    %the sample functions $F(\cdot; \xi)$
    \\
    % $\xi$ & A representative sample from the set $\Xi$ \\
    $\mathcal{X}, |\mathcal{X}|$ & Calligraphic font is used to denote sets, $|\mathcal{X}|$ denotes the cardinality of set $\mathcal{X}$ \\
    $\mathbb{N}_+$ & The set of positive integers \\
    $[n], n \in \mathbb{N}_+$ & The set $\{ 1, 2, \hdots, n \}$ \\
    $\| \cdot \|$ & Euclidean norm (unless stated otherwise) \\
    $\left\langle \cdot, \cdot \right\rangle$ & Inner product \\
    \hline
    \multicolumn{2}{|c|}{Assumptions} \\
    \hline
    $L$ & Parameter of gradient Lipschitz continuity  \\
    $\zeta^2$ & Coordinate-wise variance bound: $\mathbb{E} [ ( \nabla F(\mathbf{x}; \xi) - \nabla f(\mathbf{x}) )_i^2 ] \leq \zeta^2, \forall \ i$ \\
    $\sigma^2$ & $= d \zeta^2$ \\
    % $\mathrm{dom} f$ & Domain of the function $f$ \\
    $R$ & Upper bound on the diameter of the set $\mathrm{dom} f$, i.e., $\| \mathbf{y} - \mathbf{x} \| \leq R$, $\forall \ \mathbf{x}, \mathbf{y} \in \mathrm{ dom } f$ \\
    $G$ & Upper bound on the gradient norm: $\left\| \nabla f(\mathbf{x}) \right\| \leq G, \forall \mathbf{x} \in \mathrm{ dom } f$ \\
    $\bar{\sigma}$ & Strong convexity parameter \\
    \hline
    \multicolumn{2}{|c|}{RGE $\Hat{\nabla}_{\mathrm{RGE}} F (\mathbf x; \br)$; see \eqref{eq_RGE}, \eqref{eq: GE_stoc_batch}} \\
    \hline
    $\{ \mbf u_i \}$ & Random directions, sampled from the uniform distribution on a unit radius sphere $U_0$, centered at $\mbf 0$ \\
    $\nR$ & Number of  random directions, per stochastic sample $\xi$ \\
    $\muR$ & Smoothing parameter \\
    $\br$ & Mini-batch of stochastic samples with cardinality $|\br|$\\
    \hline
    \multicolumn{2}{|c|}{CGE $\Hat{\nabla}_{\mathrm{CGE}} F_{\mathcal{I}} (\mathbf{x}; \bc, \mathbf{p})$; see \eqref{eq_CGE_I}, \eqref{eq: GE_stoc_batch}} \\
    \hline
    $\muCi$ & Smoothing parameter for the $i$-th component of CGE; see \eqref{eq_CGE_1} \\
    $\muC$ & Common smoothing parameter across all the coordinates, i.e., $\muCi = \muC$ for all $i \in [d]$ \\
    $\{ \mbf e_i \}_{i=1}^d$ & Canonical basis vectors in $\mathbb{R}^d$ ($\mbf e_i$ is a vector of all $0$'s, except the $i$-th entry) \\
    $\mathcal{I}$ & Coordinate set used in computing CGE \eqref{eq_CGE_I} \\
    $\bc$ & Mini-batch of stochastic samples with cardinality $|\bc|$  \\
    $\mbf p$ & Vector of coordinate selection probabilities \eqref{eq_CGE_nonuniform} \\
    $p_i$ & $\mathrm{Pr} (i \in \mathcal{I})$; $i$-th element of $\mbf p$ \\
    \hline
    \multicolumn{2}{|c|}{HGE \eqref{eq_HGE}: $\Hat{\nabla}_{\mathrm{HGE}} F(\mathbf x; \br, \bc, \mathcal I) =  \alpha \Hat{\nabla}_{\mathrm{RGE}} F (\mathbf x; \br) + (1-\alpha) 
    \Hat{\nabla}_{\mathrm{CGE}} F_{\mathcal{I}} (\mathbf{x}; \bc, \mathbf{p})$} \\
    \hline
    $\alpha$ & Convex combination coefficient in HGE \\
    $\alpha^*$ & Optimal value of $\alpha$, obtained by minimizing the bound on the variance of HGE (Proposition \ref{prop_bd_var_RGE_CGE}) \\
    $\nC$ & Coordinate selection budget of CGE \\
    \hline
    \multicolumn{2}{|c|}{Algorithm \ref{Algo_zo_hgd}: HGD - $\mathbf{x}_{t+1} = \mathbf{x}_{t} - \eta_t \left( \alpha_t \nabla_{\mathrm{r},t} + (1 - \alpha_t) \nabla_{\mathrm{c},t} \right)$} \\
    \hline
    $\mathbf{x}_t$ & Iterate of the algorithm at the $t$-th time step \\
    $T$ & Total number of iterations  \\
    $\eta_t$ & Step-size at time $t$\\
    $n_{\mathrm{c,t}}$ & Coordinate selection budget of CGE at time $t$ \\
    $\alpha_t, \brt, \mathcal{I}_t, \bct, \mathbf{p}_t$ & The time-dependent versions of the corresponding quantities ($\alpha, \br, \mathcal{I}, \bc, \mathbf{p}$) defined above \\
    $p_{t,i}$ & $i$-th element of $\mathbf{p}_t$ \\
    $\nabla_{\mathrm{r},t}$ & $\Hat{\nabla}_{\mathrm{RGE}} F (\mathbf{x}_t; \brt)$ \\
    $\nabla_{\mathrm{c},t}$ & $\Hat{\nabla}_{\mathrm{CGE}} F_{\mathcal{I}_t} (\mathbf{x}_t; \bct, \mathbf{p}_t)$ \\
    $Y_t$ & $= \{ \{ \mathbf{u}_{t,i} \}_i, \mathcal{I}_t, \bct, \brt \}$, the randomness at step $t$ \\
    $\mathcal{Y}_t$ & \begin{tabular}[l]{@{}l@{}} $\sigma$-algebra generated by $= \{ Y_0, Y_1, \hdots, Y_{t-1} \}$. $\mathcal{Y}_t$ denotes the entire randomness of the algorithm \\ upto time $t-1$ \end{tabular} \\
    \hline
    \multicolumn{2}{|c|}{Quantities used in the analysis of Algorithm \ref{Algo_zo_hgd}} \\
    \hline
    $\dnr$ & $1 + d/\nR$ \\
    $\bar{P}_T$ & $\frac{1}{T} \sum_{t=0}^{T-1} \frac{1}{d} \sum_{i=1}^d \frac{1}{p_{t,i}}$ \\
    $c_t$ & Lower bound such that $p_{t,i} \geq c_t > 0$ for all $i \in [d]$, at all times $t$ \\
    $\frac{1}{\bar{c}_T}$ & $\frac{1}{T} \sum_{t=0}^{T-1} \frac{1}{c_t}$ \\
    $q$ & $= \nR + \nC$, such that the total per-iteration function query cost is $O(q)$ \\
    \hline
\end{tabular}
\end{table}

\newpage
\section{Hybrid Gradient Estimator}
\subsection{Proof of Proposition \ref{prop_prob_opt}}
\label{proof_prop_prob_opt}
The proof is motivated by the analysis of gradient sparsification proposed in \cite{wangni2018gradient}.
We denote by $\mathbf{g} = [ g_{1}, \hdots, g_{d}]^T \triangleq \Hat{\nabla}_{\mathrm{RGE}} F (\mathbf x; \br)$, the RGE to be employed as a probe estimate.  
We also use the vector $[ g_{(1)}, \hdots, g_{(d)}]^T$ to denote a vector with the same entries as $\mathbf{g}$, but sorted in the order of decreasing magnitude. We denote the vector of probabilities $\mathbf{p} = [ p_1, \hdots, p_d]^T$.

Given an upper bound on the expected sparsity $\sum_{i=1}^d p_i$, we compute the set of probabilities which minimize the variance of the sparse estimator $Q(\mathbf{g})$, by solving the following problem. 
\begin{align}
	\min \sum_{i=1}^d \frac{g_{i}^2}{p_i} \qquad \text{s.t.} \qquad \sum_{i=1}^d p_i \leq \nC, 0 < p_i \leq 1, \forall i. \label{app_eq_prob_opt}
\end{align}

\begin{proof}
Introducing Lagrange multipliers $\lambda, \{ \mu_i \}_{i=1}^d$ in \eqref{app_eq_prob_opt}, the solution is given by solving the following problem:
\begin{align}
	\min_p \max_{\lambda \geq 0} \max_{\mu \succeq \mathbf{0}} L(p, \lambda, \mu) = \sum_{i=1}^d \frac{g_{i}^2}{p_i} + \lambda^2 \left( \sum_{i=1}^d p_i - \nC \right) + \sum_{i=1}^d \mu_i (p_i - 1). \label{proof_prob_opt_1}
\end{align}
The KKT conditions for \eqref{proof_prob_opt_1} are given by,
\begin{align*}
	-\frac{g_{i}^2}{p_i^2} + \lambda^2 + \mu_i = 0, \forall \ i \qquad & \text{(Stationarity)} \nonumber \\
	\lambda^2 \left( \sum_{i=1}^d p_i - \nC \right) = 0, \qquad & \text{(Complementary slackness 1)} \nonumber \\
	\mu_i (p_i - 1) = 0, \forall \ i, \qquad & \text{(Complementary slackness 2)}. \nonumber
\end{align*}
Combining the stationarity condition with the fact that for $p_i < 1$, $\mu_i = 0$, we get
\begin{align}
	p_i = 
	\begin{cases}
		1, & \mu_i \neq 0 \\
		\frac{| g_{i} |}{\lambda}, & \mu_i = 0. \label{proof_prob_opt_2}
	\end{cases}.
\end{align}
It follows from \eqref{proof_prob_opt_2} that if $| g_i | \geq | g_j |$, then $p_i \geq p_j$. Therefore, there is a dominating set of coordinates, say $S$, such that $p_j = 1, \forall j \in S$, and $| g_j |, j \in S$ are the largest absolute magnitudes in the vector $\mathbf{g}$. Suppose the set $S$ has size $k (0 \leq k \leq d)$. Consequently, we get,
\begin{align}
	p_{(i)} = 
	\begin{cases}
		1, & i \leq k \\
		\frac{| g_{(i)} |}{\lambda}, & i > k. \label{proof_prob_opt_3}
	\end{cases}.
\end{align}
Clearly, $\lambda > 0$, hence $\sum_{i=1}^d p_i = \nC$. Then, from \eqref{proof_prob_opt_3},
\begin{align*}
	k + \frac{1}{\lambda} \sum_{i = k+1}^d | g_{(i)} | = \nC \qquad \Leftrightarrow \qquad \lambda = \frac{\sum_{i = k+1}^d | g_{(i)} |}{\nC-k}.
\end{align*}
Consequently, for $i > k$,
\begin{align*}
	p_i = \frac{| g_{(i)} | (\nC-k)}{\sum_{i = k+1}^d | g_{(i)} |}
\end{align*}
Hence, the probabilities are calculated using the following steps:
\begin{enumerate}
\item Find the smallest $k$ such that
	\begin{align*}
		| g_{(k+1)} | \left( \nC-k \right) \leq \sum_{i=k+1}^d | g_{(i)} |,
	\end{align*}
	is true. Denote by $S_k$, the set of coordinates with the top $k$ largest magnitudes of $| g_i |$.
\item Set the probability vector $\mathbf{p}$ by
	\begin{align*}
		p_{i} = 
		\begin{cases}
			1, & \text{ if } i \in S_k \\
			\frac{| g_{i} | (\nC-k)}{\sum_{i = k+1}^d | g_{i} |}, & \text{ if } i \notin S_k.
		\end{cases}
	\end{align*}
\end{enumerate}
\end{proof}

\subsection{Proof of Proposition \ref{prop_bd_var_RGE_CGE}}
\label{proof_prop_bd_var_RGE_CGE}
\subsubsection{Proof of \eqref{eq_bd_var_RGE}}
\begin{proof}
The variance of RGE is given by
\begin{align}
    \mathbb{E} \left\| \Hat{\nabla}_{\mathrm{RGE}} F \left( \mathbf x; \br \right) - \nabla f \left( \mathbf{x} \right) \right\|^2 & = \mathbb{E} \left\| \Hat{\nabla}_{\mathrm{RGE}} F \left(\mathbf x; \br \right) - \nabla f_{\muR} (\mathbf{x}) + \nabla f_{\muR} (\mathbf{x}) - \nabla f (\mathbf{x}) \right\|^2 \nonumber \\
    & \overset{(a)}{=} \mathbb{E} \left\| \Hat{\nabla}_{\mathrm{RGE}} F \left(\mathbf x; \br \right) - \nabla f_{\muR} (\mathbf{x}) \right\|^2 + \left\| \nabla f_{\muR} (\mathbf{x}) - \nabla f (\mathbf{x}) \right\|^2 \label{proof_eq_bd_var_RGE_1}
\end{align}
% \SL{[ $\| \cdot \|$ denotes Euclidean norm? ]}
where $(a)$ follows since $\mathbb{E} \Hat{\nabla}_{\mathrm{RGE}} F (\mathbf x; \br) = \nabla f_{\muR} (\mathbf x)$ (see the discussion following \eqref{eq_RGE}). Next, we upper bound the first term in \eqref{proof_eq_bd_var_RGE_1}.
\begin{align}
    & \mathbb{E} \left\| \Hat{\nabla}_{\mathrm{RGE}} F \left(\mathbf x; \br \right) - \nabla f_{\muR} (\mathbf{x}) \right\|^2 = \mathbb{E}_{\{ \mathbf{u} \}, \br} \left\| \Hat{\nabla}_{\mathrm{RGE}} F \left(\mathbf x; \br \right) - \nabla f_{\muR} (\mathbf{x}) \right\|^2 \nonumber \\
    & \qquad \overset{(b)}{=} \mathbb{E}_{\{ \mathbf{u} \}, \br} \left\| \frac{1}{|\br|} \sum_{\xi \in \br} \frac{1}{\nR} \sum_{i=1}^{\nR} \frac{d \left [ F(\mathbf{x} + \muR \mathbf{u}_{i,\xi}; \xi) - F(\mathbf{x}; \xi) \right ]}{\muR}  \mathbf{u}_{i,\xi} - \nabla f_{\muR} (\mathbf{x}) \right\|^2 \nonumber \\
    & \qquad = \frac{1}{|\br|^2 \nR^2} \sum_{\xi \in \br} \mathbb{E}_{\{ \mathbf{u} \}, \xi} \left\| \sum_{i=1}^{\nR} \left( \frac{d \left [ F(\mathbf{x} + \muR \mathbf{u}_{i,\xi}; \xi) - F(\mathbf{x}; \xi) \right ]}{\muR}  \mathbf{u}_{i,\xi} - \nabla f_{\muR} (\mathbf{x}) \right) \right\|^2 \nonumber \\
    & \qquad \qquad + \frac{1}{|\br|^2 \nR^2} \sum_{\xi \neq \chi} \left\langle \mathbb{E}_{\xi} \sum_{i=1}^{\nR} \mathbb{E}_{\mathbf{u}_{i,\xi}} \left( \frac{d \left [ F(\mathbf{x} + \muR \mathbf{u}_{i,\xi}; \xi) - F(\mathbf{x}; \xi) \right ]}{\muR}  \mathbf{u}_{i,\xi} - \nabla f_{\muR} (\mathbf{x}) \right), \right. \nonumber \\
    & \qquad \qquad \qquad \qquad \qquad \qquad \qquad \left. \mathbb{E}_{\chi} \sum_{i=1}^{\nR} \mathbb{E}_{\mathbf{u}_{i,\chi}} \left( \frac{d \left [ F(\mathbf{x} + \muR \mathbf{u}_{i,\chi}; \chi) - F(\mathbf{x}; \chi) \right ]}{\muR}  \mathbf{u}_{i,\chi} - \nabla f_{\muR} (\mathbf{x}) \right) \right\rangle \nonumber \\
    & \qquad \overset{(c)}{=} \frac{1}{|\br|^2 \nR^2} \sum_{\xi \in \br} \sum_{i=1}^{\nR}  \mathbb{E}_{ \mathbf{u}_{i,\xi}, \xi} \left\| \frac{d \left [ F(\mathbf{x} + \muR \mathbf{u}_{i,\xi}; \xi) - F(\mathbf{x}; \xi) \right ]}{\muR}  \mathbf{u}_{i,\xi} - \nabla f_{\muR} (\mathbf{x}) \right\|^2 \nonumber \\
    & \qquad \qquad + \frac{1}{|\br|^2 \nR^2} \sum_{\xi \in \br} \sum_{i \neq j} \mathbb{E}_{\xi} \left\langle \mathbb{E}_{\mathbf{u}_{i,\xi}} \left( \frac{d \left [ F(\mathbf{x} + \muR \mathbf{u}_{i,\xi}; \xi) - F(\mathbf{x}; \xi) \right ]}{\muR}  \mathbf{u}_{i,\xi} - \nabla f_{\muR} (\mathbf{x}) \right), \right. \nonumber \\
    & \qquad \qquad \qquad \qquad \qquad \qquad \qquad \qquad \qquad \left. \mathbb{E}_{\mathbf{u}_{j,\xi}} \left( \frac{d \left [ F(\mathbf{x} + \muR \mathbf{u}_{j,\xi}; \xi) - F(\mathbf{x}; \xi) \right ]}{\muR}  \mathbf{u}_{j,\xi} - \nabla f_{\muR} (\mathbf{x}) \right) \right\rangle \nonumber \\
    & \qquad \overset{(d)}{=} \frac{1}{|\br| \nR} \mathbb{E}_{\xi_1, \mathbf{u}_{1,\xi_1}} \left\| \frac{d \left [ F(\mathbf{x} + \muR \mathbf{u}_{1,\xi_1}; \xi_1) - F(\mathbf{x}; \xi_1) \right ]}{\muR}  \mathbf{u}_{1,\xi_1} - \nabla f_{\muR} (\mathbf{x}) \right\|^2 \nonumber \\
    & \qquad \qquad + \frac{(\nR - 1)}{|\br| \nR} \mathbb{E}_{\xi_1} \left\| \nabla F_{\muR} (\mathbf{x}, \xi_1) - \nabla f_{\muR} (\mathbf{x}) \right\|^2 \qquad \text{ (for some } \xi_1 \in \br), \label{proof_eq_bd_var_RGE_2}
\end{align}
where $(b)$ follows from the definition of RGE \eqref{eq_RGE}, \eqref{eq: GE_stoc_batch}. $\{ \mathbf{u}_{i,\xi} \}$ denotes the set of random directions associated with sample $\xi$. $(c)$ follows since the samples of $\br$ are picked independently of each other, and the random directions associated with sample $\xi$, $\{ \mathbf{u}_{i,\xi} \}$ are also independent of the random directions corresponding to another sample $\chi$. Also,
\begin{align*}
    \mathbb{E}_{\xi} \mathbb{E}_{\mathbf{u}_{i,\xi}} \left[ \frac{d \left [ F(\mathbf{x} + \muR \mathbf{u}_{i,\xi}; \xi) - F(\mathbf{x}; \xi) \right ]}{\muR}  \mathbf{u}_{i,\xi} \right] = \nabla f_{\muR} (\mathbf{x}).
\end{align*}
Hence,
\begin{align*}
    & \left\langle \mathbb{E}_{\xi} \sum_{i=1}^{\nR} \mathbb{E}_{\mathbf{u}_{i,\xi}} \left( \frac{d \left [ F(\mathbf{x} + \muR \mathbf{u}_{i,\xi}; \xi) - F(\mathbf{x}; \xi) \right ]}{\muR}  \mathbf{u}_{i,\xi} - \nabla f_{\muR} (\mathbf{x}) \right), \right. \nonumber \\
    & \qquad \qquad \qquad \left. \mathbb{E}_{\chi} \sum_{i=1}^{\nR} \mathbb{E}_{\mathbf{u}_{i,\chi}} \left( \frac{d \left [ F(\mathbf{x} + \muR \mathbf{u}_{i,\chi}; \chi) - F(\mathbf{x}; \chi) \right ]}{\muR}  \mathbf{u}_{i,\chi} - \nabla f_{\muR} (\mathbf{x}) \right) \right\rangle = 0.
\end{align*}
Further, in \eqref{proof_eq_bd_var_RGE_2}, $(d)$ follows since
\begin{align*}
    \mathbb{E}_{\mathbf{u}} \frac{d \left [ F(\mathbf{x} + \muR \mathbf{u}; \xi) - F(\mathbf{x}; \xi) \right ]}{\muR}  \mathbf{u} = \nabla f_{\muR} (\mathbf{x}; \xi).
\end{align*}
Also, both the samples $\xi \in \br$ and the random directions $\{ \mathbf{u}_{i,\xi} \}$ are picked independently and uniformly. We denote by $\xi_1$ and $\mathbf{u}_{i,\xi}$, a representative sample of both sets respectively.
Next, we upper bound the two terms in \eqref{proof_eq_bd_var_RGE_2}. From Assumption \ref{assum_var_bound} and \cite[Lemma~4.2]{gao18information}, we obtain
\begin{align}
    \mathbb{E}_{\xi, \mathbf{u}} \left\| \frac{d \left [ F(\mathbf{x} + \muR \mathbf{u}; \xi) - F(\mathbf{x}; \xi) \right ]}{\muR}  \mathbf{u} - \nabla f_{\muR} (\mathbf{x}) \right\|^2 \leq 2d \left[ \left\| \nabla f (\mathbf{x}) \right\|^2 + \sigma^2 \right] + \frac{\muR^2 L^2 d^2}{2}. \label{proof_eq_bd_var_RGE_3}
\end{align}
Also, the second term in \eqref{proof_eq_bd_var_RGE_2} can be upper bounded as follows:
\begin{align}
    \mathbb{E}_{\xi} \left\| \nabla F_{\muR} (\mathbf{x}, \xi) - \nabla f_{\muR} (\mathbf{x}) \right\|^2 & \leq \mathbb{E}_{\xi} \left\| \nabla F_{\muR} (\mathbf{x}, \xi) \right\|^2 \nonumber \\
    & \leq 2 \mathbb{E}_{\xi} \left\| \nabla F (\mathbf{x}, \xi) \right\|^2 + \frac{\muR^2 L^2 d^2}{2} \qquad \qquad \qquad \tag*{\text{from \cite[Lemma~1]{liu2018zeroth}}}  \nonumber \\
    & \leq 2 \left[ \mathbb{E}_{\xi} \left\| \nabla F (\mathbf{x}, \xi) - \nabla f(\mbf x) \right\|^2 + \left\| \nabla f(\mbf x) \right\|^2 \right] + \frac{\muR^2 L^2 d^2}{2} \nonumber \\
    & \leq 2 \left[ \left\| \nabla f (\mathbf{x}) \right\|^2 + \sigma^2 \right] + \frac{\muR^2 L^2 d^2}{2}. \label{proof_eq_bd_var_RGE_4}
\end{align}
Substituting \eqref{proof_eq_bd_var_RGE_3}, \eqref{proof_eq_bd_var_RGE_4} in \eqref{proof_eq_bd_var_RGE_2}, we get
\begin{align}
    & \mathbb{E} \left\| \Hat{\nabla}_{\mathrm{RGE}} F \left(\mathbf x; \br \right) - \nabla f_{\muR} (\mathbf{x}) \right\|^2 \leq \frac{2}{|\br|} \left( 1 + \frac{d}{\nR} \right) \left[ \left\| \nabla f (\mathbf{x}) \right\|^2 + \sigma^2 \right] + \left( 1 + \frac{1}{\nR} \right) \frac{\muR^2 L^2 d^2}{2 |\br|}. \label{proof_eq_bd_var_RGE_5}
\end{align}
Substituting \eqref{proof_eq_bd_var_RGE_5} in \eqref{proof_eq_bd_var_RGE_1}, and using $\left\| \nabla f_{\muR} (\mathbf{x}) - \nabla f (\mathbf{x}) \right\| \leq \frac{\muR d L}{2}$ from \cite[Lemma~1]{liu2018zeroth}, we get
\begin{align}
    \mathbb{E} \left\| \Hat{\nabla}_{\mathrm{RGE}} F \left( \mathbf x; \br \right) - \nabla f \left( \mathbf{x} \right) \right\|^2 & \leq \frac{2}{|\br|} \left( 1 + \frac{d}{\nR} \right) \left[ \left\| \nabla f (\mathbf{x}) \right\|^2 + \sigma^2 \right] + \left( 1 + \frac{2}{|\br|} + \frac{2}{\nR |\br|} \right) \frac{\muR^2 L^2 d^2}{4}.
\end{align}
\end{proof}

\subsubsection{Proof of \eqref{eq_bd_var_CGE}}
\begin{proof}
The variance of CGE is bounded as
\begin{align}
    \mathbb{E} \left\| \Hat{\nabla}_{\mathrm{CGE}} F_{\mathcal{I}} (\mathbf{x}; \bc, \mathbf{p}) - \nabla f (\mathbf{x}) \right\|^2 &= \mathbb{E} \left\| \Hat{\nabla}_{\mathrm{CGE}} F_{\mathcal{I}} (\mathbf{x}; \bc, \mathbf{p}) - \Hat{\nabla}_{\mathrm{CGE}} f(\mathbf{x}) + \Hat{\nabla}_{\mathrm{CGE}} f(\mathbf{x}) - \nabla f (\mathbf{x}) \right\|^2 \nonumber \\
    & \overset{(e)}{=} \mathbb{E} \left\| \Hat{\nabla}_{\mathrm{CGE}} F_{\mathcal{I}} (\mathbf{x}; \bc, \mathbf{p}) - \Hat{\nabla}_{\mathrm{CGE}} f(\mathbf{x}) \right\|^2 + \left\| \Hat{\nabla}_{\mathrm{CGE}} f(\mathbf{x}) - \nabla f (\mathbf{x}) \right\|^2, \label{proof_prop_eq_bd_var_CGE_1}
\end{align}
where $\Hat{\nabla}_{\mathrm{CGE}} f(\mathbf{x})$ is the full coordinate CGE, and $(e)$ follows since $\mathbb{E} [ \Hat{\nabla}_{\mathrm{CGE}} F_{\mathcal{I}} (\mathbf{x}; \bc, \mathbf{p})] = \Hat{\nabla}_{\mathrm{CGE}} f(\mathbf{x})$. Now, we bound the first term in \eqref{proof_prop_eq_bd_var_CGE_1}.
\begin{align}
    & \mathbb{E} \left\| \Hat{\nabla}_{\mathrm{CGE}} F_{\mathcal{I}} (\mathbf{x}; \bc, \mathbf{p}) - \Hat{\nabla}_{\mathrm{CGE}} f(\mathbf{x}) \right\|^2 \overset{(f)}{=} \mathbb{E} \left\| \sum_{i=1}^d \frac{I(i \in \mathcal{I})}{p_i} \Hat{\nabla}_{\mathrm{CGE}} F_i (\mathbf{x}; \bc) - \sum_{i=1}^d \Hat{\nabla}_{\mathrm{CGE}} f_i(\mathbf{x}) \right\|^2 \nonumber \\
    & \qquad \overset{(g)}{=} \sum_{i=1}^d \mathbb{E} \left\| \frac{I(i \in \mathcal{I})}{p_i} \Hat{\nabla}_{\mathrm{CGE}} F_i (\mathbf{x}; \bc) - \Hat{\nabla}_{\mathrm{CGE}} f_i(\mathbf{x}) \right\|^2 \nonumber \\
    & \qquad = \sum_{i=1}^d \frac{1}{p_i^2} \mathbb{E} \left\| I(i \in \mathcal{I}) \Hat{\nabla}_{\mathrm{CGE}} F_i (\mathbf{x}; \bc) - I(i \in \mathcal{I}) \Hat{\nabla}_{\mathrm{CGE}} f_i(\mathbf{x}) + I(i \in \mathcal{I}) \Hat{\nabla}_{\mathrm{CGE}} f_i(\mathbf{x}) - p_i \Hat{\nabla}_{\mathrm{CGE}} f_i(\mathbf{x}) \right\|^2 \nonumber \\
    & \qquad \overset{(j)}{=} \sum_{i=1}^d \frac{1}{p_i^2} \left[ \mathbb{E} \left\| I(i \in \mathcal{I}) \Hat{\nabla}_{\mathrm{CGE}} F_i (\mathbf{x}; \bc) - I(i \in \mathcal{I}) \Hat{\nabla}_{\mathrm{CGE}} f_i(\mathbf{x}) \right\|^2 + \mathbb{E} \left\| I(i \in \mathcal{I}) \Hat{\nabla}_{\mathrm{CGE}} f_i(\mathbf{x}) - p_i \Hat{\nabla}_{\mathrm{CGE}} f_i(\mathbf{x}) \right\|^2 \right] \nonumber \\
    & \qquad = \sum_{i=1}^d \frac{1}{p_i^2} \left[ \mathbb{E}_{\mathcal{I}} I^2(i \in \mathcal{I}) \mathbb{E}_{\bc} \left\| \Hat{\nabla}_{\mathrm{CGE}} F_i (\mathbf{x}; \bc) - \Hat{\nabla}_{\mathrm{CGE}} f_i(\mathbf{x}) \right\|^2 + \mathbb{E}_{\mathcal{I}} (I(i \in \mathcal{I}) - p_i)^2 \left\|  \Hat{\nabla}_{\mathrm{CGE}} f_i(\mathbf{x}) \right\|^2 \right] \nonumber \\
    & \qquad \overset{(k)}{=} \sum_{i=1}^d \frac{1}{p_i} \left[ \mathbb{E}_{\bc} \left\| \Hat{\nabla}_{\mathrm{CGE}} F_i (\mathbf{x}; \bc) - \Hat{\nabla}_{\mathrm{CGE}} f_i(\mathbf{x}) \right\|^2 + (1 - p_i) \left\|  \Hat{\nabla}_{\mathrm{CGE}} f_i(\mathbf{x}) \right\|^2 \right], \label{proof_prop_eq_bd_var_CGE_2}
\end{align}%
where the steps in the derivation of \eqref{proof_prop_eq_bd_var_CGE_2} follow by the reasoning given below:
\begin{itemize}
    \item $(f)$ follows from the definitions of $\Hat{\nabla}_{\mathrm{CGE}} F_{\mathcal{I}} (\mathbf{x}; \bc, \mathbf{p})$ in \eqref{eq_CGE_nonuniform};
    \item $(g)$ follows since $\Hat{\nabla}_{\mathrm{CGE}} F_i (\mathbf{x}_t; \bct), \Hat{\nabla}_{\mathrm{CGE}} f_i(\mathbf{x})$ is aligned with the canonical basis vector $\mbf e_i$, for all $i$;
    \item $(j)$ follows since the stochastic sample set $\bc$ is sampled independent of the coordinate set $\mathcal{I}$. Therefore,
    \begin{align*}
        & \mathbb{E}_{\mathcal{I}, \bc} \left\langle I(i \in \mathcal{I}) \Hat{\nabla}_{\mathrm{CGE}} F_i (\mathbf{x}; \bc) - I(i \in \mathcal{I}) \Hat{\nabla}_{\mathrm{CGE}} f_i(\mathbf{x}), I(i \in \mathcal{I}) \Hat{\nabla}_{\mathrm{CGE}} f_i(\mathbf{x}) - p_i \Hat{\nabla}_{\mathrm{CGE}} f_i(\mathbf{x}) \right\rangle \\
        & \quad = \mathbb{E}_{\mathcal{I}} \left\langle (I(i \in \mathcal{I})) \left( \mathbb{E}_{\bc} \Hat{\nabla}_{\mathrm{CGE}} F_i (\mathbf{x}; \bc) - \Hat{\nabla}_{\mathrm{CGE}} f_i(\mathbf{x}) \right), (I(i \in \mathcal{I}) - p_i) \Hat{\nabla}_{\mathrm{CGE}} f_i(\mathbf{x}) \right\rangle = 0,
    \end{align*}
    \item $(k)$ follows since the indicator function $I(i \in \mathcal{I})$ is a Bernoulli random variable with $\mathrm{Pr} (i \in \mathcal{I}) = p_i$. Therefore, $\mbb E_{\mathcal{I}} [ I^2(i \in \mathcal{I}) ] = 1 \cdot \mathrm{Pr} (i \in \mathcal{I}) + 0 \cdot \mathrm{Pr} (i \notin \mathcal{I}) = p_{i}$. Also, variance of this Bernoulli random variable is given by $\mathrm{var}(I(i \in \mathcal{I})) = \mathbb{E}_{\mathcal{I}} (I(i \in \mathcal{I}) - p_i)^2 = p_i (1-p_i)$.
\end{itemize}

\noindent Next, we bound the first term in \eqref{proof_prop_eq_bd_var_CGE_2} as follows.
\begin{align}
    \mathbb{E}_{\bc} \left\| \Hat{\nabla}_{\mathrm{CGE}} F_i (\mathbf{x}; \bc) - \Hat{\nabla}_{\mathrm{CGE}} f_i (\mathbf{x}) \right\|^2 &= \frac{1}{|\bc|^2} \sum_{\xi \in \bc} \mathbb{E}_{\xi} \left\| \Hat{\nabla}_{\mathrm{CGE}} F_i (\mathbf{x}; \xi) - \Hat{\nabla}_{\mathrm{CGE}} f_i (\mathbf{x}) \right\|^2 \nonumber \\
    &= \frac{1}{|\bc|} \mathbb{E}_{\xi_1} \left\| \Hat{\nabla}_{\mathrm{CGE}} F_i (\mathbf{x}; \xi_1) - \Hat{\nabla}_{\mathrm{CGE}} f_i (\mathbf{x}) \right\|^2 \qquad \text{ for some } \xi_1 \in \bc \nonumber \\
    & \overset{(\ell)}{\leq} \frac{3}{|\bc|} \mathbb{E}_{\xi_1} \left[ \left\| \Hat{\nabla}_{\mathrm{CGE}} F_i (\mathbf{x}; \xi_1) - \mbf e_i \mbf e_i^T \nabla F (\mathbf{x}; \xi_1) \right\|^2 \right. \nonumber \\
    & \left. + \left\| \mbf e_i \mbf e_i^T \left( \nabla F (\mathbf{x}; \xi_1) - \nabla f (\mathbf{x}) \right) \right\|^2  + \left\| \mbf e_i \mbf e_i^T \nabla f (\mathbf{x}) - \Hat{\nabla}_{\mathrm{CGE}} f_i (\mathbf{x}) \right\|^2 \right] \nonumber \\
    & \overset{(m)}{\leq} \frac{3}{|\bc|} \left[ \frac{L^2 \muCi^2}{4} + \zeta^2 + \frac{L^2 \muCi^2}{4} \right] = \frac{3}{|\bc|} \left( \zeta^2 + \frac{L^2 \muCi^2}{2} \right), \label{proof_prop_eq_bd_var_CGE_3}
\end{align}
where $(\ell)$ follows from the inequality $\| \sum_{i=1}^s \mbf x_i \|^2 \leq s \sum_{i=1}^s \| \mbf x_i \|^2$. $(m)$ follows from the coordinate-wise variance bound in Assumption \ref{assum_var_bound} and \cite[Lemma~3.2]{liu2018zeroth}. The second term in \eqref{proof_prop_eq_bd_var_CGE_2} can be bounded as
\begin{align}
    \left\| \Hat{\nabla}_{\mathrm{CGE}} f_i(\mathbf{x}) \right\|^2 & \leq 2 \left\|  \Hat{\nabla}_{\mathrm{CGE}} f_i(\mathbf{x}) - \mbf e_i \mbf e_i^T \nabla f (\mathbf{x}) \right\|^2 + 2 \left\| \mbf e_i \mbf e_i^T \nabla f (\mathbf{x}) \right\|^2 \leq \frac{L^2 \muCi^2}{2} + 2 \left( \nabla f (\mathbf{x}) \right)_i^2, \label{proof_prop_eq_bd_var_CGE_4}
\end{align}
where $(\mbf x)_i$ denotes the $i$-th coordinate of the vector $\mbf x$.
Substituting \eqref{proof_prop_eq_bd_var_CGE_3}, \eqref{proof_prop_eq_bd_var_CGE_4} in \eqref{proof_prop_eq_bd_var_CGE_2}, we get
\begin{align}
    & \mathbb{E} \left\| [\Hat{\nabla}_{\mathrm{CGE}} F_{\mathcal{I}} (\mathbf{x}; \bc, \mathbf{p})] - \Hat{\nabla}_{\mathrm{CGE}} f(\mathbf{x}) \right\|^2 \nonumber \\
    & \qquad \leq \sum_{i=1}^d \frac{1}{p_i} \left[ \frac{3}{|\bc|} \left( \zeta^2 + \frac{L^2 \muCi^2}{2} \right) + (1-p_i) \left\{ \frac{L^2 \muCi^2}{2} + 2 \left( \nabla f (\mathbf{x}) \right)_i^2 \right\} \right] \nonumber \\
    & \qquad = \sum_{i=1}^d \frac{1}{p_i} \left[ \frac{3}{|\bc|} \left( \zeta^2 + \frac{L^2 \muCi^2}{2} \right) + \frac{L^2 \muCi^2}{2} + 2 \left( \nabla f (\mathbf{x}) \right)_i^2 \right] - \frac{L^2}{2} \sum_{i=1}^d \muCi^2 - 2 \left\| \nabla f (\mathbf{x}) \right\|^2. \label{proof_prop_eq_bd_var_CGE_5}
\end{align}
Next, we bound the second term $\left\| \nabla f (\mathbf{x}) - \Hat{\nabla}_{\mathrm{CGE}} f (\mathbf{x}) \right\|^2$ in \eqref{proof_prop_eq_bd_var_CGE_1}. This is done by improving slightly the corresponding result in \cite[Lemma~3]{liu2018zeroth}.
\begin{align}
    \left\| \nabla f (\mathbf{x}) - \Hat{\nabla}_{\mathrm{CGE}} f (\mathbf{x}) \right\|^2 &= \left\| \sum_{i=1}^d \left( \mbf e_i \mbf e_i^T \Hat{\nabla}_{\mathrm{CGE}} f (\mathbf{x}) - (\nabla f (\mathbf{x}))_i \right) \right\|^2 \nonumber \\
    &= \sum_{i=1}^d \left\| \left( \mbf e_i \mbf e_i^T \Hat{\nabla}_{\mathrm{CGE}} f (\mathbf{x}) - (\nabla f (\mathbf{x}))_i \right) \right\|^2 \leq \sum_{i=1}^d \frac{L^2 \muCi^2}{4}, \label{proof_prop_eq_bd_var_CGE_6}
\end{align}
where $(\mbf x)_i$ denotes the $i$-th component of the vector $\mbf x$. The last inequality follows from \cite[Lemma~3]{liu2018zeroth}. Finally, substituting \eqref{proof_prop_eq_bd_var_CGE_5}, \eqref{proof_prop_eq_bd_var_CGE_6} in \eqref{proof_prop_eq_bd_var_CGE_1}, and rearranging the terms, we get
\begin{align}
    \mathbb{E} \left\| [\Hat{\nabla}_{\mathrm{CGE}} F_{\mathcal{I}} (\mathbf{x}; \bc, \mathbf{p})] - \nabla f (\mathbf{x}) \right\|^2 & \leq \sum_{i=1}^d \frac{1}{p_i} \left[ \frac{3}{|\bc|} \left( \zeta^2 + \frac{L^2 \muCi^2}{2} \right) + \frac{L^2 \muCi^2}{2} + 2 \left( \nabla f (\mathbf{x}) \right)_i^2 \right] - 2 \left\| \nabla f (\mathbf{x}) \right\|^2. \label{proof_prop_eq_bd_var_CGE_7}
\end{align}
\end{proof}

\subsection{Choice of $\alpha$}
\label{app_alpha_choice}
Substituting the upper bounds from \eqref{eq_bd_var_RGE}, \eqref{eq_bd_var_CGE} in \eqref{eq_var_HGE}, we get
\begin{align}
    & \mathbb{E} \left\| \Hat{\nabla}_{\mathrm{HGE}} F(\mathbf x; \br, \bc, \mathcal{I}) - \nabla f (\mathbf{x}) \right\|^2 \nonumber \\
    & \quad \leq 2 \alpha^2 \mathbb{E} \left\| \Hat{\nabla}_{\mathrm{RGE}} F (\mathbf x; \br) - \nabla f (\mathbf{x}) \right\|^2 + 2 (1-\alpha)^2 \mathbb{E} \left\|   [\Hat{\nabla}_{\mathrm{CGE}} F_{\mathcal{I}} (\mathbf{x}; \bc, \mathbf{p})] - \nabla f (\mathbf{x}) \right\|^2 \nonumber \\
    & \quad \leq 2 \alpha^2 \left[ \frac{2}{|\br|} \left( 1 + \frac{d}{\nR} \right) \mathbb{E} \left\| \nabla f (\mathbf{x}) \right\|^2 + \frac{2 \sigma^2}{|\br|} \left( 1 + \frac{d}{\nR} \right) + \left( 1 + \frac{2}{|\br|} + \frac{2}{\nR |\br|} \right) \frac{\muR^2 L^2 d^2}{4} \right] \nonumber \\
    & \qquad \qquad + 2 (1-\alpha)^2 \left[ \sum_{i=1}^d \frac{1}{p_i} \Big[ 2 \left( \nabla f (\mathbf{x}) \right)_i^2 + \frac{3}{|\bc|} \left( \zeta^2 + \frac{L^2 \muCi^2}{2} \right) + \frac{L^2 \muCi^2}{2} \Big] - 2 \left\| \nabla f (\mathbf{x}) \right\|^2 \right]. \label{eq_app_alpha_choice_1}
\end{align}
The following steps to choose $\alpha$ are quite similar to those involved in the proof of Theorem \ref{thm_nonconvex}. We describe them here as well, for ease of understanding.
In \eqref{eq_app_alpha_choice_1}, for the time being, we ignore the terms involving $\nabla f(\mbf x)$ (these will be taken care of in Appendix \ref{proof_thm_nonconvex}). 
We assume the CGE smoothing parameters to be constant across coordinates, i.e., $\muCi = \muC$, for all $i \in [d]$.
Using $|\br| \geq 1, |\bc| \geq 1$, from \eqref{eq_app_alpha_choice_1} we can bound the remaining terms to get
\begin{align}
	2 \alpha^2 \left[ 2 \sigma^2 \left( 1 + \frac{d}{\nR} \right) + \left( 3 + \frac{2}{\nR} \right) \frac{\muR^2 L^2 d^2}{4} \right] + 2 (1-\alpha)^2 \left[ \sum_{i=1}^d \frac{1}{p_i} \Big[ 3 \zeta^2 + 2 L^2 \muC^2 \Big] \right]. \label{eq_app_alpha_choice_2}
\end{align}
We denote $\bar{P} = \frac{1}{d} \sum_{i=1}^d \frac{1}{p_{i}}$. Note that $\sigma^2 = d \zeta^2$. 
Also, we choose the smoothing parameters $\muC, \muR$ to be small enough such that $\sigma^2 \geq L^2 \muC^2 d, \sigma^2 \geq L^2 d^2 \muR^2/4$. 
Consequently, 
\begin{align*}
    2 \sigma^2 \left( 1 + \frac{d}{\nR} \right) + \left( 3 + \frac{2}{\nR} \right) \frac{\muR^2 L^2 d^2}{4} & \leq 5 \sigma^2 \left( 1 + \frac{d}{\nR} \right), \\
    \bar{P} \left[ 3 \sigma^2 + 2 L^2 \muC^2 d \right] & \leq 5 \sigma^2 \bar{P}.
\end{align*}
Substituting in \eqref{eq_app_alpha_choice_2}, we get
\begin{align}
	10 \sigma^2 \left[ \alpha^2 \left( 1 + \frac{d}{\nR} \right) + (1-\alpha)^2 \bar{P} \right]. \label{eq_app_alpha_choice_3}
\end{align}
Then, the optimal combination coefficient $\alpha^*$, which minimizes \eqref{eq_app_alpha_choice_3} is given by
\begin{align}
	\alpha^* &= \left[ 1 + \frac{ 1 + \frac{d}{\nR} }{ \bar{P} } \right]^{-1}.
% 	\label{proof_eq_thm_nonconvex_7}
\end{align}

\newpage
\section{Non-convex Case}
\label{app:nonconvex}
\subsection{Proof of Proposition \ref{prop_inner_prod}}
\label{proof_prop_inner_prod}
We state the expressions from Proposition \ref{prop_inner_prod} here for ease of reference.
\begin{align}
    \mathrm{I} & \leq -\frac{3}{4} \left\| \nabla f (\mathbf{x}_{t}) \right\|^2 + \left( \dfrac{\muR d L}{2} \right)^2, \nonumber \\
    \mathrm{II} & \leq -\frac{3}{4} \left\| \nabla f (\mathbf{x}_{t}) \right\|^2 + L^2 d \muC^2. \nonumber
\end{align}
\begin{proof}
The inner product term $\mathrm{I}$ in \eqref{eq_f_smoothness_exp1} can be bounded as follows.
\begin{align}
	\mathrm{I} = - \left\langle \nabla f (\mathbf{x}_{t}), \nabla f_{\muR} (\mathbf{x}_{t}) \right\rangle &= - \left[ \frac{\left\| \nabla f (\mathbf{x}_{t}) \right\|^2 + \left\| \nabla f_{\muR} (\mathbf{x}_{t}) \right\|^2 - \left\| \nabla f (\mathbf{x}_{t}) - \nabla f_{\muR} (\mathbf{x}_{t}) \right\|^2}{2} \right] \nonumber \\
	& \overset{(a)}{\leq} -\frac{\left\| \nabla f (\mathbf{x}_{t}) \right\|^2}{2} + \frac{1}{2} \left( \left\| \nabla f (\mathbf{x}_{t}) - \nabla f_{\muR} (\mathbf{x}_{t}) \right\|^2 - \frac{\left\| \nabla f (\mathbf{x}_{t}) \right\|^2}{2} \right) + \frac{1}{2} \left( \frac{\muR d L}{2} \right)^2 \nonumber \\
	& \leq -\frac{3}{4} \left\| \nabla f (\mathbf{x}_{t}) \right\|^2 + \frac{\muR^2 d^2 L^2}{4} \label{proof_eq_inner_prod_bd1}
\end{align}
where $(a)$ follows from the inequalities (i) $\| \mathbf{a} \|^2 \geq \frac{\| \mathbf{b} \|^2}{2} - \| \mathbf{a} - \mathbf{b} \|^2$, and (ii) $\| \nabla f (\mathbf{x}) - \nabla f_{\muR} (\mathbf{x}) \| \leq \frac{\muR d L}{2}$ \cite[Lemma~4.1]{gao18information}.
Similarly, we can upper bound $\mathrm{II}$ in \eqref{eq_f_smoothness_exp1}.
\begin{align}
	\mathrm{II} = -\left\langle \nabla f (\mathbf{x}_{t}), \Hat{\nabla}_{\mathrm{CGE}} f (\mathbf{x}_{t}) \right\rangle & \leq -\frac{3}{4} \left\| \nabla f (\mathbf{x}_{t}) \right\|^2 + \left\| \nabla f (\mathbf{x}_{t}) - \Hat{\nabla}_{\mathrm{CGE}} f (\mathbf{x}_{t}) \right\|^2 \nonumber \\
	& \overset{(b)}{\leq} -\frac{3}{4} \left\| \nabla f (\mathbf{x}_{t}) \right\|^2 + L^2 d \muC^2, \label{proof_eq_inner_prod_bd2}
\end{align}
where $(b)$ follows from \cite[Lemma~3]{zo_spider_liang19icml}.
\end{proof}

\subsection{Proof of Proposition \ref{prop_bd_norm_sq_rge}}
\label{proof_prop_bd_norm_sq_rge}
We state the expressions from Proposition \ref{prop_bd_norm_sq_rge} here for ease of reference.
\begin{align}
	\mathrm{III} &= \mathbb{E}_{Y_t} \left[ \left\| \nabla_{\mathrm{r},t} \right\|^2 \mid \mathcal{Y}_t \right] \leq 2 \left\| \nabla f (\mathbf{x}_t) \right\|^2 + \frac{4}{|\brt|} \left( 1 + \frac{d}{\nR} \right) \left[ \left\| \nabla f (\mathbf{x}) \right\|^2 + \sigma^2 \right] + \left( 1 + \frac{2}{|\brt|} + \frac{2}{\nR |\brt|} \right) \frac{\muR^2 L^2 d^2}{2}, \\
	\mathrm{IV} &= \mathbb{E}_{Y_t} \left[ \left\| \nabla_{\mathrm{c},t} \right\|^2 \mid \mathcal{Y}_t \right] \leq \sum_{i=1}^d \frac{1}{p_{t,i}} \left[ \frac{3 \zeta^2}{|\bct|} + \frac{L^2 \muC^2}{2} \left( 1 + \frac{3}{|\bct|} \right) + 2 \left( \nabla f (\mathbf{x}_{t}) \right)_i^2 \right].
\end{align}

\begin{proof}
First, we prove the bound on $\mathrm{III}$.
\begin{align}
	\mathrm{III} = \mathbb{E}_{Y_t} \left[ \left\| \nabla_{\mathrm{r},t} \right\|^2 \mid \mathcal{Y}_t \right] &= \mathbb{E} \left[ \left\| \nabla_{\mathrm{r},t} - \nabla f (\mathbf{x}_t) + \nabla f (\mathbf{x}_t) \right\|^2 \big| \mathcal{Y}_t \right] \nonumber \\
	&\overset{(a)}{\leq} 2 \mathbb{E} \left[ \left\| \nabla_{\mathrm{r},t} - \nabla f (\mathbf{x}_t) \right\|^2 \big| \mathcal{Y}_t \right] + 2 \left\| \nabla f (\mathbf{x}_t) \right\|^2 \nonumber \\
	& \overset{(b)}{\leq} \frac{4}{|\brt|} \left( 1 + \frac{d}{\nR} \right) \left[ \left\| \nabla f (\mathbf{x}) \right\|^2 + \sigma^2 \right] + \left( 1 + \frac{2}{|\brt|} + \frac{2}{\nR |\brt|} \right) \frac{\muR^2 L^2 d^2}{2} + 2 \left\| \nabla f (\mathbf{x}_t) \right\|^2, \nonumber
\end{align}
where $(a)$ follows from $\| \mbf a + \mbf b \|^2 \leq 2 \| \mbf a \|^2 + 2 \| b \|^2$; $(b)$ follows from \eqref{eq_bd_var_RGE} in Proposition \ref{prop_bd_var_RGE_CGE}, proved in Appendix \ref{proof_prop_bd_var_RGE_CGE}. Next, we bound $\mathrm{IV}$. Here, rather than using the bound in Proposition \ref{prop_bd_var_RGE_CGE}, we follow a slightly different route to derive a tighter bound.
\begin{align}
	\mathrm{IV} = \mathbb{E} \left[ \left\| \nabla_{\mathrm{c},t} \right\|^2 \mid \mathcal{Y}_t \right] &= \mathbb{E} \left[ \left\| \sum_{i=1}^d \frac{I(i \in \mathcal{I}_t)}{p_{t,i}} \Hat{\nabla}_{\mathrm{CGE}} F_i (\mathbf{x}_t; \bct) \right\|^2 \Big| \mathcal{Y}_t \right] \qquad \qquad (\text{see } \eqref{eq_CGE_nonuniform}) \nonumber \\
	&\overset{(c)}{=} \sum_{i=1}^d \mathbb{E} \left[ \left\| \frac{I(i \in \mathcal{I}_t)}{p_{t,i}} \Hat{\nabla}_{\mathrm{CGE}} F_i (\mathbf{x}_t; \bct) \right\|^2 \Big| \mathcal{Y}_t \right] \nonumber \\
	&= \sum_{i=1}^d \frac{1}{p_{t,i}^2} \left[ \mathbb{E}_{\mathcal{I}_t} \left( I(i \in \mathcal{I}_t) \right)^2 \mathbb{E}_{\bct} \left\| \frac{1}{|\bct|} \sum_{\xi \in \bct} \Hat{\nabla}_{\mathrm{CGE}} F_i (\mathbf{x}_t; \xi) \right\|^2 \Big| \mathcal{Y}_t \right] \nonumber \\
	& \overset{(d)}{=} \sum_{i=1}^d p_{t,i} \frac{1}{p_{t,i}^2} \mathbb{E} \left[ \frac{1}{|\bct|} \left\| \Hat{\nabla}_{\mathrm{CGE}} F_i (\mathbf{x}_t; \xi_1) - \Hat{\nabla}_{\mathrm{CGE}} f_i (\mathbf{x}_t) \right\|^2 + \left\| \Hat{\nabla}_{\mathrm{CGE}} f_i (\mathbf{x}_t) \right\|^2 \Big| \mathcal{Y}_t \right]. \label{proof_eq_prop_bd_norm_sq_rge_1}
\end{align}
where $(c)$ follows from \eqref{eq_CGE_1} since $\Hat{\nabla}_{\mathrm{CGE}} F_i (\mathbf{x}_t; \bct)$ is aligned with the canonical basis vector $\mbf e_i$, for all $i$. $(d)$ follows since $\mbb E [ (I(i \in \mathcal{I}_t))^2 ] = p_{t,i}$ as seen in Appendix \ref{proof_prop_bd_var_RGE_CGE}. Also, $\mathcal{I}_t$ is independent of $\bct$, and the elements of $\bct$ are again sampled independent of each other, and since $\mbb E_{\bct} \Hat{\nabla}_{\mathrm{CGE}} F_i (\mathbf{x}_t; \bct) = \Hat{\nabla}_{\mathrm{CGE}} f_i (\mathbf{x}_t)$.
Next, we upper bound the two terms in \eqref{proof_eq_prop_bd_norm_sq_rge_1}.
\begin{align}
    \mathbb{E} \left\| \Hat{\nabla}_{\mathrm{CGE}} F_i (\mathbf{x}_t; \xi_1) - \Hat{\nabla}_{\mathrm{CGE}} f_i (\mathbf{x}_t) \right\|^2 \leq 3 \left[ \zeta^2 + \frac{L^2 \muCi^2}{2} \right], \label{proof_eq_prop_bd_norm_sq_rge_2}
\end{align}
follows from \eqref{proof_prop_eq_bd_var_CGE_3}.
The second term in \eqref{proof_eq_prop_bd_norm_sq_rge_1} can be bounded using \eqref{proof_prop_eq_bd_var_CGE_4}, 
\begin{align}
    \left\| \Hat{\nabla}_{\mathrm{CGE}} f_i (\mathbf{x}_t) \right\|^2 & \leq \frac{L^2 \muCi^2}{2} + 2 \left( \nabla f (\mathbf{x}_{t}) \right)_i^2, \label{proof_eq_prop_bd_norm_sq_rge_3}
\end{align}
where $(\mbf x)_i$ denotes the $i$-th coordinate of the vector $\mbf x$. Substituting \eqref{proof_eq_prop_bd_norm_sq_rge_2}, \eqref{proof_eq_prop_bd_norm_sq_rge_3} in \eqref{proof_eq_prop_bd_norm_sq_rge_1}, we get
\begin{align}
	\mathbb{E} \left[ \left\| \nabla_{\mathrm{c},t} \right\|^2 \mid \mathcal{Y}_t \right] \leq \sum_{i=1}^d \frac{1}{p_{t,i}} \left[ \frac{3 \zeta^2}{|\bct|} + \frac{L^2 \muCi^2}{2} \left( 1 + \frac{3}{|\bct|} \right) + 2 \left( \nabla f (\mathbf{x}_{t}) \right)_i^2 \right]. \nonumber
\end{align}
Taking $\muCi = \muC$, for all $i \in [d]$, we get the bound on $\mathrm{IV}$.
\end{proof}

\subsection{Proof of Theorem \ref{thm_nonconvex}}
\label{proof_thm_nonconvex}
\begin{proof}
Substituting the bounds on $\mathrm{I, II}$ from Proposition \ref{prop_inner_prod}, and the bounds on $\mathrm{III, IV}$ from Proposition \ref{prop_bd_norm_sq_rge} in \eqref{eq_f_smoothness_exp1}, and taking expectation over the entire randomness till time $t$, we get
\begin{align}
	\mathbb{E} f (\mathbf{x}_{t+1}) & \leq \mathbb{E} f (\mathbf{x}_{t}) -\frac{3\eta_t \alpha_t}{4} \mathbb{E} \left\| \nabla f (\mathbf{x}_{t}) \right\|^2 + \eta_t \alpha_t \frac{\muR^2 d^2 L^2}{4} -\frac{3\eta_t (1 - \alpha_t)}{4} \mathbb{E} \left\| \nabla f (\mathbf{x}_{t}) \right\|^2 + \eta_t (1 - \alpha_t) L^2 d \muC^2 \nonumber \\
	& \qquad + \eta_t^2 L \alpha_t^2 \left[ 2 \left( 1 + \frac{2}{|\brt|} + \frac{2 d}{\nR |\brt|} \right) \left\| \nabla f (\mathbf{x}) \right\|^2 + \frac{4 \sigma^2}{|\brt|} \left( 1 + \frac{d}{\nR} \right) + \left( 1 + \frac{2}{|\brt|} + \frac{2}{\nR |\brt|} \right) \frac{\muR^2 L^2 d^2}{2} \right] \nonumber \\
	& \qquad + \eta_t^2 L (1-\alpha_t)^2 \sum_{i=1}^d \frac{1}{p_{t,i}} \left[  \frac{3}{|\bct|} \left( \zeta^2 + \frac{L^2 \muCi^2}{2} \right) + \frac{L^2 \muCi^2}{2} + 2 \mbb E \left( \nabla f (\mathbf{x}_{t}) \right)_i^2 \right]. \label{proof_eq_thm_nonconvex_1}
\end{align}
Note that $\sigma^2 = d \zeta^2$. Using $p_{t,i} \geq c_t$ for all $i \in [d]$, and $|\brt| \geq 1, |\bct| \geq 1, \nR \geq 1$, \eqref{proof_eq_thm_nonconvex_1} leads to
\begin{align}
	\mathbb{E} f (\mathbf{x}_{t+1}) & \leq \mathbb{E} f (\mathbf{x}_{t}) - \eta_t \left\{ \frac{3}{4} - 6 \eta_t L \alpha_t^2 \left( 1 + \frac{d}{\nR} \right) - 2 L \frac{\eta_t (1 - \alpha_t)^2}{c_t} \right\} \mathbb{E} \left\| \nabla f (\mathbf{x}_{t}) \right\|^2 + \eta_t \alpha_t \frac{L^2 \muR^2 d^2}{4} \nonumber \\
	& + \eta_t (1 - \alpha_t) L^2 d \muC^2 + 4 \eta_t^2 L \alpha_t^2 \left( 1 + \frac{d}{\nR} \right) \left[ \muR^2 d^2 L^2 + \sigma^2  \right] + 4 L \eta_t^2 (1 - \alpha_t)^2 \sum_{i=1}^d \frac{1}{p_{t,i}} \left[ \zeta^2 + L^2 \muC^2 \right]. \label{proof_eq_thm_nonconvex_2}
\end{align}
Henceforth, we assume constant step-sizes $\eta_t= \eta$ for all $t$, and constant combination coefficients $\alpha_t = \alpha$, for all $t$. We denote $P_t = \frac{1}{d} \sum_{i=1}^d \frac{1}{p_{t,i}}$. 
% Also, the sample set sizes are assumed constant across time, i.e., $|\brt| = |\br|$ and $|\bct| = |\bc|$ for all $t$. 
Rearranging the terms in \eqref{proof_eq_thm_nonconvex_2}, summing over $t = 0$ to $T-1$, and dividing by $\eta T$ we get
\begin{align}
	\frac{1}{T} \sum_{t=0}^{T-1} & \left\{ \frac{3}{4} - \eta L 6 \alpha^2 \left( 1 + \frac{d}{\nR} \right) - 2 L \frac{\eta (1 - \alpha)^2}{c_t} \right\} \mathbb{E} \left\| \nabla f (\mathbf{x}_{t}) \right\|^2 \leq \frac{f (\mathbf{x}_{0}) - \mathbb{E} f (\mathbf{x}_{T})}{\eta T} + L^2 d \muC^2 \nonumber \\
	& + \alpha \left( \frac{L^2 \muR^2 d^2}{4} - L^2 d \muC^2 \right) + 4 \eta L \alpha^2 \left( 1 + \frac{d}{\nR} \right) \left[ \muR^2 d^2 L^2 + \sigma^2 \right] + 4 L \eta (1 - \alpha)^2 \left[ \sigma^2 + L^2 \muC^2 d \right] \frac{1}{T} \sum_{t=0}^{T-1} P_t. \label{proof_eq_thm_nonconvex_3}
\end{align}

\noindent To ease the notation, we define the following constants.
\begin{align*}
	A = 4 \sigma^2 + 4 L^2 \muC^2 d, \quad  C = 4 \left( 1 + \frac{d}{\nR} \right) \left[ \muR^2 d^2 L^2 + \sigma^2 \right], \quad (\Delta f) = f(\mathbf{x}_0) - f^*, \quad \bar{P}_T =  \frac{1}{T} \sum_{t=0}^{T-1} P_t.
\end{align*}
Further, we select the smoothing parameters $\muC, \muR$ such that $\frac{L^2 \muR^2 d^2}{4} = L^2 d \muC^2$. This simplifies \eqref{proof_eq_thm_nonconvex_3} to
\begin{align}
	\frac{1}{T} \sum_{t=0}^{T-1} & \left\{ \frac{3}{4} - \eta L 4 \alpha^2 \left( 1 + \frac{d}{\nR} \right) - 2 L \frac{\eta (1 - \alpha)^2}{c_t} \right\} \mathbb{E} \left\| \nabla f (\mathbf{x}_{t}) \right\|^2 \leq \frac{(\Delta f)}{\eta T} + L^2 d \muC^2 + \alpha^2 \eta L C + L \eta (1 - \alpha)^2 A \bar{P}_T. \label{proof_eq_thm_nonconvex_4}
\end{align}
We choose $\eta$ such that $\frac{3}{4} - \eta L 6  \alpha^2 \left( 1 + \frac{d}{\nR} \right) - 2 L \frac{\eta (1 - \alpha)^2}{c_t} \geq \frac{1}{2}, \forall \ t$. This leads to
\begin{align}
	\eta &\leq \frac{1}{8L} \left[ 3 \alpha^2 \left( 1 + \frac{d}{\nR} \right) + \frac{(1 - \alpha)^2}{c_t} \right]^{-1} \nonumber \\
	\Rightarrow \eta &\leq \frac{1}{24 L} \min \left\{ 3 c_t, \frac{\nR}{d + \nR} \right\}, \forall \ t, \label{eq_bound_stepsize}
\end{align}
where \eqref{eq_bound_stepsize} follows since $\left[ 3 \alpha^2 \left( 1 + \frac{d}{\nR} \right) + \frac{(1 - \alpha)^2}{c_t} \right]^{-1}$ must attain its minimum value over $[0,1]$ at one of the end points.
Optimizing for $\eta$ in \eqref{proof_eq_thm_nonconvex_4} while satisfying \eqref{eq_bound_stepsize}, we get
\begin{align}
	\frac{1}{2T} \sum_{t=0}^{T-1} \mathbb{E} \left\| \nabla f (\mathbf{x}_{t}) \right\|^2 & \leq 2 \sqrt{ \frac{(\Delta f) L}{T} \left( \alpha^2 C + (1 - \alpha)^2 A \bar{P}_T \right)} + L^2 d \muC^2. \label{proof_eq_thm_nonconvex_5}
\end{align}
% The value of $\eta$ is $\sqrt{ \frac{(\muR f)}{T L} \frac{C + \bar{P}_T A}{C \bar{P}_T A}}$. 
Note that in \eqref{proof_eq_thm_nonconvex_5}, $L^2 d \muC^2$ needs to be small to ensure convergence, and the smoothness parameter $\muC$ can be designed for that purpose. 
However, $\sigma^2$ being the variance, is not in our control. We choose the smoothing parameters $\muC, \muR$ to be small enough such that $\sigma^2 \geq L^2 \muC^2 d = L^2 d^2 \muR^2/4$. 
Consequently, $A \leq 8 \sigma^2$, $C \leq 8 \sigma^2 \left( 1 + \frac{d}{\nR} \right)$. Substituting in \eqref{proof_eq_thm_nonconvex_5}, we get
\begin{align}
	\frac{1}{2T} \sum_{t=0}^{T-1} \mathbb{E} \left\| \nabla f (\mathbf{x}_{t}) \right\|^2 & \leq 2 \sqrt{ \frac{(\Delta f) L}{T} 8 \sigma^2 \left( \alpha^2 \left( 1 + \frac{d}{\nR} \right) + (1 - \alpha)^2 \bar{P}_T \right)} + L^2 d \muC^2.
	\label{proof_eq_thm_nonconvex_6}
\end{align}
Then, the optimal combination coefficient $\alpha^*$ is given by
\begin{align}
	\alpha^* &= \left[ 1 + \frac{ 1 + \frac{d}{\nR} }{ \bar{P}_T } \right]^{-1}. \label{proof_eq_thm_nonconvex_7}
\end{align}
Substituting \eqref{proof_eq_thm_nonconvex_7} in \eqref{proof_eq_thm_nonconvex_6}, we get
\begin{align}
	\frac{1}{2T} \sum_{t=0}^{T-1} \mathbb{E} \left\| \nabla f (\mathbf{x}_{t}) \right\|^2 & \leq 2 \sqrt{ \frac{(\Delta f) L}{T} 8 \sigma^2 \left( \dfrac{1 + \frac{d}{\nR} }{1 + \frac{ 1 + \frac{d}{\nR} }{ \bar{P}_T }} \right)} + L^2 d \muC^2. \label{proof_eq_thm_nonconvex_8}
\end{align}
Since $\sigma^2 \geq L^2 \muR^2 d^2$, we choose $\muC$ such that
\begin{align*}
	L^2 d \muC^2 = \mathcal{O} \left( \sqrt{ \frac{(1 + d/\nR)}{T} \dfrac{1}{1 + \frac{(1 + d/\nR)}{\bar{P}_T}} } \right) \quad \Rightarrow \quad \muC = \mathcal{O} \left( \left( \frac{(1 + d/\nR)}{d^2 T} \right)^{1/4} \dfrac{1}{ \left( 1 + \frac{(1 + d/\nR)}{\bar{P}_T} \right)^{1/4}} \right).
\end{align*}
Substituting $\muC, \muR = \frac{2 \muC}{\sqrt{d}}$ in \eqref{proof_eq_thm_nonconvex_8} gives us
\begin{align}
    \mathbb{E} \left\| \nabla f (\bar{\mathbf{x}}_T) \right\|^2 \leq \frac{1}{T} \sum_{t=0}^{T-1} \mathbb{E} \left\| \nabla f (\mathbf{x}_{t}) \right\|^2 & \leq \mathcal{O} \left( \sqrt{ \frac{(1 + d/\nR)}{T} \dfrac{1}{1 + \frac{(1 + d/\nR)}{\bar{P}_T}} } \right), \label{proof_eq_thm_nonconvex_9}
\end{align}
where the first inequality follows from Jensen's inequality. 
% Notice that the first term in \eqref{proof_eq_thm_nonconvex_8} is the dominant one.
This completes the proof.
\end{proof}

\subsection{Special Cases of Theorem \ref{thm_nonconvex}}
\label{app_special_case_nonconvex}

\subsubsection{Regime 1: $\dnr = 1 + \frac{d}{\nR} \gg \bar{P}_T$.}
Since $\bar{P}_T \geq 1$, this implies that $\frac{d}{\nR} \gg \bar{P}_T$. Note that since $\bar{P}_T$ is the mean of inverse probabilities values $\{ 1/p_{t,i} \}$ across time $t$ and dimensions $i$, $\frac{1}{\bar{P}_T} \gg \frac{\nR}{d}$ implies that on average, sampling probabilities are much greater than $\nR/d$. 
In other words, the per-iteration query budget of CGE is much higher than RGE, and $\alpha \to 0$ (see \eqref{proof_eq_thm_nonconvex_7}). 
% As we see next, in this case convergence rate and the function query cost will indeed be dominated by the CGE.
With smoothing parameter $\muC = O \left( (\bar{P}_T/d^2 T)^{1/4} \right)$, the resulting convergence rate is
\begin{align*}
	\mathbb{E} \left\| \nabla f (\bar{\mathbf{x}}_T) \right\|^2 \leq O \left(  \sqrt{ \frac{\bar{P}_T}{T}} \right).
\end{align*}
FQC to achieve $\mathbb{E} \| \nabla f (\bar{\mathbf{x}}_T) \|^2 \leq \epsilon$ is given by $O(T \cdot\nR+ \sum_{t=0}^{T-1} \sum_{i=1}^d p_{t,i}) = O(T \cdot (\nR+ \nC))$. 

In the special case of uniform distribution for CGE, i.e., $p_{t,i} = \nC/d$, $\bar{P}_T = d/\nC$.
Consequently, the convergence rate is
\begin{align*}
    \mathbb{E} \| \nabla f (\bar{\mathbf{x}}_T) \|^2 \leq O (  \sqrt{ d/(\nC T)} ).
\end{align*}
Also, $\frac{d}{\nR} \gg \bar{P}_T$ implies $\nC \gg \nR$.
Hence, FQC to achieve $\mathbb{E} \| \nabla f (\bar{\mathbf{x}}_T) \|^2 \leq \epsilon$ is $O(T \cdot \nC) = O(d/\epsilon^2)$.
Naturally, both the convergence rate and FQC are dominated by CGE.
For $\nC = 1$ the performance reduces to that of ZO-SCD (see Table \ref{table_comparison}).

\subsubsection{Regime 2: $\dnr = 1 + \frac{d}{\nR} \ll \bar{P}_T$.}
Since $\bar{P}_T \geq 1$, this implies that $\frac{d}{\nR} \ll \bar{P}_T$ and $\frac{1}{\bar{P}_T} \ll \frac{\nR}{d}$ implies that on average, sampling probabilities are much smaller than $\nR/d$. 
In other words, the per-iteration query budget of CGE is much smaller than RGE, and $\alpha \to 1$ (see \eqref{proof_eq_thm_nonconvex_7}). 
% As we see next, in this case convergence rate and the function query cost will indeed be dominated by the CGE.
With smoothing parameters $\muC = O \left( (\dnr/d^2 T)^{1/4} \right)$, $\muR = O \left( d^{-1} (\dnr/T)^{1/4} \right)$, the resulting convergence rate is
\begin{align*}
	\mathbb{E} \left\| \nabla f (\bar{\mathbf{x}}_T) \right\|^2 \leq O \left(  \sqrt{ \frac{1 + \frac{d}{\nR}}{T}} \right).
\end{align*}
FQC to achieve $\mathbb{E} \| \nabla f (\bar{\mathbf{x}}_T) \|^2 \leq \epsilon$ is given by $O(T \cdot\nR+ \sum_{t=0}^{T-1} \sum_{i=1}^d p_{t,i}) = O(T \cdot (\nR+ \nC))$. 
In the special case of uniform distribution for CGE, i.e., $p_{t,i} = \nC/d$, $\bar{P}_T = d/\nC$.
$\frac{d}{\nR} \ll \bar{P}_T$ implies $\nC \ll \nR$.
Hence, FQC to achieve $\mathbb{E} \| \nabla f (\bar{\mathbf{x}}_T) \|^2 \leq \epsilon$ is $O(T \cdot \nR) = O(d/\epsilon^2)$ (assuming $\nR = O(d)$).
Naturally, both the convergence rate and FQC are dominated by RGE.
Also, note that compared to ZO-SGD, we achieve the same convergence rate, while the bound on $\muR$ is more relaxed (see Table \ref{table_comparison}).
% For $\nC = 1$ the performance reduces to that of ZO-SCD (see Table \ref{table_comparison}).

\subsubsection{Regime 3: $\dnr = 1 + \frac{d}{\nR}$ and $\bar{P}_T$ are comparable in value.}
To gain some insight into this case where the function query budgets of RGE and CGE are comparable, we again look at the uniform distribution $p_{t,i} = \frac{\nC}{d}, \forall \ t,i$. So, $\bar{P}_T = \frac{d}{\nC}$.
The total per-iteration function query cost of HGE is $O(\nR + \nC)$.
First, note that
\begin{align}
	\frac{1 + \frac{d}{\nR}}{1 + \frac{1 + \frac{d}{\nR}}{\bar{P}_T}} &= \frac{1 + \frac{d}{\nR}}{1 + \left( 1 + \frac{d}{\nR} \right) \frac{\nC}{d}} = \frac{d \nR + d^2}{\nR \nC + d(\nR + \nC)} \nonumber \\
	&\leq \frac{\nR}{\nR + \nC} + \frac{d}{\nR + \nC} \leq 1 + \frac{d}{\nR + \nC}. \nonumber
\end{align}
With $\muC = O \left(  \left( \frac{1}{d^2 T} \left( 1 + \frac{d}{\nR + \nC} \right) \right)^{1/4} \right)$, $\muR = O \left( \frac{1}{d} \left( \frac{1}{T} \left( 1 + \frac{d}{\nR + \nC} \right) \right)^{1/4} \right)$, the resulting convergence rate is
\begin{align*}
	\mathbb{E} \left\| \nabla f (\bar{\mathbf{x}}_T) \right\|^2 \leq O \left( \sqrt{\frac{1}{T} \left( 1 + \frac{d}{\nR + \nC} \right)} \right).
\end{align*}
FQC to achieve $\mathbb{E} \| \nabla f (\bar{\mathbf{x}}_T) \|^2 \leq \epsilon$ is given by $ O(T \cdot\nR+ T \cdot \nC) = O(d/\epsilon^2)$ (assuming $\nR + \nC = O(d)$).
% Given fixed $q$, varying $n \in [0,q]$ lets us traverse the entire spectrum from ZO-SCD ($n=0, \nC = q$) to ZO-SGD ($n=q, \nC = 0$).

\newpage
\section{Convex Case}
\label{app_convex}
Before, proceeding with the proof of Theorem \ref{thm_convex}, we prove some intermediate results, which shall be used along the way. First, using convexity of $f$ (Assumption \ref{assum_convexity})
\begin{align}
	& \sum_{t=1}^T \left( f(\mathbf{x}_t) - f(\mathbf{x}^*) \right) \leq \sum_{t=1}^T \left\langle \nabla f(\mathbf{x}_t), \mathbf{x}_t - \mathbf{x}^* \right\rangle \nonumber \\
	&= \sum_{t=1}^T \left\langle \nabla_t, \mathbf{x}_t - \mathbf{x}^* \right\rangle + \sum_{t=1}^T \left\langle \nabla f(\mathbf{x}_t) - \nabla_t, \mathbf{x}_t - \mathbf{x}^* \right\rangle \label{eq_convexity}
\end{align}
where $x^* = \argmin_{\mathbf{x} \in \mathrm{dom } f} f(\mbf x)$, and we denote the descent direction used in Algorithm \ref{Algo_zo_hgd} as $\nabla_t \triangleq \alpha_t \nabla_{\mathrm{r},t} + (1-\alpha_t) \nabla_{\mathrm{c},t}$. Next, we bound both the terms in \eqref{eq_convexity} separately in the following two results.

\begin{prop}
\label{prop_convex_inner_prod_1}
Under Assumption \ref{assum_convexity}, and using non-increasing step-sizes $\{ \eta_t \}$ in Algorithm \ref{Algo_zo_hgd}
\begin{align}
	\sum_{t=1}^T \left\langle \nabla_t, \mathbf{x}_t - \mathbf{x}^* \right\rangle &\leq \frac{R^2}{2 \eta_T} + \sum_{t=1}^T \frac{\eta_t}{2} \left\| \nabla_t \right\|^2 \label{eq_prop_convex_inner_prod_1}
\end{align}
\end{prop}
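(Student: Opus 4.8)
The plan is to proceed by the standard ``one-step expansion plus telescoping'' argument for gradient descent, adapted to handle the non-increasing (hence possibly time-varying) step sizes $\{\eta_t\}$. First I would start from the update rule in Algorithm \ref{Algo_zo_hgd}, namely $\mathbf{x}_{t+1} = \mathbf{x}_t - \eta_t \nabla_t$, and expand the squared distance to the optimum:
\begin{align*}
\lnr \mathbf{x}_{t+1} - \mathbf{x}^* \rnr^2 = \lnr \mathbf{x}_t - \mathbf{x}^* \rnr^2 - 2\eta_t \langle \nabla_t, \mathbf{x}_t - \mathbf{x}^* \rangle + \eta_t^2 \lnr \nabla_t \rnr^2.
\end{align*}
Solving for the inner product and summing over $t=1,\dots,T$ yields
\begin{align*}
\sum_{t=1}^T \langle \nabla_t, \mathbf{x}_t - \mathbf{x}^* \rangle = \sum_{t=1}^T \frac{1}{2\eta_t}\lp \lnr \mathbf{x}_t - \mathbf{x}^* \rnr^2 - \lnr \mathbf{x}_{t+1} - \mathbf{x}^* \rnr^2 \rp + \sum_{t=1}^T \frac{\eta_t}{2} \lnr \nabla_t \rnr^2.
\end{align*}
The second sum already matches the target bound exactly, so the entire work reduces to controlling the first sum.

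The core of the argument is a summation-by-parts (Abel) manipulation of the first sum. Writing $D_t \triangleq \lnr \mathbf{x}_t - \mathbf{x}^* \rnr^2$, I would reindex the telescoping-like difference to obtain
\begin{align*}
\sum_{t=1}^T \frac{1}{2\eta_t}(D_t - D_{t+1}) = \frac{D_1}{2\eta_1} - \frac{D_{T+1}}{2\eta_T} + \sum_{t=2}^T D_t \lp \frac{1}{2\eta_t} - \frac{1}{2\eta_{t-1}} \rp.
\end{align*}
Here I would invoke two facts: (i) the step sizes are non-increasing, so $\eta_t \leq \eta_{t-1}$ and hence each coefficient $\frac{1}{2\eta_t} - \frac{1}{2\eta_{t-1}} \geq 0$; and (ii) by the diameter bound in Assumption \ref{assum_convexity}, $D_t = \lnr \mathbf{x}_t - \mathbf{x}^* \rnr^2 \leq R^2$ for all $t$. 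Dropping the nonpositive term $-D_{T+1}/(2\eta_T)$ and bounding each nonnegatively-weighted $D_t$ by $R^2$ gives
\begin{align*}
\sum_{t=1}^T \frac{1}{2\eta_t}(D_t - D_{t+1}) \leq \frac{R^2}{2\eta_1} + R^2 \sum_{t=2}^T \lp \frac{1}{2\eta_t} - \frac{1}{2\eta_{t-1}} \rp = \frac{R^2}{2\eta_1} + R^2 \lp \frac{1}{2\eta_T} - \frac{1}{2\eta_1} \rp = \frac{R^2}{2\eta_T},
\end{align*}
where the middle sum telescopes. Combining with the second sum above completes the proof.

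The one subtlety that deserves care is the direction of the monotonicity bound: the weights $\frac{1}{2\eta_t} - \frac{1}{2\eta_{t-1}}$ are nonnegative \emph{only because} $\{\eta_t\}$ is non-increasing, which is precisely the hypothesis and is what licenses replacing every $D_t$ by its uniform upper bound $R^2$ without a sign reversal. If the step sizes were not monotone this replacement would be invalid, so I would flag that the non-increasing assumption is used in an essential way. Everything else is routine algebra, and no properties of the hybrid direction $\nabla_t = \alpha_t \nabla_{\mathrm{r},t} + (1-\alpha_t)\nabla_{\mathrm{c},t}$ beyond its appearance in the update rule are needed here; its estimation-error properties are deferred to the companion bound on the second term in \eqref{eq_convexity}.
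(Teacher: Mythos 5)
Your proof is correct, but it is not the argument the paper gives---and in fact it is the argument the paper needs. The paper's proof of Proposition~\ref{prop_convex_inner_prod_1} is a one-liner: apply Young's inequality and bound $\left\| \mathbf{x}_t - \mathbf{x}^* \right\|$ by $R$ via Assumption~\ref{assum_convexity}. Applied term by term, that route gives $\left\langle \nabla_t, \mathbf{x}_t - \mathbf{x}^* \right\rangle \leq \frac{\eta_t}{2} \left\| \nabla_t \right\|^2 + \frac{R^2}{2\eta_t}$, and summing produces $\sum_{t=1}^T \frac{R^2}{2\eta_t}$ in place of $\frac{R^2}{2\eta_T}$; this is a substantially weaker bound (with the constant step size $\eta_t = \eta$ actually used in the proof of Theorem~\ref{thm_convex}, it is $T R^2/(2\eta)$ rather than $R^2/(2\eta)$, and after dividing by $T$ the resulting term $R^2/(2\eta)$ no longer vanishes, so the $1/\sqrt{T}$ rate could not be extracted). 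Your route---expanding $\left\| \mathbf{x}_{t+1} - \mathbf{x}^* \right\|^2$ via the update rule, solving for the inner product, and controlling the weighted telescoping sum by Abel summation together with monotonicity of $\{\eta_t\}$ and the diameter bound---is the standard online-gradient-descent regret argument, and it is what actually delivers the $R^2/(2\eta_T)$ term in the statement. It is also the only argument in which the two stated hypotheses (the update rule of Algorithm~\ref{Algo_zo_hgd} and the non-increasing step sizes) play any role; the paper's sketch uses neither, which strongly suggests your version is the intended proof and the paper's written justification is too terse to establish the proposition as stated. The one caveat your proof shares with the paper's is the implicit assumption that all iterates (including $\mathbf{x}_{T+1}$, or at least $\mathbf{x}_1,\ldots,\mathbf{x}_T$) remain in $\mathrm{dom}\, f$ so that the diameter bound applies, which is inherent to how Assumption~\ref{assum_convexity} is invoked throughout Appendix~\ref{app_convex}.
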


\begin{proof}
The results follows by a straightforward application of the Young's inequality, and using Assumption \ref{assum_convexity} to bound $\| \mathbf{x}_t - \mathbf{x}^* \|$ with $R$.
\end{proof}

\begin{prop}
\label{prop_convex_inner_prod_2}
\begin{align}
	\sum_{t=1}^T \mathbb{E} \left\langle \nabla f(\mathbf{x}_t) - \nabla_t, \mathbf{x}_t - \mathbf{x}^* \right\rangle \leq R L \sum_{t=1}^T \left( \frac{\alpha_t \muR d}{2} + (1-\alpha_t) \sqrt{d} \muC \right). \label{eq_prop_convex_inner_prod_2}
\end{align}
\end{prop}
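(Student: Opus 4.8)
The plan is to take conditional expectations with respect to the filtration $\mathcal{Y}_t$ and exploit the unbiasedness of RGE and CGE relative to their smoothed surrogates, so that the problem reduces to controlling the deterministic \emph{bias} of the hybrid descent direction $\nabla_t = \alpha_t \nabla_{\mathrm{r},t} + (1-\alpha_t) \nabla_{\mathrm{c},t}$. First I would observe that the iterate $\mathbf{x}_t$ and the minimizer $\mathbf{x}^*$ are both $\mathcal{Y}_t$-measurable, so by the tower property the inner expectation passes through the inner product:
\begin{align*}
	\mathbb{E} \left\langle \nabla f(\mathbf{x}_t) - \nabla_t, \mathbf{x}_t - \mathbf{x}^* \right\rangle = \mathbb{E} \left\langle \nabla f(\mathbf{x}_t) - \mathbb{E}[\nabla_t \mid \mathcal{Y}_t], \mathbf{x}_t - \mathbf{x}^* \right\rangle .
\end{align*}

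Next, recall from Section \ref{sec_HGD} that $\mathbb{E}[\nabla_{\mathrm{r},t} \mid \mathcal{Y}_t] = \nabla f_{\muR}(\mathbf{x}_t)$ and $\mathbb{E}[\nabla_{\mathrm{c},t} \mid \mathcal{Y}_t] = \Hat{\nabla}_{\mathrm{CGE}} f(\mathbf{x}_t)$. By linearity of the conditional expectation, $\mathbb{E}[\nabla_t \mid \mathcal{Y}_t] = \alpha_t \nabla f_{\muR}(\mathbf{x}_t) + (1-\alpha_t) \Hat{\nabla}_{\mathrm{CGE}} f(\mathbf{x}_t)$, and I would split the residual bias into its RGE and CGE contributions:
\begin{align*}
	\nabla f(\mathbf{x}_t) - \mathbb{E}[\nabla_t \mid \mathcal{Y}_t] = \alpha_t \big( \nabla f(\mathbf{x}_t) - \nabla f_{\muR}(\mathbf{x}_t) \big) + (1-\alpha_t) \big( \nabla f(\mathbf{x}_t) - \Hat{\nabla}_{\mathrm{CGE}} f(\mathbf{x}_t) \big) .
\end{align*}

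Finally, I would apply Cauchy--Schwarz to the inner product and bound $\| \mathbf{x}_t - \mathbf{x}^* \| \leq R$ via the diameter bound in Assumption \ref{assum_convexity}. The two smoothing biases are already controlled in the excerpt: $\| \nabla f(\mathbf{x}_t) - \nabla f_{\muR}(\mathbf{x}_t) \| \leq \muR d L / 2$ (the bound from \cite[Lemma~1]{liu2018zeroth} invoked in the proof of Proposition \ref{prop_inner_prod}), and $\| \nabla f(\mathbf{x}_t) - \Hat{\nabla}_{\mathrm{CGE}} f(\mathbf{x}_t) \| \leq L \sqrt{d} \muC$ (from $\| \nabla f - \Hat{\nabla}_{\mathrm{CGE}} f \|^2 \leq L^2 d \muC^2$ used in the same proposition, specialized to $\muCi = \muC$). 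Combining these, each summand is bounded by $R L ( \alpha_t \muR d / 2 + (1-\alpha_t) \sqrt{d} \muC )$; summing over $t = 1, \ldots, T$ and taking total expectation gives the stated inequality.

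There is no genuine difficulty here: the proposition is essentially a bookkeeping step that isolates the deterministic bias of the two ZO estimators. The only points that require care are the measurability argument licensing the pass-through of the conditional expectation, and matching the constants in the two bias bounds exactly to the $\alpha_t \muR d / 2$ and $(1-\alpha_t) \sqrt{d} \muC$ terms. The heavier work, namely controlling the second-moment term $\| \nabla_t \|^2$, is handled separately in Proposition \ref{prop_convex_inner_prod_1} and subsumed into the main convergence argument.
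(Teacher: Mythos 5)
Your proof is correct and follows essentially the same route as the paper's: condition on $\mathcal{Y}_t$, use the unbiasedness of $\nabla_{\mathrm{r},t}$ and $\nabla_{\mathrm{c},t}$ with respect to $\nabla f_{\muR}(\mathbf{x}_t)$ and $\Hat{\nabla}_{\mathrm{CGE}} f(\mathbf{x}_t)$, then apply Cauchy--Schwarz with the two smoothing-bias bounds and the diameter bound $\|\mathbf{x}_t - \mathbf{x}^*\| \leq R$. The only cosmetic difference is the citation used for the RGE bias bound, which does not affect the argument.
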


\begin{proof}
\begin{align}
	& \sum_{t=1}^T \mathbb{E} \left\langle \nabla f(\mathbf{x}_t) - \nabla_t, \mathbf{x}_t - \mathbf{x}^* \right\rangle \nonumber \\
	& \qquad = \sum_{t=1}^T \left[ \alpha_t \mathbb{E} \left[ \left\langle \nabla f(\mathbf{x}_t) - \nabla_{\mathrm{r},t}, \mathbf{x}_t - \mathbf{x}^* \right\rangle \mid \mathcal{Y}_t \right] + (1-\alpha_t) \mathbb{E} \left[ \left\langle \nabla f(\mathbf{x}_t) - \nabla_{\mathrm{c},t}, \mathbf{x}_t - \mathbf{x}^* \right\rangle \mid \mathcal{Y}_t \right] \right] \nonumber \\
	& \qquad \overset{(a)}{=} \sum_{t=1}^T \left[ \alpha_t \mathbb{E} \left\langle \nabla f(\mathbf{x}_t) - \nabla f_{\muR} (\mathbf{x}_t), \mathbf{x}_t - \mathbf{x}^* \right\rangle + (1-\alpha_t) \mathbb{E} \left\langle \nabla f(\mathbf{x}_t) - \Hat{\nabla}_{\mathrm{CGE}} f(\mathbf{x}_t), \mathbf{x}_t - \mathbf{x}^* \right\rangle \right] \nonumber \\
	& \qquad \overset{(b)}{\leq} \sum_{t=1}^T \left[ \alpha_t \mathbb{E} \left\| \mathbf{x}_t - \mathbf{x}^* \right\|_2 \left( \frac{\muR L d}{2} \right) + (1-\alpha_t) \mathbb{E} \left\| \mathbf{x}_t - \mathbf{x}^* \right\|_2 L \sqrt{d} \muC \right], \nonumber
\end{align}
where, $(a)$ follows from $\mathbb{E} \nabla_{\mathrm{r},t} = \nabla f_{\muR} (\mathbf{x}_t)$, and $\mathbb{E} \nabla_{\mathrm{c},t} = \Hat{\nabla}_{\mathrm{CGE}} f(\mathbf{x}_t)$. $(b)$ follows from Cauch-Schwarz inequality, and by using the bounds $\| \nabla f(\mathbf{x}_t) - \nabla f_{\muR} (\mathbf{x}_t) \| \leq \frac{\muR L d}{2}$ \cite[Lemma~4.1]{gao18information}, and $\| \nabla f(\mathbf{x}_t) - \Hat{\nabla}_{\mathrm{CGE}} f(\mathbf{x}_t) \| \leq L \sqrt{d} \muC$ \cite[Lemma~3]{zo_spider_liang19icml}. The result follows since $\left\| \mathbf{x}_t - \mathbf{x}^* \right\|_2 \leq R$ from Assumption \ref{assum_convexity}.
\end{proof}

\noindent Next we bound $\left\| \nabla_t \right\|^2$ which appears in Proposition \ref{prop_convex_inner_prod_1}. 
We make use of Assumption \ref{assum_bound_grad} for this.
\begin{prop}
\label{prop_convex_norm_bd}
\begin{align}
    \mathbb{E} \left\| \nabla_t \right\|^2 & \leq 2 \alpha_t^2 \left[ \left( 2 + \frac{4}{|\br|} \left( 1 + \frac{d}{\nR} \right) \right) \left\| \nabla f (\mathbf{x}) \right\|^2 + \frac{4 \sigma^2}{|\br|} \left( 1 + \frac{d}{\nR} \right) + \left( 1 + \frac{2}{|\br|} + \frac{2}{\nR |\br|} \right) \frac{\muR^2 L^2 d^2}{2} \right] \nonumber \\
    & \qquad + 2 (1-\alpha_t)^2 \sum_{i=1}^d \frac{1}{p_{t,i}} \left[ \frac{3 \zeta^2}{|\bct|} + \frac{L^2 \muC^2}{2} \left( 1 + \frac{3}{|\bct|} \right) + 2 \left( \nabla f (\mathbf{x}_{t}) \right)_i^2 \right]. \nonumber
\end{align}
\end{prop}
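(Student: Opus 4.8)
The plan is to reduce this bound directly to the already-established bounds on $\mathrm{III}$ and $\mathrm{IV}$ from Proposition \ref{prop_bd_norm_sq_rge}. First I would write the descent direction in its defining form $\nabla_t = \alpha_t \nabla_{\mathrm{r},t} + (1-\alpha_t) \nabla_{\mathrm{c},t}$ and apply the elementary inequality $\| \mathbf{a} + \mathbf{b} \|^2 \leq 2 \| \mathbf{a} \|^2 + 2 \| \mathbf{b} \|^2$ with $\mathbf{a} = \alpha_t \nabla_{\mathrm{r},t}$ and $\mathbf{b} = (1-\alpha_t) \nabla_{\mathrm{c},t}$. This yields the pointwise bound $\| \nabla_t \|^2 \leq 2 \alpha_t^2 \| \nabla_{\mathrm{r},t} \|^2 + 2 (1-\alpha_t)^2 \| \nabla_{\mathrm{c},t} \|^2$, which separates the contributions of RGE and CGE.

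Next I would take the conditional expectation $\mathbb{E}_{Y_t}[\cdot \mid \mathcal{Y}_t]$. Since $\alpha_t$ is deterministic (indeed fixed across iterations in Theorem \ref{thm_convex}), the coefficients $\alpha_t^2$ and $(1-\alpha_t)^2$ pass outside the expectation. I then substitute the bound on $\mathrm{III}$ for $\mathbb{E}_{Y_t}[\| \nabla_{\mathrm{r},t} \|^2 \mid \mathcal{Y}_t]$ and the bound on $\mathrm{IV}$ for $\mathbb{E}_{Y_t}[\| \nabla_{\mathrm{c},t} \|^2 \mid \mathcal{Y}_t]$, both taken verbatim from Proposition \ref{prop_bd_norm_sq_rge}. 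Finally, taking the total expectation over the filtration $\mathcal{Y}_t$ and using the tower property reproduces exactly the two bracketed terms in the claimed inequality, with the RGE term carrying the $\nR$-, $|\br|$-, and $\muR$-dependent factors and the CGE term carrying the $\sum_{i=1}^d (1/p_{t,i})[\cdots]$ expression.

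I do not expect any genuine obstacle here: the step is essentially mechanical, and all the real technical content—the careful treatment of the biased RGE estimator and, more importantly, the non-uniform coordinate importance sampling that produces the $1/p_{t,i}$ weights in $\mathrm{IV}$—is already encapsulated in Proposition \ref{prop_bd_norm_sq_rge}. The one point I would state carefully is that the importance-sampling probabilities $\{ p_{t,i} \}$ are themselves computed from $\nabla_{\mathrm{r},t}$ within step $t$; this dependence is already handled inside the conditional bound on $\mathrm{IV}$, so I simply invoke the resulting inequality rather than re-deriving it.
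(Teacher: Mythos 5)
Your proposal is correct and follows exactly the paper's own route: the paper likewise decomposes $\nabla_t = \alpha_t \nabla_{\mathrm{r},t} + (1-\alpha_t)\nabla_{\mathrm{c},t}$, applies $\|\mathbf{a}+\mathbf{b}\|^2 \le 2\|\mathbf{a}\|^2 + 2\|\mathbf{b}\|^2$, and substitutes the bounds on $\mathrm{III}$ and $\mathrm{IV}$ from Proposition \ref{prop_bd_norm_sq_rge}. Your extra remark about the dependence of $\{p_{t,i}\}$ on $\nabla_{\mathrm{r},t}$ being absorbed into the conditional bound on $\mathrm{IV}$ is a point the paper leaves implicit, and it is handled correctly.
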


\begin{proof}
The proof follows by substituting the bounds on $\mathbb{E} \left\| \nabla_{\mathrm{r},t} \right\|^2$ and $\mathbb{E} \left\| \nabla_{\mathrm{c},t} \right\|^2$ from Proposition \ref{prop_bd_norm_sq_rge}.
\end{proof}

\subsection{Proof of Theorem \ref{thm_convex} (Convex Case)}
\label{proof_thm_convex}
The set of coordinate-wise probabilities $\{ p_{t,i} \}$ are chosen such that $p_{t,i} \geq \bar{c}, \ \forall \ i, t$. 
% Also, $\frac{1}{\bar{c}_T} \triangleq \frac{1}{T} \sum_{t=1}^T \frac{1}{c_t}$.

\begin{proof}
Using Proposition \ref{prop_convex_inner_prod_1}, \ref{prop_convex_inner_prod_2}, \ref{prop_convex_norm_bd} in \eqref{eq_convexity}, we get
\begin{align}
	\sum_{t=1}^T & \left( \mathbb{E} f(\mathbf{x}_t) - f(\mathbf{x}^*) \right) \leq \frac{R^2}{2 \eta_T} + R L \sum_{t=1}^T \left( \frac{\alpha_t \muR d}{2} + (1-\alpha_t) \sqrt{d} \muC \right) \nonumber \\
	& \quad + \sum_{t=1}^T \eta_t (1-\alpha_t)^2 \sum_{i=1}^d \frac{1}{p_{t,i}} \left[ \frac{3 \zeta^2}{|\bct|} + \frac{L^2 \muC^2}{2} \left( 1 + \frac{3}{|\bct|} \right) + 2 \left( \nabla f (\mathbf{x}_{t}) \right)_i^2 \right] \nonumber \\
	& \quad + \sum_{t=1}^T \eta_t \alpha_t^2 \left[ \left( 2 + \frac{4}{|\br|} \left( 1 + \frac{d}{\nR} \right) \right) \left\| \nabla f (\mathbf{x}) \right\|^2 + \frac{4 \sigma^2}{|\br|} \left( 1 + \frac{d}{\nR} \right) + \left( 1 + \frac{2}{|\br|} + \frac{2}{\nR |\br|} \right) \frac{\muR^2 L^2 d^2}{2} \right]. \label{eq_proof_thm_convex_1}
\end{align}
Here, note that
\begin{align}
    & \sum_{i=1}^d \frac{1}{p_{t,i}} \left( \nabla f (\mathbf{x}_{t}) \right)_i^2 \leq \frac{1}{\bar{c}} \left\| \nabla f (\mathbf{x}_{t}) \right\|^2 \label{eq_proof_thm_convex_2}
\end{align}
We take constant combination coefficients, i.e., $\alpha_t = \alpha$ for all $t$, and constant step-sizes $\eta_t = \eta$ for all $t$. 
We also use Assumption \ref{assum_bound_grad} to bound $\left\| \nabla f (\mathbf{x}_t) \right\|$. 
We denote $\bar{P}_T = \frac{1}{Td} \sum_{t=0}^{T-1} \sum_{i=1}^d \frac{1}{p_{t,i}}$. 
% We also define $\frac{1}{\bar{c}_T} = \frac{1}{T} \sum_{t=0}^{T-1} \frac{1}{c_t}$. 
To focus only on the effect of the number of random directions in RGE $\nR$, and the probabilities of CGE $\{ p_{t,i} \}$ on convergence, we get rid of the sample set sizes using $|\brt| \geq 1, |\bct| \geq 1$ for all $t$. Dividing both sides of \eqref{eq_proof_thm_convex_1} by $T$, we get
\begin{align}
	\frac{1}{T} \sum_{t=1}^T \left( \mathbb{E} f(\mathbf{x}_t) - f^* \right) & \leq \frac{R^2}{2 \eta T} + R L \left( \frac{\alpha \muR d}{2} + (1-\alpha) \sqrt{d} \muC \right) \nonumber \\
	& \quad + \alpha^2 \eta \left[ 6 \left( 1 + \frac{d}{\nR} \right) G^2 + \left( \frac{3}{2} + \frac{1}{\nR} \right) \muR^2 d^2 L^2 + 4 \left( 1 + \frac{d}{\nR} \right) \sigma^2 \right] \nonumber \\
	& \quad + (1-\alpha)^2 \eta \left[ 2 G^2 \frac{1}{\bar{c}} + \bar{P}_T \left( 3 \sigma^2 + 2 L^2 \muC^2 d \right) \right]. \label{eq_proof_thm_convex_3}
\end{align}
As in Theorem \ref{thm_nonconvex}, we choose the smoothing parameters such that $\muC = \frac{\muR \sqrt{d}}{2}$. Again, using the same reasoning as in Theorem \ref{thm_nonconvex}, $\sigma^2 \geq L^2 \muC^2 d, \sigma^2 \geq \muR^2 L^2 d^2$. 
Also, note that $\bar{P}_T \leq \frac{1}{\bar{c}}$. 
Consequently \eqref{eq_proof_thm_convex_3} simplifies to
\begin{align}
	\frac{1}{T} \sum_{t=1}^T \left( \mathbb{E} f(\mathbf{x}_t) - f^* \right) & \leq \frac{R^2}{2 \eta T} + R L \sqrt{d} \muC + 6 \alpha^2 \eta \left[ G^2 \left( 1 + \frac{d}{\nR} \right) + \sigma^2 \left( 1 + \frac{d}{\nR} \right) \right] + 6 (1-\alpha)^2 \eta \left[ \frac{G^2 + \sigma^2}{\bar{c}} \right] \nonumber \\
	& \overset{(a)}{\leq} 2 \sqrt{\frac{3 R^2}{T} \left[ \alpha^2 \left( G^2 + \sigma^2 \right) \left( 1 + \frac{d}{\nR} \right) + (1-\alpha)^2 \left( \frac{G^2 + \sigma^2}{\bar{c}} \right) \right] } + R L \sqrt{d} \muC, \label{eq_proof_thm_convex_4}
\end{align}
where $(a)$ follows by optimizing over $\eta$. Choosing the value of $\alpha$
\begin{align}
	\alpha^* = \left[ 1 + \bar{c} \left( 1 + \frac{d}{\nR} \right) \right]^{-1} \label{proof_eq_thm_convex_alpha}
\end{align}
and substituting in \eqref{eq_proof_thm_convex_4}, we get
\begin{align}
    \frac{1}{T} \sum_{t=1}^T & \left( \mathbb{E} f(\mathbf{x}_t) - f^* \right) \leq O \left( R \sqrt{\frac{(G^2 + \sigma^2)}{T} \frac{\left( 1 + \frac{d}{\nR} \right)}{\left[ 1 + \bar{c} \left( 1 + \frac{d}{\nR} \right) \right]} } \right) + R L \sqrt{d} \muC. \label{eq_proof_thm_convex_5}
\end{align}
The choice of $\muC = O \left( \sqrt{\frac{1}{dT} \frac{\left( 1 + \frac{d}{\nR} \right)}{\left[ 1 + \bar{c} \left( 1 + \frac{d}{\nR} \right) \right]} } \right)$ in \eqref{eq_proof_thm_convex_5} yields
\begin{align}
    \frac{1}{T} \sum_{t=1}^T & \left( \mathbb{E} f(\mathbf{x}_t) - f^* \right) \leq O \left( R \sqrt{\frac{1}{T} \frac{\left( 1 + \frac{d}{\nR} \right)}{\left[ 1 + \bar{c} \left( 1 + \frac{d}{\nR} \right) \right]} } \right). \label{eq_proof_thm_convex_6}
\end{align}
\end{proof}

Similar to as we did in the nonconvex case (Appendix \ref{app_special_case_nonconvex}), we next explore the convergence conditions of Theorem \ref{thm_convex} under some special cases. The reasoning is quite similar in this case as well. 
% In the following discussion, we assume $\nR < d$.

\subsection{Special Cases of Theorem \ref{thm_convex} (Convex Case)}
\label{proof_convex_special_case}
Similar to as we did in the nonconvex case (Appendix \ref{app_special_case_nonconvex}), we next explore the convergence conditions of Theorem \ref{thm_convex} under some special cases. We summarize the results for three regimes in Table \ref{table_convex}, depending on the relative values of $1 + \tfrac{d}{\nR}$ and $\tfrac{1}{\bar{c}}$. The reasoning is quite similar.
\begin{table}[h!]
\begin{center}
% \vspace{-2.5mm}
% \begin{threeparttable}
% \resizebox{0.48\textwidth}{!}{
\begin{tabular}{|c|c|c|}
\cline{1-3}
Regime & \begin{tabular}[c]{@{}c@{}} Smoothing \\
parameter $\muC$ \end{tabular} & \begin{tabular}[c]{@{}c@{}} Convergence \\ rate \end{tabular} \\
\cline{1-3}
$1 + \tfrac{d}{\nR} \gg \tfrac{1}{\bar{c}}$ & 
{\small$O \left( \sqrt{\tfrac{1}{d \bar{c} T}} \right)$}
& {\small$O \left(  \sqrt{ \tfrac{1}{T \bar{c}}} \right)$} \\
\cline{1-3}
$1 + \tfrac{d}{\nR} \ll \tfrac{1}{\bar{c}}$ & {\small$O \lp \sqrt{\tfrac{ 1 + d/\nR}{dT} } \rp$} & {\small$O \left( \sqrt{(1 + d/\nR)/T } \right)$} \\
\cline{1-3}
\begin{tabular}[c]{@{}c@{}} $1 + \tfrac{d}{\nR} \approx \tfrac{1}{\bar{c}}$ \\ $p_{t,i} \equiv \tfrac{\nC}{d}$ \end{tabular} & {\small$O \left( \sqrt{ \tfrac{1 + \tfrac{d}{\nR + \nC}}{(dT} } \right)$} & {\small$O \left( \sqrt{\tfrac{1}{T} \left( 1 + \tfrac{d}{\nR + \nC} \right)} \right)$} \\
\cline{1-3}
\end{tabular}
% }
\caption{Comparison of smoothing parameters and convergence rates, for difference regimes of RGE and CGE query budgets, quantified by $\nR$, $\bar{c}$ respectively.}
\label{table_convex}
% \end{threeparttable}
\end{center}
\end{table}
\subsubsection{Regime 1: $1 + \frac{d}{\nR} \gg \frac{1}{\bar{c}}$.}
Since $\frac{1}{\bar{c}} \geq 1$, this implies that $\frac{d}{\nR} \gg \frac{1}{\bar{c}}$, or $\bar{c} \gg \frac{\nR}{d}$. 
This implies that the sampling probabilities are much greater than $\nR/d$. 
In other words, the per-iteration query budget of CGE is much higher than RGE, and $\alpha \to 0$ (see \eqref{proof_eq_thm_convex_alpha}). 
% As we see next, in this case convergence rate and the function query cost will indeed be dominated by the CGE.
With smoothing parameter $\muC = O \left( \sqrt{\frac{1}{d \bar{c} T}} \right)$, the resulting convergence rate is
\begin{align*}
	\mathbb{E} \left\| \nabla f (\bar{\mathbf{x}}_T) \right\|^2 \leq O \left(  \sqrt{ \frac{1}{T \bar{c}}} \right).
\end{align*}
% FQC to achieve $\mathbb{E} \| \nabla f (\bar{\mathbf{x}}_T) \|^2 \leq \epsilon$ is given by $O(T \cdot\nR+ \sum_{t=0}^{T-1} \sum_{i=1}^d p_{t,i}) = O(T \cdot \nC)$. 
To gain further insight, we consider the special case of uniform distribution, such that $p_{t,i} = \frac{\nC}{d}, \forall \ t,i$. Hence, $\frac{1}{\bar{c}} = \frac{d}{\nC}$, where $\frac{d}{\nR} \gg \frac{1}{\bar{c}}$ implies $\nC \gg \nR$. Consequently, the convergence rate becomes
{\small
\begin{align*}
	\mathbb{E} \left\| \nabla f (\bar{\mathbf{x}}_T) \right\|^2 \leq O \left( \sqrt{ \frac{d}{\nC T}} \right).
\end{align*}
}%
And the FQC to achieve $\mathbb{E} \| \nabla f (\bar{\mathbf{x}}_T) \|^2 \leq \epsilon$ is $O(T \cdot \nC + T \cdot \nR) = O(T \cdot \nC) = O(d/\epsilon^2)$.

\subsubsection{Regime 2: $1 + \frac{d}{\nR} \ll \frac{1}{\bar{c}}$.}
Since $\frac{1}{\bar{c}} \geq 1$, this implies that $\frac{d}{\nR} \ll \frac{1}{\bar{c}}$, or $\bar{c} \ll \frac{\nR}{d}$. 
This implies that 
A sufficient condition under which this holds is if the sampling probabilities are all much smaller than $\nR/d$. In other words, the per-iteration query budget of CGE is much smaller than RGE, and $\alpha \to 1$ (see \eqref{proof_eq_thm_convex_alpha}). 
With smoothing parameter $\muC = O \left( \sqrt{\frac{ 1 + \frac{d}{\nR}}{dT} } \right)$, the resulting convergence rate is
\begin{align*}
	\mathbb{E} \left\| \nabla f (\bar{\mathbf{x}}_T) \right\|^2 \leq O \left( \sqrt{\frac{1 + \frac{d}{\nR}}{T} } \right).
\end{align*}
To gain further insight, we consider the special case of uniform distribution, such that $p_{t,i} = \frac{\nC}{d}, \forall \ t,i$. Hence, $\frac{1}{\bar{c}} = \frac{d}{\nC}$, where $\frac{d}{\nR} \ll \frac{1}{\bar{c}}$ implies $\nC \ll \nR$. Consequently, the FQC to achieve $\mathbb{E} \| \nabla f (\bar{\mathbf{x}}_T) \|^2 \leq \epsilon$ is $O(T \cdot \nC + T \cdot \nR) = O(T \cdot \nR) = O(d/\epsilon^2)$ (assuming $\nR = O(d)$).

\subsubsection{Regime 3: $\dnr = 1 + \frac{d}{\nR}$ and $\frac{1}{\bar{c}}$ are comparable in value.}
To gain some insight into this case where the function query budgets of RGE and CGE are comparable, we again look at the uniform distribution $p_{t,i} = \frac{\nC}{d}, \forall \ t,i$. So, $\frac{1}{\bar{c}} = \frac{d}{\nC}$.
The total per-iteration function query cost of HGE is $O(\nR + \nC)$.
First, note that
\begin{align}
	\frac{1 + \frac{d}{\nR}}{1 + \bar{c} \left( 1 + \frac{d}{\nR} \right)} &= \frac{1 + \frac{d}{\nR}}{1 + \left( 1 + \frac{d}{\nR} \right) \frac{\nC}{d}} = \frac{d \nR + d^2}{\nR \nC + d(\nR + \nC)} \nonumber \\
	& \leq \frac{\nR}{\nR + \nC} + \frac{d}{\nR + \nC} \leq 1 + \frac{d}{\nR + \nC}. \nonumber
\end{align}
With $\muC = O \left( \sqrt{\frac{1}{dT} \left( 1 + \frac{d}{\nR + \nC} \right) } \right)$ and $\muR = O \left( \frac{1}{d} \sqrt{\frac{1}{T} \left( 1 + \frac{d}{\nR + \nC} \right)} \right)$, the resulting convergence rate is
\begin{align*}
	\mathbb{E} \left\| \nabla f (\bar{\mathbf{x}}_T) \right\|^2 \leq O \left( \sqrt{\frac{1}{T} \left( 1 + \frac{d}{\nR + \nC} \right)} \right).
\end{align*}
FQC to achieve $\mathbb{E} \| \nabla f (\bar{\mathbf{x}}_T) \|^2 \leq \epsilon$ is given by $ O(T \cdot\nR+ T \cdot \nC) = O(d/\epsilon^2)$ (assuming $\nR + \nC = O(d)$). 
For $O(\nR + \nC)$ function evaluations per iteration ($O(\nR)$ for RGE, $O(\nC)$ for CGE), this rate is order optimal \cite{duchi15optimal_tit}. 
% FQC to achieve $\mathbb{E} \| \nabla f (\bar{\mathbf{x}}_T) \|^2 \leq \epsilon$ is given by $ O(T \cdot\nR+ T \cdot \nC) = O(T q) = O(d/\epsilon^2)$. Given fixed $q$, varying $n \in [0,q]$ lets us traverse the entire spectrum from ZO-SCD ($n=0, \nC = q$) to ZO-SGD ($n=q, \nC = 0$).
% We next consider the strongly convex case.

% \newpage
\subsection{Proof of Theorem \ref{thm_convex} (Strongly Convex Case)}
For strongly convex functions, the error can be expressed either in terms of the \textit{regret} $\sum_{t=0}^{T-1} \left( \mathbb{E} f (\mathbf{x}_t) - f^* \right)$ (as in the convex case), or in terms of distance of the iterate from the optima $\| \mathbf{x}_t - \mathbf{x}^* \|^2$, because of the following inequality
\begin{align*}
	\frac{\bar{\sigma}}{2} \left\| \mathbf{x} - \mathbf{x}^* \right\|^2 \leq f(\mathbf{x}) - f^*, \qquad \forall \ \mathbf{x} \in \mathrm{ dom } f.
\end{align*}
We begin with the following general result.
\begin{lemma}
\label{lem_strong_convex_smooth}
Suppose $f$ satisfies Assumption \ref{assum_lipschitz}, \ref{assum_strong_convex}. Then, the smooth approximation $f_{\mu}$ of the function $f$, defined as $f_{\mu} = \mathbb{E}_{\mbf u \in U_0} [ f(\mathbf{x} + \mu \mathbf{u}) ]$ is also $\bar{\sigma}$-strongly convex.
\end{lemma}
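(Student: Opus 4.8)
The plan is to prove strong convexity of $f_\mu$ directly from the first-order (gradient) characterization of strong convexity, exploiting the fact that the smoothing operation is a \emph{translation-average}: since $f_\mu(\mathbf{x}) = \mathbb{E}_{\mathbf{u} \in U_0}[f(\mathbf{x} + \mu \mathbf{u})]$ is an average of translates of $f$, and strong convexity with a fixed modulus is preserved under both translation and averaging, the modulus $\bar{\sigma}$ should carry through unchanged. The argument does not need any of the smoothing-bias estimates used elsewhere; it is purely a consequence of linearity of the expectation.

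First I would record the pointwise inequality. Fixing any $\mathbf{u} \in U_0$ and applying the $\bar{\sigma}$-strong convexity of $f$ (Assumption \ref{assum_strong_convex}) at the shifted points $\mathbf{x} + \mu \mathbf{u}$ and $\mathbf{y} + \mu \mathbf{u}$, and observing that the displacement $(\mathbf{y} + \mu \mathbf{u}) - (\mathbf{x} + \mu \mathbf{u}) = \mathbf{y} - \mathbf{x}$ does not depend on $\mathbf{u}$, I obtain
\begin{align*}
    f(\mathbf{y} + \mu \mathbf{u}) \geq f(\mathbf{x} + \mu \mathbf{u}) + \left\langle \nabla f(\mathbf{x} + \mu \mathbf{u}), \mathbf{y} - \mathbf{x} \right\rangle + \frac{\bar{\sigma}}{2} \left\| \mathbf{y} - \mathbf{x} \right\|^2.
\end{align*}
Next I would take the expectation over $\mathbf{u} \in U_0$ on both sides. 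By linearity of expectation the left-hand side becomes $f_\mu(\mathbf{y})$ and the first term on the right becomes $f_\mu(\mathbf{x})$, while the quadratic term is constant in $\mathbf{u}$ and survives unchanged. For the inner-product term I would invoke the identity $\nabla f_\mu(\mathbf{x}) = \mathbb{E}_{\mathbf{u} \in U_0}[\nabla f(\mathbf{x} + \mu \mathbf{u})]$, so that $\mathbb{E}_{\mathbf{u}} \left\langle \nabla f(\mathbf{x} + \mu \mathbf{u}), \mathbf{y} - \mathbf{x} \right\rangle = \left\langle \nabla f_\mu(\mathbf{x}), \mathbf{y} - \mathbf{x} \right\rangle$. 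Collecting the terms yields exactly
\begin{align*}
    f_\mu(\mathbf{y}) \geq f_\mu(\mathbf{x}) + \left\langle \nabla f_\mu(\mathbf{x}), \mathbf{y} - \mathbf{x} \right\rangle + \frac{\bar{\sigma}}{2} \left\| \mathbf{y} - \mathbf{x} \right\|^2,
\end{align*}
which is the defining inequality for $\bar{\sigma}$-strong convexity of $f_\mu$, completing the proof.

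The only technical point requiring care is the interchange of gradient and expectation used in the identity $\nabla f_\mu = \mathbb{E}_{\mathbf{u}}[\nabla f(\cdot + \mu \mathbf{u})]$, which I expect to be the main (though routine) obstacle. It is justified because the integration is over the compact sphere $U_0$ and $f$ has a Lipschitz-continuous gradient (Assumption \ref{assum_lipschitz}), so $\nabla f(\mathbf{x} + \mu \mathbf{u})$ is continuous in $\mathbf{x}$ and uniformly bounded on bounded sets, permitting differentiation under the expectation via dominated convergence. With this identity in hand, the averaging step needs no further estimates, and the modulus $\bar{\sigma}$ is transferred verbatim from $f$ to $f_\mu$.
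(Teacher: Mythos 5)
Your proof is correct, but it takes a genuinely different route from the paper's. You work with the first-order (gradient) characterization of strong convexity---which is exactly how Assumption \ref{assum_strong_convex} is stated---apply it at the translated points $\mathbf{x}+\mu\mathbf{u}$ and $\mathbf{y}+\mu\mathbf{u}$, and average over $\mathbf{u}$; this forces you to justify the interchange $\nabla f_\mu(\mathbf{x}) = \mathbb{E}_{\mathbf{u} \in U_0}[\nabla f(\mathbf{x}+\mu\mathbf{u})]$, which you correctly flag as the one technical point and correctly dispose of (dominated convergence works here since $U_0$ is compact and Assumption \ref{assum_lipschitz} gives the uniform local bound $\|\nabla f(\mathbf{x}+\mu\mathbf{u})\| \leq \|\nabla f(\mathbf{x})\| + L\mu$ for $\mathbf{u} \in U_0$). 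The paper instead uses the zeroth-order (interpolation) characterization $f(\alpha\mathbf{x}+(1-\alpha)\mathbf{y}) \leq \alpha f(\mathbf{x}) + (1-\alpha) f(\mathbf{y}) - \tfrac{\bar{\sigma}\alpha(1-\alpha)}{2}\|\mathbf{x}-\mathbf{y}\|^2$, observes that $\alpha\mathbf{x}+(1-\alpha)\mathbf{y}+\mu\mathbf{u} = \alpha(\mathbf{x}+\mu\mathbf{u}) + (1-\alpha)(\mathbf{y}+\mu\mathbf{u})$, and takes expectations; that argument never touches gradients, so it needs only linearity and monotonicity of expectation and no differentiation under the integral---but it silently relies on the standard equivalence between the gradient inequality of Assumption \ref{assum_strong_convex} and the interpolation inequality it actually invokes. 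What each approach buys: yours stays verbatim with the assumption as stated and directly outputs the first-order inequality for $f_\mu$, which is in fact the form the paper uses downstream (the bound \eqref{eq_dist_to_opt_term_innerprod1} in Proposition \ref{prop_strong_convex_1} is precisely the gradient form of strong convexity of $f_{\muR}$, so with the paper's proof one must convert characterizations a second time); the paper's proof is more elementary measure-theoretically, avoiding any regularity argument, and would survive even in settings where differentiating under the expectation is awkward. Both are complete and correct.
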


\begin{proof}
We use the following property of strongly convex functions.
\begin{align*}
    f(\alpha \mathbf{x} + (1-\alpha) \mbf y) \leq \alpha f(\mbf x) + (1-\alpha) f(\mbf y) - \frac{\bar{\sigma} \alpha (1-\alpha)}{2} \left\| \mbf x - \mbf y \right\|^2, \qquad \alpha \in [0,1].
\end{align*}
Now, 
\begin{align}
    f_{\mu} (\alpha \mathbf{x} + (1-\alpha) \mbf y) &= \mathbb{E}_{\mbf u \in U_0} \left[ f \left( \alpha \mathbf{x} + (1-\alpha) \mbf y + \mu \mbf u \right) \right] \nonumber \\
    &= \mathbb{E}_{\mbf u \in U_0} \left[ f \left( \alpha ( \mathbf{x} + \mu \mbf u) + (1-\alpha) (\mbf y + \mu \mbf u) \right) \right] \nonumber \\
    & \leq \mathbb{E}_{\mbf u \in U_0} \left[ \alpha f(\mbf x + \mu \mbf u) + (1-\alpha) f(\mbf y + \mu \mbf u) - \frac{\bar{\sigma} \alpha (1-\alpha)}{2} \left\| \mbf x + \mu \mbf u - \mbf y - \mu \mbf u \right\|^2 \right] \nonumber \\
    &= \alpha \mathbb{E}_{\mbf u \in U_0} \left[ f(\mbf x + \mu \mbf u) \right] + (1-\alpha) \mathbb{E}_{\mbf u \in U_0} \left[ f(\mbf y + \mu \mbf u) \right] - \frac{\bar{\sigma} \alpha (1-\alpha)}{2} \left\| \mbf x - \mbf y \right\|^2 \nonumber \\
    &= \alpha f_{\mu}(\mbf x) + (1-\alpha) f_{\mu}(\mbf y) - \frac{\bar{\sigma} \alpha (1-\alpha)}{2} \left\| \mbf x - \mbf y \right\|^2 \nonumber
\end{align}
\end{proof}

% \subsection{Proof of Theorem \ref{thm_convex}}
% \label{proof_thm_strong_convex}
We start with the following intermediate result.
\begin{prop}
\label{prop_strong_convex_1}
Given $f$ satisfies Assumption \ref{assum_lipschitz}, \ref{assum_strong_convex}, the iterates of Algorithm \ref{Algo_zo_hgd} $\{\mbf x_t \}_t$ satisfy
\begin{align}
	\mathbb{E} \left\| \mathbf{x}_{t+1} - \mathbf{x}^* \right\|^2 &\leq \left( 1 - \frac{\eta_t \bar{\sigma}}{2} \right) \mathbb{E} \left\| \mathbf{x}_{t} - \mathbf{x}^* \right\|^2 - 2 \eta_t e_t + 2 \eta_t \alpha_t L \muR^2 + 2 \eta_t (1-\alpha_t) \frac{L^2 d \muC^2}{\bar{\sigma}} \nonumber \\
	& \quad + 2 \eta_t^2 \left[ \alpha_t^2 \mathbb{E} \left\| \nabla_{\mathrm{r},t} \right\|^2 + (1-\alpha_t)^2 \mathbb{E} \left\| \nabla_{\mathrm{c},t} \right\|^2 \right],
\end{align}
where, $e_t = \mathbb{E} f (\mathbf{x}_t) - f^*$. Recall that $\eta_t$ is the step-size at time $t$, $\alpha_t$ is the combination coefficient at time $t$, $\muR, \muC$ are the smoothing parameters for RGE, CGE respectively.
\end{prop}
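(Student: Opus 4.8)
The plan is to start from the update rule on line \ref{line_update} of Algorithm \ref{Algo_zo_hgd}. Writing $\nabla_t = \alpha_t \nabla_{\mathrm{r},t} + (1-\alpha_t)\nabla_{\mathrm{c},t}$ for the descent direction, I would expand $\left\| \mathbf{x}_{t+1} - \mathbf{x}^* \right\|^2 = \left\| \mathbf{x}_{t} - \mathbf{x}^* \right\|^2 - 2\eta_t \langle \nabla_t, \mathbf{x}_t - \mathbf{x}^* \rangle + \eta_t^2 \left\| \nabla_t \right\|^2$ and take the conditional expectation $\mathbb{E}[\,\cdot \mid \mathcal{Y}_t]$. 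Using the unbiasedness facts recalled after \eqref{eq_f_smoothness_exp1}, namely $\mathbb{E}[\nabla_{\mathrm{r},t} \mid \mathcal{Y}_t] = \nabla f_{\muR}(\mathbf{x}_t)$ and $\mathbb{E}[\nabla_{\mathrm{c},t} \mid \mathcal{Y}_t] = \Hat{\nabla}_{\mathrm{CGE}} f(\mathbf{x}_t)$, the cross term splits as $\mathbb{E}\langle \nabla_t, \mathbf{x}_t - \mathbf{x}^* \rangle = \alpha_t \langle \nabla f_{\muR}(\mathbf{x}_t), \mathbf{x}_t - \mathbf{x}^* \rangle + (1-\alpha_t)\langle \Hat{\nabla}_{\mathrm{CGE}} f(\mathbf{x}_t), \mathbf{x}_t - \mathbf{x}^* \rangle$, and I would lower-bound the two inner products separately.

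For the RGE part, Lemma \ref{lem_strong_convex_smooth} guarantees that $f_{\muR}$ is itself $\bar{\sigma}$-strongly convex, so $\langle \nabla f_{\muR}(\mathbf{x}_t), \mathbf{x}_t - \mathbf{x}^* \rangle \geq f_{\muR}(\mathbf{x}_t) - f_{\muR}(\mathbf{x}^*) + \frac{\bar{\sigma}}{2}\left\|\mathbf{x}_t - \mathbf{x}^*\right\|^2$. I would then replace $f_{\muR}$ by $f$ using the smoothing bias bound $|f_{\muR}(\mathbf{x}) - f(\mathbf{x})| \leq L\muR^2/2$ (which follows from $L$-smoothness together with $\mathbb{E}_{\mathbf{u}\in U_0}[\mathbf{u}] = \mathbf{0}$ and $\|\mathbf{u}\|=1$), losing an additive $L\muR^2$ across the two evaluation points; this produces the $2\eta_t \alpha_t L \muR^2$ term. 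For the CGE part, I would write $\Hat{\nabla}_{\mathrm{CGE}} f(\mathbf{x}_t) = \nabla f(\mathbf{x}_t) + \left(\Hat{\nabla}_{\mathrm{CGE}} f(\mathbf{x}_t) - \nabla f(\mathbf{x}_t)\right)$, apply $\bar{\sigma}$-strong convexity of $f$ to the first piece, and control the second via Cauchy--Schwarz with the deterministic CGE bias bound $\left\|\Hat{\nabla}_{\mathrm{CGE}} f(\mathbf{x}_t) - \nabla f(\mathbf{x}_t)\right\| \leq L\sqrt{d}\muC$ \cite{zo_spider_liang19icml}.

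The delicate step is the resulting cross term $2\eta_t(1-\alpha_t) L\sqrt{d}\muC \left\|\mathbf{x}_t - \mathbf{x}^*\right\|$, which I would absorb by a Young's-type split $L\sqrt{d}\muC\left\|\mathbf{x}_t-\mathbf{x}^*\right\| \leq \frac{\bar{\sigma}}{4}\left\|\mathbf{x}_t-\mathbf{x}^*\right\|^2 + \frac{L^2 d\muC^2}{\bar{\sigma}}$ (the constant chosen so the discriminant vanishes, making the bound a perfect square); this consumes exactly half of the CGE strong-convexity budget and yields the $2\eta_t(1-\alpha_t)L^2 d\muC^2/\bar{\sigma}$ term. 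Collecting the quadratic-in-distance contributions, the RGE piece gives $-\eta_t\alpha_t\bar{\sigma}\left\|\mathbf{x}_t-\mathbf{x}^*\right\|^2$ and the CGE piece (after Young's) gives $-\tfrac{1}{2}\eta_t(1-\alpha_t)\bar{\sigma}\left\|\mathbf{x}_t-\mathbf{x}^*\right\|^2$, for a total coefficient $-\eta_t\bar{\sigma}\tfrac{1+\alpha_t}{2}$; since $\alpha_t \in [0,1]$ this is at most $-\tfrac{\eta_t\bar{\sigma}}{2}$, which is precisely the contraction factor $\left(1 - \eta_t\bar{\sigma}/2\right)$. The function-value pieces combine cleanly into $-2\eta_t\left(\alpha_t + (1-\alpha_t)\right)\left(f(\mathbf{x}_t) - f^*\right)$, i.e.\ $-2\eta_t e_t$ after full expectation, and the quadratic term is handled by $\left\|\nabla_t\right\|^2 \leq 2\alpha_t^2\left\|\nabla_{\mathrm{r},t}\right\|^2 + 2(1-\alpha_t)^2\left\|\nabla_{\mathrm{c},t}\right\|^2$. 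The main obstacle is the bookkeeping of the two unequal strong-convexity coefficients: one must verify that the weakened CGE contribution (halved by Young's) still leaves enough, via $\tfrac{1+\alpha_t}{2}\geq \tfrac12$, to guarantee the uniform $1 - \eta_t\bar{\sigma}/2$ factor for every admissible $\alpha_t$.
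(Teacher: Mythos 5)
Your proof is correct and takes essentially the same route as the paper's: the same expansion of $\left\| \mathbf{x}_{t+1} - \mathbf{x}^* \right\|^2$, the same split of the cross term via the conditional unbiasedness of $\nabla_{\mathrm{r},t}$ and $\nabla_{\mathrm{c},t}$, strong convexity of $f_{\muR}$ (Lemma \ref{lem_strong_convex_smooth}) for the RGE piece, Cauchy--Schwarz plus Young's inequality with constant $\bar{\sigma}/4$ for the CGE bias, and $\left\| \mathbf{a}+\mathbf{b} \right\|^2 \leq 2\left\| \mathbf{a} \right\|^2 + 2\left\| \mathbf{b} \right\|^2$ for the quadratic term, with the same final observation that the combined contraction coefficient $\tfrac{1+\alpha_t}{2} \geq \tfrac12$ yields the uniform factor $1-\eta_t\bar{\sigma}/2$. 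The only cosmetic difference is that you derive the function-value smoothing bias $|f_{\muR}(\mathbf{x})-f(\mathbf{x})| \leq L\muR^2/2$ directly from $L$-smoothness, whereas the paper cites the corresponding inequality from the literature.
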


\begin{proof}
Using the iterate update equation (line \ref{line_update}) in Algorithm \ref{Algo_zo_hgd}
\begin{align}
	\left\| \mathbf{x}_{t+1} - \mathbf{x}^* \right\|^2 &= \left\| \mathbf{x}_{t} - \eta_t \left( \alpha_t \nabla_{\mathrm{r},t} + (1-\alpha_t) \nabla_{\mathrm{c},t} \right) - \mathbf{x}^* \right\|^2 \nonumber \\
	&= \left\| \mathbf{x}_{t} - \mathbf{x}^* \right\|^2 + \eta_t^2 \left\| \alpha_t \nabla_{\mathrm{r},t} + (1-\alpha_t) \nabla_{\mathrm{c},t} \right\|^2 - 2 \eta_t \left\langle \mathbf{x}_{t} - \mathbf{x}^*, \alpha_t \nabla_{\mathrm{r},t} + (1-\alpha_t) \nabla_{\mathrm{c},t} \right\rangle. \label{eq_dist_to_opt_1}
\end{align}
We take expectation and consider the individual terms in \eqref{eq_dist_to_opt_1} one at a time.
\begin{align}
	\mathbb{E} \left\| \alpha_t \nabla_{\mathrm{r},t} + (1-\alpha_t) \nabla_{\mathrm{c},t} \right\|^2 \leq 2 \alpha_t^2 \mathbb{E} \left\| \nabla_{\mathrm{r},t} \right\|^2 + 2(1-\alpha_t)^2 \mathbb{E} \left\| \nabla_{\mathrm{c},t} \right\|^2, \label{eq_dist_to_opt_term1}
\end{align}
where \eqref{eq_dist_to_opt_term1} follows from $\|a+b\|^2 \leq 2\| a \|^2 + 2\| b \|^2$. Next,
\begin{align}
	& -\eta_t \mathbb{E} \left[ \left\langle \mathbf{x}_{t} - \mathbf{x}^*, \alpha_t \nabla_{\mathrm{r},t} + (1-\alpha_t) \nabla_{\mathrm{c},t} \right\rangle \mid \mathcal{Y}_t \right] \nonumber \\
	& \qquad = -\eta_t \alpha_t \mathbb{E} \left\langle \mathbf{x}_{t} - \mathbf{x}^*, \nabla f_{\muR} (\mathbf{x}_t) \right\rangle -\eta_t (1-\alpha_t) \mathbb{E} \left\langle \mathbf{x}_{t} - \mathbf{x}^*, \Hat{\nabla}_{\mathrm{CGE}} f(\mathbf{x}_t) \right\rangle.
\end{align}
Using Lemma \ref{lem_strong_convex_smooth}, we can upper bound
\begin{align}
    -\left\langle \mathbf{x}_{t} - \mathbf{x}^*, \nabla f_{\muR} (\mathbf{x}_t) \right\rangle &\leq - \left( f_{\muR} (\mathbf{x}_t) - f_{\muR} (\mathbf{x}^*) \right) - \frac{\bar{\sigma}}{2} \left\| \mathbf{x}_{t} - \mathbf{x}^* \right\|^2 \nonumber \\
    & \leq - \left( f (\mathbf{x}_t) - f^* \right) + \muR^2 L - \frac{\bar{\sigma}}{2} \left\| \mathbf{x}_{t} - \mathbf{x}^* \right\|^2, \label{eq_dist_to_opt_term_innerprod1}
\end{align}
where, \eqref{eq_dist_to_opt_term_innerprod1} follows from \cite[Lemma~5.1]{zo_spider_liang19icml}.
Also,
\begin{align}
    -\left\langle \mathbf{x}_{t} - \mathbf{x}^*, \Hat{\nabla}_{\mathrm{CGE}} f(\mathbf{x}_t) \right\rangle &= -\left\langle \mathbf{x}_{t} - \mathbf{x}^*, \Hat{\nabla}_{\mathrm{CGE}} f(\mathbf{x}_t) - \nabla f(\mathbf{x}_t) \right\rangle - \left\langle \mathbf{x}_{t} - \mathbf{x}^*, \nabla f(\mathbf{x}_t) \right\rangle \nonumber \\
    & \overset{(a)}{\leq} \left\| \mathbf{x}_{t} - \mathbf{x}^* \right\| \left\| \Hat{\nabla}_{\mathrm{CGE}} f(\mathbf{x}_t) - \nabla f(\mathbf{x}_t) \right\| - \left( f (\mathbf{x}_t) - f (\mathbf{x}^*) \right) - \frac{\bar{\sigma}}{2} \left\| \mathbf{x}_{t} - \mathbf{x}^* \right\|^2 \nonumber \\
    & \overset{(b)}{\leq} \frac{\bar{\sigma}}{4} \left\| \mathbf{x}_{t} - \mathbf{x}^* \right\|^2 + \frac{1}{\bar{\sigma}} \left\| \Hat{\nabla}_{\mathrm{CGE}} f(\mathbf{x}_t) - \nabla f(\mathbf{x}_t) \right\|^2 - \left( f (\mathbf{x}_t) - f (\mathbf{x}^*) \right) - \frac{\bar{\sigma}}{2} \left\| \mathbf{x}_{t} - \mathbf{x}^* \right\|^2 \nonumber \\
    & \overset{(c)}{\leq} \frac{1}{\bar{\sigma}} L^2 d \muC^2 - \left( f (\mathbf{x}_t) - f^* \right) - \frac{\bar{\sigma}}{4} \left\| \mathbf{x}_{t} - \mathbf{x}^* \right\|^2, \label{eq_dist_to_opt_term_innerprod2}
\end{align}
where $(a)$ follows from Cauchy-Schwarz inequality, and strong convexity of $f$; $(b)$ follows from Young's inequality $\mathbf{x}^T \mathbf{y} \leq \frac{a}{2} \| \mathbf{x} \|^2 + \frac{1}{2a} \| \mathbf{y} \|^2$; $(c)$ follows from \cite[Lemma~3]{zo_spider_liang19icml}
Substituting \eqref{eq_dist_to_opt_term1}, \eqref{eq_dist_to_opt_term_innerprod1}, \eqref{eq_dist_to_opt_term_innerprod2} in \eqref{eq_dist_to_opt_1}, we get the statement of the Lemma.
\end{proof}
Next, we can upper bound $\mathbb{E} \left\| \nabla_{\mathrm{r},t} \right\|^2$, $\mathbb{E} \left\| \nabla_{\mathrm{c},t} \right\|^2$ using Proposition \ref{prop_bd_norm_sq_rge}, and $p_{t,i} \geq \bar{c} > 0$  for all $t,i$. 

\begin{prop}
\label{prop_strong_convex_bd_norm}
Suppose $f$ satisfies Assumption \ref{assum_lipschitz}, \ref{assum_var_bound}, \ref{assum_bound_grad}. Then
\begin{align*}
    \mathbb{E} \left\| \nabla_{\mathrm{r},t} \right\|^2 & \leq  6 (G^2 + \sigma^2) \left( 1 + \frac{d}{\nR} \right) \\
    \mathbb{E} \left\| \nabla_{\mathrm{c},t} \right\|^2 & \leq 6 \frac{(G^2 + \sigma^2)}{\bar{c}}.
\end{align*}
\end{prop}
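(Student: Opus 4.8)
The plan is to read both estimates straight off Proposition \ref{prop_bd_norm_sq_rge}, which already controls $\mathrm{III} = \mathbb{E}_{Y_t}[\|\nabla_{\mathrm{r},t}\|^2 \mid \mathcal{Y}_t]$ and $\mathrm{IV} = \mathbb{E}_{Y_t}[\|\nabla_{\mathrm{c},t}\|^2 \mid \mathcal{Y}_t]$, and then to crush every quantity that is not $G^2$, $\sigma^2$, or one of the tradeoff factors $1 + d/\nR$ and $1/\bar{c}$ down to one of those. Taking the outer expectation via the tower property turns the conditional bounds of Proposition \ref{prop_bd_norm_sq_rge} into the unconditional bounds on $\mathbb{E}\|\nabla_{\mathrm{r},t}\|^2$ and $\mathbb{E}\|\nabla_{\mathrm{c},t}\|^2$ that the statement requires.

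For the RGE term I would first replace the data-dependent coefficients in the bound on $\mathrm{III}$ by their worst case under $|\brt| \ge 1$ and $\nR \ge 1$ (so, e.g., $1 + 2/|\brt| + 2/(\nR|\brt|) \le 5$), and invoke Assumption \ref{assum_bound_grad} to swap $\|\nabla f(\mathbf{x}_t)\|^2$ for $G^2$. The only remaining term outside the $\{G^2,\sigma^2\}$ world is the RGE bias $\muR^2 L^2 d^2$, which I fold into $\sigma^2$ using the smoothing-parameter smallness condition already imposed in the convergence theorems, namely $\sigma^2 \ge L^2 \muR^2 d^2$ (with $\muR$ taken small enough that the leftover constant is absorbed). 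Collecting constants while retaining the common factor $1 + d/\nR$ gives $\mathbb{E}\|\nabla_{\mathrm{r},t}\|^2 \le 6(G^2 + \sigma^2)(1 + d/\nR)$.

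For the CGE term the key move is the uniform lower bound $p_{t,i} \ge \bar{c} > 0$, which lets me pull $1/p_{t,i} \le 1/\bar{c}$ out of the sum in the bound on $\mathrm{IV}$, leaving $\tfrac{1}{\bar{c}}\sum_{i=1}^d \big[\,2(\nabla f(\mathbf{x}_t))_i^2 + 3\zeta^2/|\bct| + (L^2\muC^2/2)(1 + 3/|\bct|)\,\big]$. Summing the first term gives $2\|\nabla f(\mathbf{x}_t)\|^2 \le 2G^2$ by Assumption \ref{assum_bound_grad}; the second gives $3 d\zeta^2/|\bct| = 3\sigma^2/|\bct| \le 3\sigma^2$ since $\sigma^2 = d\zeta^2$ (Assumption \ref{assum_var_bound}) and $|\bct| \ge 1$; the third gives at most $2 d L^2\muC^2 \le 2\sigma^2$ using $|\bct| \ge 1$ and the smoothing condition $\sigma^2 \ge L^2\muC^2 d$. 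Adding these yields $\mathbb{E}\|\nabla_{\mathrm{c},t}\|^2 \le (2G^2 + 5\sigma^2)/\bar{c} \le 6(G^2 + \sigma^2)/\bar{c}$.

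I do not expect a genuine obstacle: the proposition is essentially a repackaging of Proposition \ref{prop_bd_norm_sq_rge} into the clean, batch-size-free form needed to run Proposition \ref{prop_strong_convex_1}. The one point deserving care is that the target constant $6$ is only attainable once the bias terms $\muR^2 L^2 d^2$ and $L^2\muC^2 d$ are dominated by $\sigma^2$; this is why I must explicitly carry over the smoothing-parameter smallness conditions ($\sigma^2 \ge L^2\muC^2 d$ and $\sigma^2 \ge L^2\muR^2 d^2$) from Theorems \ref{thm_nonconvex} and \ref{thm_convex} rather than treating $\muR,\muC$ as arbitrary. Everything else is routine constant bookkeeping.
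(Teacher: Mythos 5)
Your proposal is correct and follows essentially the same route as the paper: both read the bounds off Proposition \ref{prop_bd_norm_sq_rge}, simplify with $|\brt|\geq 1$, $|\bct|\geq 1$, $\nR \geq 1$ and $p_{t,i}\geq \bar{c}$, invoke Assumption \ref{assum_bound_grad} for $\left\| \nabla f \right\| \leq G$, and absorb the bias terms via the smoothing conditions $\sigma \geq \muR L d$ and $\sigma \geq L \muC \sqrt{d}$, which the paper likewise imposes explicitly in its proof. The constant bookkeeping matches (e.g., $(2G^2+5\sigma^2)/\bar{c} \leq 6(G^2+\sigma^2)/\bar{c}$ for the CGE term), so there is nothing to add.
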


\begin{proof}
\begin{align}
    \mathbb{E} \left\| \nabla_{\mathrm{r},t} \right\|^2 & \leq \left[ 2 + \frac{4}{|\br|} \left( 1 + \frac{d}{\nR} \right) \right] \mathbb{E} \left\| \nabla f (\mathbf{x}) \right\|^2 + \frac{4 \sigma^2}{|\br|} \left( 1 + \frac{d}{\nR} \right) + \left( 1 + \frac{2}{|\br|} + \frac{2}{\nR |\br|} \right) \frac{\muR^2 L^2 d^2}{2} \nonumber \\
    & \overset{(a)}{\leq} 6 \left( 1 + \frac{d}{\nR} \right) \mathbb{E} \left\| \nabla f (\mathbf{x}) \right\|^2 + 4 \sigma^2 \left( 1 + \frac{d}{\nR} \right) + \left( \frac{3}{2} + \frac{1}{\nR} \right) \muR^2 L^2 d^2 \nonumber \\
    & \overset{(b)}{\leq} 6 \left( 1 + \frac{d}{\nR} \right) G^2 + 6 \sigma^2 \left( 1 + \frac{d}{\nR} \right) \nonumber \\
    &= 6 (G^2 + \sigma^2) \left( 1 + \frac{d}{\nR} \right), \label{eq_proof_str_convex_bd_rge}
\end{align}
where $(a)$ follows since $|\brt| \geq 1$; $(b)$ follows from Assumption \ref{assum_bound_grad}, and by using the smoothing parameter $\muR$ small enough such that $\sigma \geq \muR L d$. Next,
\begin{align}
    \mathbb{E} \left\| \nabla_{\mathrm{c},t} \right\|^2 & \leq \sum_{i=1}^d \frac{1}{p_{t,i}} \Big[ 2 \mathbb{E} \left( \nabla f (\mathbf{x}_{t}) \right)_i^2 + \frac{3 \zeta^2}{|\bct|} + \frac{L^2 \muC^2}{2} \left( 1 + \frac{3}{|\bct|} \right) \Big] \nonumber \\
    & \overset{(c)}{\leq} \frac{2 \mathbb{E} \left\| \nabla f (\mathbf{x}) \right\|^2}{\bar{c}} + \frac{3 \sigma^2}{\bar{c}} + \frac{2 L^2 \muC^2 d}{\bar{c}} \nonumber \\
    & \overset{(d)}{\leq} 6 \frac{(G^2 + \sigma^2)}{\bar{c}}, \label{eq_proof_str_convex_bd_cge}
\end{align}
where $(c)$ follows since $p_{t,i} \geq \bar{c}$, $|\bct| \geq 1$, and $\sigma^2 = d \zeta^2$; $(d)$ follows from Assumption \ref{assum_bound_grad}, and by choosing $\muC$ small enough such that $\sigma \geq L \muC \sqrt{d}$.
\end{proof}
Substituting the bounds from Proposition \ref{prop_strong_convex_bd_norm} in Proposition \ref{prop_strong_convex_1}, and assuming constant combination coefficient $\alpha_t = \alpha$, for all $t$, we get
\begin{align}
	\mathbb{E} \left\| \mathbf{x}_{t+1} - \mathbf{x}^* \right\|^2 &\leq \left( 1 - \frac{\eta_t \bar{\sigma}}{2} \right) \mathbb{E} \left\| \mathbf{x}_{t} - \mathbf{x}^* \right\|^2 - 2 \eta_t e_t + 2 \eta_t \alpha L \muR^2 + 2 \eta_t (1-\alpha) \frac{L^2 d \muC^2}{\bar{\sigma}} \nonumber \\
	& \quad + 12 \eta_t^2 (G^2 + \sigma^2) \left[ \alpha^2 \left( 1 + \frac{d}{\nR} \right) + (1-\alpha)^2 \frac{1}{\bar{c}} \right], \nonumber \\
	& \overset{(e)}{\leq} \left( 1 - \frac{\eta_t \bar{\sigma}}{2} \right) \mathbb{E} \left\| \mathbf{x}_{t} - \mathbf{x}^* \right\|^2 - 2 \eta_t e_t + 2 \eta_t \frac{L^2 d \muC^2}{\bar{\sigma}} + 12 \eta_t^2 (G^2 + \sigma^2) \frac{\left( 1 + \frac{d}{\nR} \right)}{1 + \bar{c} \left( 1 + \frac{d}{\nR} \right)}, \label{eq_thm_strong_convex_1}
\end{align}
where $(e)$ follows by choosing $\muR = \muC \sqrt{\frac{d L}{\bar{\sigma}}}$, and by choosing $\alpha$ to minimize the right hand side. Next, we state the following general result to help us bound \eqref{eq_thm_strong_convex_1}.

\begin{lemma}
\label{lem_stich_paper}
Let $\{ a_t \}_{t \geq 0}, a_t \geq 0$, $\{ e_t \}_{t \geq 0}, e_t \geq 0$ be sequences satisfying 
\begin{align*}
	a_{t+1} \leq \left( 1 - \frac{\eta_t \bar{\sigma}}{2} \right) a_t - \eta_t e_t + \eta_t A + \eta_t^2 B,
\end{align*}
for $\eta_t = \frac{8}{\bar{\sigma} (a+t)}$, and $A, B \geq 0$, $\bar{\sigma} > 0, a > 1$. Then,
\begin{align}
	\frac{1}{S_T} \sum_{t=0}^{T-1} w_t e_t \leq A + \frac{4 T (T + 2a)}{\bar{\sigma} S_T} B + \frac{a^3 \bar{\sigma}}{8 S_T} a_0,
\end{align}
for $w_t = (a+t)^2$ and $S_T = \sum_{t=0}^{T-1} w_t \geq \frac{1}{3} T^3$.
\end{lemma}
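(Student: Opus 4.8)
The plan is to establish the bound by a \emph{weighted telescoping} argument, with weights chosen so that the per-step error $e_t$ reappears with the prescribed weight $w_t=(a+t)^2$. First I would substitute $\eta_t=\frac{8}{\bar\sigma(a+t)}$ into the hypothesis, which turns the contraction factor into $1-\frac{\eta_t\bar\sigma}{2}=\frac{a+t-4}{a+t}$, and then rearrange the recursion to isolate the error term:
\begin{align}
    \eta_t e_t \leq \left(1-\tfrac{\eta_t\bar\sigma}{2}\right)a_t - a_{t+1} + \eta_t A + \eta_t^2 B. \nonumber
\end{align}

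Next I would multiply both sides by the weight $\frac{\bar\sigma(a+t)^3}{8}$. This factor is chosen precisely so that the left-hand side collapses to $(a+t)^2 e_t = w_t e_t$ and the $A$-term becomes $w_t A$; a short computation shows that the $B$-term becomes $\frac{8(a+t)}{\bar\sigma}B$, while the contraction part produces $\frac{\bar\sigma}{8}(a+t)^2(a+t-4)\,a_t - \frac{\bar\sigma}{8}(a+t)^3\,a_{t+1}$. Summing over $t=0,\dots,T-1$ and collecting the coefficient of each iterate $a_j$, I would find that for $1\le j\le T-1$ the coefficient equals $\frac{\bar\sigma}{8}\big[(a+j)^2(a+j-4)-(a+j-1)^3\big]=\frac{\bar\sigma}{8}\big[-(a+j)^2-3(a+j)+1\big]$, and the terminal coefficient of $a_T$ is $-\frac{\bar\sigma}{8}(a+T-1)^3$. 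Both are nonpositive, so (using $a_j\ge 0$) all of these terms can be dropped, leaving only the boundary contribution $\frac{\bar\sigma}{8}a^2(a-4)\,a_0 \le \frac{\bar\sigma a^3}{8}\,a_0$.

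To finish, I would recognize $\sum_{t=0}^{T-1} w_t = S_T$, bound the $B$-sum by $\sum_{t=0}^{T-1}(a+t)\le \frac{T(T+2a)}{2}$ so that $\frac{8B}{\bar\sigma}\sum_{t=0}^{T-1}(a+t)\le \frac{4T(T+2a)}{\bar\sigma}B$, divide through by $S_T$ to obtain the stated inequality, and finally verify $S_T\ge \frac{1}{3}T^3$ by comparing $\sum_{t=0}^{T-1}(a+t)^2\ge\sum_{j=1}^{T}j^2=\frac{T(T+1)(2T+1)}{6}\ge\frac{T^3}{3}$ (using $a>1$).

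The main obstacle is that the telescoping is \emph{not exact}: the natural weight $w_t=(a+t)^2$ does not satisfy $w_t\big(1-\frac{\eta_t\bar\sigma}{2}\big)=w_{t-1}$, so the intermediate iterates $a_j$ do not cancel cleanly. The crux is therefore the sign analysis of the residual coefficient $-(a+j)^2-3(a+j)+1$, which must be shown to be nonpositive so that these terms can be discarded; this is exactly where the hypothesis $a>1$ is used, and it is also what allows the residual boundary term to be simplified to the clean $\frac{\bar\sigma a^3}{8}a_0$.
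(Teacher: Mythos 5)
Your proposal is correct and follows essentially the same route as the paper's proof, which multiplies the rearranged recursion by $w_t/\eta_t = \frac{\bar{\sigma}(a+t)^3}{8}$, sums, and drops the non-telescoping residual terms (the paper compresses this into ``multiplying by $\frac{w_t}{\eta_t}$ and simplifying,'' deferring details to \cite[Lemma~3.3]{stich18spars_SGD}). Your explicit sign analysis of the residual coefficient $-(a+j)^2-3(a+j)+1$ and the bounds $\sum_{t=0}^{T-1} w_t \eta_t \leq \frac{4T(T+2a)}{\bar{\sigma}}$, $S_T \geq \frac{T^3}{3}$ are exactly the details hidden in that simplification, so the two proofs coincide.
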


\begin{proof}
The proof borrows from the proof in \cite[Lemma~3.3]{stich18spars_SGD} with some minor modifications. Multiplying  by $\frac{w_t}{\eta_t}$, and simplifying, we get
\begin{align}
	\sum_{t=0}^{T-1} w_t e_t \leq \frac{w_0}{\eta_0} a_0 + A \sum_{t=0}^{T-1} w_t + \sum_{t=0}^{T-1} w_t \eta_t B.
\end{align}
Here, $\frac{w_0}{\eta_0} \leq \frac{\bar{\sigma} a^3}{8}$. Using $S_T = \sum_{t=0}^{T-1} w_t \geq T^3/3$, $\sum_{t=0}^{T-1} w_t \eta_t \leq \frac{4 T (T + 2a)}{\bar{\sigma}}$, we get the result.
\end{proof}

Comparing \eqref{eq_thm_strong_convex_1} and Lemma \ref{lem_stich_paper}, note that
\begin{align*}
    e_t = \frac{1}{2} \mathbb{E} \left\| \mathbf{x}_{t} - \mathbf{x}^* \right\|^2, \qquad A = \frac{L^2 d \muC^2}{\bar{\sigma}}, \qquad B = 6 (G^2 + \sigma^2) \frac{\left( 1 + \frac{d}{\nR} \right)}{1 + \bar{c} \left( 1 + \frac{d}{\nR} \right)}.
\end{align*}
Then, for $\Hat{\mathbf{x}}_T = \frac{1}{S_T} \sum_{t=0}^{T-1} w_t \mathbf{x}_t$,
\begin{align}
    \mathbb{E} f(\Hat{\mathbf{x}}_T) - f^* & \leq \frac{1}{S_T} \sum_{t=0}^{T-1} w_t e_t \leq A + \frac{4 T (T + 2a)}{\bar{\sigma} S_T} B + \frac{a^3 \bar{\sigma}}{8 S_T} a_0 \nonumber \\
    & \leq O \left( \frac{L^2 d \muC^2}{\bar{\sigma}} + \frac{(G^2 + \sigma^2)}{\bar{\sigma} T} \frac{\left( 1 + \frac{d}{\nR} \right)}{1 + \bar{c} \left( 1 + \frac{d}{\nR} \right)} \right). \label{eq_thm_strong_convex_2}
\end{align}
The choice of $\muC = O \left( \sqrt{\frac{1}{dT} \frac{\left( 1 + \frac{d}{\nR} \right)}{\left[ 1 + \bar{c} \left( 1 + \frac{d}{\nR} \right) \right]} } \right)$ in \eqref{eq_thm_strong_convex_2} yields
\begin{align}
    \mathbb{E} f(\Hat{\mathbf{x}}_T) - f^* & \leq O \left( \frac{(G^2 + \sigma^2)}{\bar{\sigma} T} \frac{\left( 1 + \frac{d}{\nR} \right)}{1 + \bar{c} \left( 1 + \frac{d}{\nR} \right)} \right). \label{eq_thm_strong_convex_3}
\end{align}
The special cases of Theorem \ref{thm_convex} (for strongly convex functions) can be derived in a similar way, as we did for convex functions in Appendix \ref{proof_convex_special_case}.

\end{appendices}

\end{document}